\documentclass[twoside,11pt]{article}
\usepackage{jmlr2e}
\usepackage{amsmath}
\usepackage{amsfonts,amsbsy}
\usepackage{xspace}
\usepackage{graphicx}
\usepackage[font=normalsize]{subfig}


\newtheorem{property}{Property}




\newcommand{\ontop}[2]{\genfrac{}{}{0pt}{}{#1}{#2}}

\DeclareMathOperator{\kl}{kl}
\DeclareMathOperator{\KL}{KL}

\def\inputspace{\mathcal{X}}
\def\outputspace{\mathcal{Y}}
\def\productspace{\mathcal{Z}}
\def\proba{\mathbb{P}}
\def\expectation{\mathbb{E}}
\def\realset{\mathcal{R}}
\def\naturalset{\mathcal{N}}
\def\integerset{\mathcal{Z}}
\def\union{\bigcup}
\def\indicator{\mathbb{I}}
\def\family{\mathcal{H}}
\def\risk{R}

\def\rank{\text{rank}}
\def\pac{\xspace{\sc Pac}}
\def\pefc{\text{\sc Pefc}}
\def\auc{\text{\sc Auc}\xspace}
\def\roc{\text{\sc Roc}\xspace}
\def\sign{\text{sign}}
\def\np{\text{\sc Np}}
\def\normal{\mathcal{N}}
\def\identity{I}
\def\iid{{\sc IID}\xspace}

\def\bfx{{\bf x}}
\def\bfz{{\bf z}}
\def\bfh{{\bf h}}
\def\bfZ{{\bf Z}}
\def\bfz{{\bf z}}
\def\bfD{{\bf D}}
\def\bfC{{\bf C}}
\def\bfP{{\bf P}}
\def\bfQ{{\bf Q}}
\def\bfS{{\bf S}}
\def\bfs{{\bf s}}

\def\bfY{{\bf Y}}

\def\bfy{{\bf y}}
\def\bfpi{\boldsymbol{\pi}}
\def\bfalpha{\boldsymbol{\alpha}}


\usepackage{exscale}
\makeatletter
\newenvironment{equationsize}[1]{%
  \skip@=\baselineskip 
  #1%
  \baselineskip=\skip@ 
  \equation
}{\endequation \ignorespacesafterend} 
\makeatother

\makeatletter
\newenvironment{equationsize*}[1]{%
  \skip@=\baselineskip 
  #1%
  \baselineskip=\skip@ 
  \equation
}{\nonumber\endequation \ignorespacesafterend} 
\makeatother

\makeatletter
 
\makeatother

\makeatletter
\newenvironment{alignsize*}[1]{%
  \skip@=\baselineskip 
  #1%
  \baselineskip=\skip@ 
  \start@align\@ne\st@rredtrue\m@ne
}{\endalign\ignorespacesafterend} 
\makeatother


\jmlrheading{}{}{}{}{}{Liva Ralaivola, Marie Szafranski and Guillaume Stempfel}
\ShortHeadings{Chromatic \pac-Bayes Bounds and Non-\iid Data}{Ralaivola,
  Szafranski, Stempfel}
\firstpageno{1}

\title{Chromatic PAC-Bayes Bounds for Non-IID Data:
  Applications to Ranking and Stationary $\beta$-Mixing Processes}

\author{\name Liva Ralaivola \email liva.ralaivola@lif.univ-mrs.fr\\
\name Marie Szafranski \email marie.szafranski@lif.univ-mrs.fr\\
\name Guillaume Stempfel \email guillaume.stempfel@lif.univ-mrs.fr\\
\addr Laboratoire d'Informatique Fondamentale de Marseille\\
\addr CNRS, Aix-Marseille Universit\'es\\
\addr 39, rue F. Joliot Curie, 13013 Marseille, France
}

\begin{document}

\editor{}
\maketitle
\begin{abstract} \pac-Bayes bounds are among the most accurate
  generalization bounds for classifiers learned from independently and
  identically distributed (\iid) data, and it
  is particularly so for margin classifiers: there have been recent
  contributions showing how practical these bounds can be either to
  perform model selection \citep{ambroladze07tighter} or even to directly guide the
  learning of linear classifiers \citep{germain09pacbayesian}. However, there are many
  practical situations where the training data show some dependencies and
  where the traditional \iid assumption does not hold. Stating
  generalization bounds for such frameworks is therefore of the utmost
  interest, both from theoretical and practical standpoints.  In this
  work, we propose the first --~to the best of our knowledge~--
  \pac-Bayes generalization bounds for classifiers trained on data
  exhibiting interdependencies. The approach undertaken to establish our
  results is based on the decomposition of a so-called dependency
  graph that encodes the dependencies within the data, in sets of
  independent data, thanks to graph {\em fractional covers}.  Our
  bounds are very general, since being able to find an upper bound on
  the fractional chromatic number of the dependency graph is
  sufficient to get new \pac-Bayes bounds for specific settings. 
  We show how our results can be used to derive bounds for 
  ranking statistics (such as \auc) and classifiers trained on data
  distributed according to a stationary 
  $\beta$-mixing process. In the way, we show how our approach
  seemlessly allows us to deal with U-processes. As a side note, we
  also provide a \pac-Bayes generalization bound for classifiers
  learned on data from stationary $\varphi$-mixing distributions.
\end{abstract}
\begin{keywords}
\pac-Bayes bounds, non \iid data, ranking, U-statistics, mixing processes.
\end{keywords}

\section{Introduction}
\label{sec:introduction}

\subsection{Background}
Recently, there has been much progress in the field of generalization
bounds for classifiers, the most noticeable of which are
Rademacher-complexity-based bounds
\citep{bartlett02rademacher,bartlett05local}, stability-based bounds
\citep{bousquet02stability} and \pac-Bayes bounds
\citep{mcallester99some}. \pac-Bayes bounds, introduced by
\cite{mcallester99some}, and refined in several occasions 
\citep{seeger02pac,langford05tutorial,audibert07combining}, are some of
the most appealing advances from the tightness and accuracy points of
view (an excellent monograph on the \pac-Bayesian framework is that of
\cite{catoni07pacbayesian}).  Among others, striking results have been
obtained concerning \pac-Bayes bounds for linear classifiers:
\cite{ambroladze07tighter} showed that \pac-Bayes bounds are a viable
route to do actual model selection; \cite{germain09pacbayesian}
recently proposed to learn linear classifiers by directly minimizing
the linear \pac-Bayes bound with conclusive results, while
\cite{langford02pac} showed that under some margin assumption,
the \pac-Bayes framework allows one to tightly bound not only the risk
of the stochastic Gibbs classsifier (see below) but also the risk of
the Bayes classifier. The variety of  (algorithmic, theoretical, practical) outcomes
that can be expected from original contributions in the \pac-Bayesian setting
explains and justifies the increasing interest it generates. 

\subsection{Contribution}
To the best of our knowledge, \pac-Bayes bounds have
essentially been derived for the setting where the training data are
{\em independently and identically distributed} (\iid).  Yet, being
able to learn from non-\iid data while having strong theoretical
guarantees on the generalization properties of the learned classifier
is an actual problem in a number of real world applications such as,
e.g., bipartite ranking (and more generally $k$-partite ranking) or classification from sequential data.
Here, we propose the first \pac-Bayes bounds for classifiers trained
on non-\iid data; they constitute a generalization of the \iid
\pac-Bayes bound and they are generic enough to provide a principled
way to establish generalization bounds for a number of non-\iid
settings.  To establish these bounds, we make use of simple tools from
probability theory, convexity properties of some functions, and we
exploit the notion of {\em fractional covers} of graphs
\citep{schreinerman97fractional}.  One way to get a high level view of
our contribution is the following: fractional covers allow us to cope
with the dependencies within the set of random variables at hand by providing
a strategy to make (large) subsets of independent random variables on
which the usual \iid \pac-Bayes bound is applied. Note that
we essentially provide bounds for the case of {\em identically and non-independently}
distributed data; the additional results that we give in the appendix generalizes
to {\em non-identically and non-independently} distributed data.

\subsection{Related Results}
We would like to mention that the idea of dealing with sums of
interdependent random variables by separating them into subsets of
independent variables to establish concentration inequalities dates
back to the work of \cite{hoeffding48class,hoeffding63probability} on
U-statistics. Explicity using the notion of (fractional) covers --~or
equivalently, colorings~-- of graphs to derive such concentration
inequalities has been proposed by \cite{pemmaraju01equitable} and \cite{janson04large} 
and later extended by
\cite{usunier06generalization} to deal with functions that are
different from the sum. Just as \cite{usunier06generalization}, who
used their concentration inequality to provide generalization bounds
based on the {\em fractional Rademacher complexity}, we take the
approach of decomposing a set of dependent random variables into
subsets of dependent random variables a step beyond establishing
concentration inequality to provide what we call {\em chromatic}
\pac-Bayes generalization bounds.

The genericity of our bounds is illustrated in several ways. It allows
us to derive generalization bounds on the ranking performance of
scoring/ranking functions using two different performance measures,
among which the {\em Area under the \roc curve} (\auc) .  These bounds
are directly related to the work of
\cite{agarwal05generalization}, \cite{agarwal09stability},
\cite{clemencon08ranking} and \cite{freund03efficient}. Even if our
bounds are obtained as simple specific instances of our generic \pac-Bayes
bounds, they exhibit interesting peculiarities. Compared with the
bound of \cite{agarwal05generalization} and \cite{freund03efficient},
our \auc bound depends in a less stronger way on the {\em skew}
(i.e. the imbalance between positive and negative data) of the
distribution; besides it does not rest on (rank-)shatter
coefficients/VC dimension that may sometimes be hard to assess
accurately; in addition, our bound directly applies to (kernel-based)
linear classifiers.  \cite{agarwal09stability} base their analysis of
ranking performances on algorithmic stability, and the qualitative
comparison of their bounds and ours is not straightforward because
stability arguments are somewhat different from the arguments used for
\pac-Bayes bounds (and other uniform bounds).  As already observed by
\cite{janson04large}, coloring provides a way to generalize large
deviation results based on U-statistics; this observation carries over
when generalization bounds are considered, which allows us to draw a
connection between the results we obtain and that of
\cite{clemencon08ranking}.

Another illustration of the genericity of our approach deals with
mixing processes. In particular, we show how our chromatic bounds can
be used to easily derive new generalization bounds for $\beta$-mixing
processes. Rademacher complexity based bounds for such type of
processes have recently been established by
\cite{mohri09rademacher}. To the best of our knowledge, it is the
first time that such a bound is given in the \pac-Bayes framework. The
striking feature is that it is done at a very low price: the
independent block method proposed by \cite{yu94rates} directly gives a
dependency graph whose chromatic number is straightforward to
compute. As we shall see, this suffices to instantiate our
chromatic bounds, which, after simple calculations, leads to
appropriate generalization bound.  For sake of completeness, we
also provide a \pac-Bayes bound for stationary $\varphi$-mixing processes;
it is based on a different approach and its presentation is postponed to the
appendix together with the tools that allows us to derive it.

\subsection{Organization of the Paper}
The paper is organized as follows. Section~\ref{sec:iidbound} recalls
the standard \iid \pac-Bayes bound. Section~\ref{sec:bounds} introduces the notion of
fractional covers and states the new chromatic \pac-Bayes
  bounds, which rely on the fractional chromatic number of the {\em
  dependency graph} of the data at hand. Section~\ref{sec:examples}
provides specific versions of our bounds for the case of \iid data, ranking
and stationary $\beta$-mixing processes, giving rise to original generalization bounds.
A \pac-Bayes bound for stationary $\varphi$-mixing based on arguments different from
the chromatic \pac-Bayes bound is provided, in the appendix.


\section{IID \pac-Bayes Bound}
\label{sec:iidbound}

We introduce notation that will hold from here on. We mainly consider the
problem of binary classification over the {\em input space}
$\inputspace$ and we denote the set of possible labels as
$\outputspace=\{-1,+1\}$ (for the case of ranking described in section~\ref{sec:examples}, we
we use $\outputspace=\realset$); $\productspace$ denotes the product space
$\inputspace\times\outputspace$.
$\family\subseteq\realset^{\inputspace}$ is a family of
real valued classifiers defined on $\inputspace$: for $h\in\family$,
the predicted output of $x\in\inputspace$ is given by $\sign(h(x))$, where $\sign(x)=+1$ if $x\geq 0$ and $-1$ otherwise. $D$ is a probability
distribution defined over $\productspace$ and $\bfD_m$ denotes the
distribution of an $m$-sample; for instance,
$\bfD_m=\otimes_{i=1}^mD=D^m$ is the distribution of an \iid sample
$\bfZ=\{Z_i\}_{i=1}^m$ of size $m$ ($Z_i\sim D$, $i=1\ldots m$). $P$
and $Q$ are distributions over $\family$. For any positive integer
$m$, $[m]$ stands for $\{1,\ldots,m\}$.

The \iid \pac-Bayes bound, can be stated as follows
\citep{mcallester03simplified,seeger02pac,langford05tutorial}.
\begin{theorem}[IID \pac-Bayes Bound] \label{th:pbiid}
  $\forall D$, $\forall\family$, $\forall\delta\in(0,1]$, $\forall P$,
  with probability at least $1-\delta$ over the random draw of
  $\bfZ\sim\bfD_m=D^m$, the following holds:
  \begin{equationsize}{\normalsize} \forall Q,\;
    \kl(\hat{e}_Q(\bfZ)||e_Q)\leq\frac{1}{m}\left[\KL(Q||P)+\ln\frac{m+1}{\delta}\right].\label{eq:pbiid}
  \end{equationsize}
\end{theorem}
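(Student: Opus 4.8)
The plan is to follow the classical route: a change-of-measure inequality, a Markov-type bound on an exponential moment of the prior, and the joint convexity of the binary relative entropy $\kl$. Write $\risk(h)=\proba_{(X,Y)\sim D}[\sign(h(X))\neq Y]$ for the true risk of $h\in\family$ and $\hat{\risk}(h,\bfZ)=\frac{1}{m}\sum_{i=1}^m\indicator[\sign(h(X_i))\neq Y_i]$ for its empirical counterpart on $\bfZ$, so that $e_Q=\expectation_{h\sim Q}\risk(h)$ and $\hat{e}_Q(\bfZ)=\expectation_{h\sim Q}\hat{\risk}(h,\bfZ)$.

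First I would record the change-of-measure lemma (the Donsker--Varadhan variational formula for $\KL$): for any measurable $\phi:\family\to\realset$ and any $P,Q$ over $\family$, $\expectation_{h\sim Q}\phi(h)\le\KL(Q\|P)+\ln\expectation_{h\sim P}e^{\phi(h)}$. This follows by letting $P'$ be the Gibbs tilt $dP'/dP=e^{\phi}/\expectation_{h\sim P}e^{\phi(h)}$ and using $\KL(Q\|P')\ge 0$, i.e. $\KL(Q\|P)-\expectation_{h\sim Q}\phi(h)+\ln\expectation_{h\sim P}e^{\phi(h)}\ge 0$. Second comes the key exponential-moment estimate, which is the only genuinely computational step: for a \emph{fixed} $h$, $m\hat{\risk}(h,\bfZ)$ is a sum of $m$ i.i.d.\ Bernoulli$(\risk(h))$ variables, and a standard estimate on the binomial distribution gives $\expectation_{\bfZ\sim D^m}\exp(m\,\kl(\hat{\risk}(h,\bfZ)\|\risk(h)))\le m+1$ (this is exactly where the $m+1$, hence the $\ln\frac{m+1}{\delta}$ term, enters; one could instead invoke Maurer's slightly sharper $2\sqrt{m}$ bound). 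Since this holds for every $h$, Fubini gives $\expectation_{\bfZ}\expectation_{h\sim P}\exp(m\,\kl(\hat{\risk}(h,\bfZ)\|\risk(h)))\le m+1$, and Markov's inequality then yields that, with probability at least $1-\delta$ over $\bfZ$, $\expectation_{h\sim P}\exp(m\,\kl(\hat{\risk}(h,\bfZ)\|\risk(h)))\le\frac{m+1}{\delta}$.

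Finally, on that high-probability event I would apply the change-of-measure lemma with the choice $\phi(h)=m\,\kl(\hat{\risk}(h,\bfZ)\|\risk(h))$, obtaining simultaneously for every $Q$ that $m\,\expectation_{h\sim Q}\kl(\hat{\risk}(h,\bfZ)\|\risk(h))\le\KL(Q\|P)+\ln\frac{m+1}{\delta}$. To conclude, I would lower-bound the left-hand side via the joint convexity of $(p,q)\mapsto\kl(p\|q)$ and Jensen's inequality: $\kl(\hat{e}_Q(\bfZ)\|e_Q)=\kl(\expectation_{h\sim Q}\hat{\risk}(h,\bfZ)\,\|\,\expectation_{h\sim Q}\risk(h))\le\expectation_{h\sim Q}\kl(\hat{\risk}(h,\bfZ)\|\risk(h))$. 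Chaining these inequalities yields \eqref{eq:pbiid}.

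The main technical obstacle is the exponential-moment bound $\expectation_{\bfZ}e^{m\,\kl(\hat{\risk}(h,\bfZ)\|\risk(h))}\le m+1$; the other ingredients (the variational formula, Markov's inequality, Jensen) are soft. One should also be careful about the order of quantifiers, which is the whole point of the statement: $P$ must be fixed before $\bfZ$ is drawn (it appears inside the exponential-moment bound over the prior), whereas $Q$ is allowed to depend on $\bfZ$ — this is legitimate precisely because, once we are on the event controlling $\expectation_{h\sim P}e^{\phi(h)}$, the change-of-measure step holds deterministically and uniformly over all $Q$, so the "$\forall Q$" comes for free.
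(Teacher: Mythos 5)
Your proposal is correct and follows essentially the same route as the paper: the exponential-moment bound $\expectation_{\bfZ\sim D^m}e^{m\kl(\hat{\risk}(h,\bfZ)||\risk(h))}\leq m+1$ is the paper's Lemma~\ref{lem:entropy}, the Markov step and the change of measure via $\expectation_{h\sim P}f(h)=\expectation_{h\sim Q}\frac{P(h)}{Q(h)}f(h)$ with Jensen's inequality are exactly Lemmas~\ref{lem:l1} and~\ref{lem:l2} specialized to the trivial cover, and the final joint-convexity step is Lemma~\ref{lem:l3} via Lemma~\ref{lem:kl}. Your remark on the order of quantifiers (prior fixed before $\bfZ$, posterior allowed to depend on $\bfZ$) is also the correct reading of the statement.
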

This theorem provides a
generalization error bound for the {\em Gibbs classifier} $g_Q$: given
a distribution $Q$, this stochastic classifier predicts a class for
$\bfx\in\inputspace$ by first drawing a hypothesis $h$ according to
$Q$ and then outputting $\sign(h(\bfx))$. Here, $\hat{e}_Q$ is the empirical
error of $g_Q$ on an \iid sample $\bfZ$ of size $m$ and $e_Q$ is its
true error: \begin{equationsize}{\normalsize}
  \begin{array}{ll}\displaystyle
    \hat{e}_Q(\bfZ):=\expectation_{h\sim Q}\frac{1}{m}\sum_{i=1}^mr(h,Z_i)=\expectation_{h\sim Q}\hat{\risk}(h,\bfZ)&\text{with } \hat{\risk}(h,\bfZ):=\frac{1}{m}\sum_{i=1}^mr(h,Z_i)\\
    e_Q:=\expectation_{\bfZ\sim\bfD_m}\hat{e}_Q(\bfZ)=\expectation_{h\sim Q}\risk(h) &\text{with } \risk(h):=\expectation_{\bfZ\sim \bfD_m}\hat{\risk}(h,\bfZ),
\end{array}
\label{eq:gibbs}
\end{equationsize}
where, for $Z=(X,Y)$, $$r(h,Z):=\indicator_{Yh(X)<0}.$$ 
Note that we will use this binary 0-1 risk function $r$ throughout the paper and that a generalization of our results to bounded real-valued risk functions is given in appendix. Since $\bfZ$ is an (independently) identically distributed sample, we have
\begin{equation}
\label{eq:iidrisk}
\risk(h)=\expectation_{\bfZ\sim \bfD_m}\hat{\risk}(h,\bfZ)=\expectation_{Z\sim D}r(h,Z).
\end{equation}
 For $p,q\in[0,1]$, $\kl(q||p)$ is the Kullback-Leibler
divergence between the Bernoulli distributions with probabilities of success $q$ and $p$, and $\KL(Q||P)$
is the Kullback-Leibler divergence between $Q$ and $P$:
\begin{align*}
\kl(q||p)&:=q\ln\frac{q}{p}+(1-q)\ln\frac{1-q}{1-p}\\
\KL(Q||P)&:=\expectation_{h\sim Q}\ln\frac{Q(h)}{P(h)},
\end{align*}
where $\kl(0||0)=\kl(1||1)=0$. All along, we assume that the posteriors are absolutely continuous
with respect to their corresponding priors.

It is straightforward to see that the mapping $\kl_q:t\mapsto \kl(q||q+t)$ is strictly increasing for $t\in[0,1-q)$ and therefore defines a bijection from $[0,1-q)$ to $\realset_+$: we denote by  $\kl_q^{-1}$ its inverse. Then, as pointed out by \cite{seeger02pac}, 
the function $\kl^{-1}:(q,\varepsilon)\mapsto\kl^{-1}(q,\varepsilon)=\kl_q^{-1}(\varepsilon)$ is well-defined over $[0,1)\times \realset^+$, and, by definition: $$t\geq\kl^{-1}(q,\varepsilon)\Leftrightarrow \kl(q||q+t)\geq\varepsilon.$$ This makes it possible to rewrite bound \eqref{eq:pbiid} in a more `usual' form:
\begin{equationsize}{\normalsize} \forall Q,\;
 e_Q\leq\hat{e}_Q(\bfZ)+\kl^{-1}\left(\hat{e}_Q(\bfZ),\frac{1}{m}\left[\KL(Q||P)+\ln\frac{m+1}{\delta}\right]\right).\label{eq:pbiidinvkl}
\end{equationsize}

We observe that even if bounds \eqref{eq:pbiid} and \eqref{eq:pbiidinvkl} apply to the risk $e_Q$ of
the stochastic classifier $g_Q$, a straightforward argument gives
that, if $b_Q$ is the (deterministic) Bayes classifier such that
$b_Q(x)=\sign(\expectation_{h\sim Q}h(x))$, then
$R(b_Q)=\expectation_{Z\sim D}r(b_Q,Z)\leq 2e_Q$ (see for instance
\citep{herbrich01pacbayesian}). \cite{langford02pac} show that under
some margin assumption, $R(b_Q)$ can be bounded even more tightly.


\section{Chromatic \pac-Bayes Bounds}
\label{sec:bounds}

The problem we focus on is that of generalizing Theorem~\ref{th:pbiid}
to the situation where there may exist probabilistic dependencies
between the elements $Z_i$ of $\bfZ=\{Z_i\}_{i=1}^m$ while the
marginal distributions of the $Z_i$'s are identical. As announced
before, we provide \pac-Bayes bounds for classifiers trained on
identically but not independently distributed data. These results rely
on properties of a dependency graph that is built according to the
dependencies within $\bfZ$. Before stating our new bounds, we thus
introduce the concepts of graph theory that will play a role in their
statements.

\subsection{Dependency Graph, Fractional Covers }
\begin{definition}[Dependency Graph]
Let ${\bf Z}=\{Z_i\}_{i=1}^m$ be a set of $m$ random variables taking values in some space $\productspace$.
The {\em dependency graph} $\Gamma({\bf Z})=(V,E)$ of ${\bf Z}$ is such that: 
\begin{itemize}
\item the set of vertices $V$ of $\Gamma({\bf Z})$ is $V=[m]$;
\item $(i,j)\not\in E$ (there is no edge between $i$ and $j$)
$\Leftrightarrow$ $Z_i$ and $Z_j$ are independent.
\end{itemize}
\end{definition}
\begin{definition}[Fractional Covers, \cite{schreinerman97fractional}]
\label{def:fc}
Let $\Gamma=(V,E)$ be an undirected graph, with $V=[m]$.
\begin{itemize}
\item $C\subseteq V$ is {\em independent} if the vertices in $C$ are independent (no two vertices in $C$ are connected).
\item ${\bf C}=\{C_j\}_{j=1}^{n}$, with $C_j\subseteq V$, is a {\em proper cover} of $V$ if each $C_j$ is independent and $\union_{j=1}^{n}C_j=V$. It is {\em exact} if $\bfC$ is a partition of $V$. The size of ${\bf C}$ is $n$.
\item ${\bf C}=\{(C_j,\omega_j)\}_{j=1}^{n}$, with $C_j\subseteq V$ and $\omega_j\in[0,1]$, is a {\em proper exact fractional cover} of $V$ if each $C_j$ is independent and   $\forall i\in V$, $\sum_{j=1}^{n}\omega_j\indicator_{i\in C_j}=1$; $\omega(\bfC)=\sum_{j=1}^n\omega_i$ is the {\em chromatic weight} of $\bfC$.
\item The (fractional) chromatic number $\chi(\Gamma)$ ($\chi^*(\Gamma)$) is the minimum size (chromatic weight) over all proper exact (fractional) covers of $\Gamma$
\end{itemize}
\end{definition}
A cover is a fractional cover such that all the weights $\omega_i$ are
equal to $1$ (and all the results we state for fractional
covers apply to the case of covers). If $n$ is the size of a cover, it
means that the nodes of the graph at hand can be colored with $n$
colors in a way such that no two adjacent nodes receive the same
color.

The problem of computing the (fractional) chromatic number of a graph
is \np-hard \citep{schreinerman97fractional}. However, for some particular graphs as those that come from the
settings we study in Section~\ref{sec:examples}, this number can be
evaluated precisely. If it cannot be evaluated, it can be upper bounded using the following property.
\begin{property}[\cite{schreinerman97fractional}]
\label{prop:fc}
Let $\Gamma=(V,E)$ be a graph. Let $c(\Gamma)$ be the {\em clique
  number} of $\Gamma$, i.e. the order of the largest clique in
$\Gamma$. Let $\Delta(\Gamma)$ be the maximum degree of a vertex in
$\Gamma$. We have the following inequalities:
$$1\leq c(\Gamma)\leq\chi^*(\Gamma)\leq\chi(\Gamma)\leq\Delta(\Gamma)+1.$$
In addition,
$1=c(\Gamma)=\chi^*(\Gamma)=\chi(\Gamma)=\Delta(\Gamma)+1$ {\em if and
  only if} $\Gamma$ is totally disconnected. \end{property}

If ${\bf Z}=\{Z_i\}_{i=1}^m$ is a set of random variables over $\productspace$ then 
a (fractional) proper cover of $\Gamma({\bf Z})$, splits {\bf Z} into subsets
of independent random variables. This is a crucial feature to establish our results. In addition, we can see $\chi^*(\Gamma(\bfZ))$ and $\chi(\Gamma(\bfZ))$
as measures of the amount of dependencies within $\bfZ$.

The following lemma (Lemma 3.1 in \citep{janson04large}) will be very useful in the following.
\begin{lemma}
\label{lem:efc}
If ${\bf C}=\{(C_j,\omega_j)\}_{j=1}^{n}$ is an exact fractional cover
of $\Gamma=(V,E)$, with $V=[m]$, then
$$\forall{\bf t}\in\realset^{m},\;\sum_{i=1}^mt_i=\sum_{j=1}^n\omega_j\sum_{k\in C_j}t_k.$$
In particular, $m=\sum_{j=1}^n\omega_j|C_j|$. \end{lemma}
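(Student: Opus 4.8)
The plan is to prove the identity by a straightforward interchange of the order of summation, the only substantive ingredient being the defining property of an \emph{exact} fractional cover. First I would rewrite the inner sum on the right-hand side as a sum over all of $[m]$ by inserting indicators: for each $j$, $\sum_{k\in C_j}t_k=\sum_{i=1}^m\indicator_{i\in C_j}t_i$. Substituting this into $\sum_{j=1}^n\omega_j\sum_{k\in C_j}t_k$ and swapping the two finite sums yields $\sum_{i=1}^m t_i\bigl(\sum_{j=1}^n\omega_j\indicator_{i\in C_j}\bigr)$.

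Next I would invoke the definition of a proper exact fractional cover (Definition~\ref{def:fc}), which guarantees that $\sum_{j=1}^n\omega_j\indicator_{i\in C_j}=1$ for every $i\in V=[m]$. Plugging this in collapses the parenthesised coefficient of each $t_i$ to $1$, so the expression equals $\sum_{i=1}^m t_i$, which is exactly the claimed equality. The ``in particular'' statement then follows immediately by taking ${\bf t}=(1,\dots,1)$: the left-hand side becomes $m$ and, since $\sum_{k\in C_j}1=|C_j|$, the right-hand side becomes $\sum_{j=1}^n\omega_j|C_j|$.

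I do not expect any real obstacle here: all sums are finite, so no convergence considerations arise, and the argument is purely a rearrangement. The one point to keep in mind is that it is the \emph{exactness} of the cover that is used --~the equality $\sum_j\omega_j\indicator_{i\in C_j}=1$ rather than the mere inequality $\geq 1$ that would hold for an arbitrary (non-exact) fractional cover~-- since an inequality would only give $\sum_i t_i\le\sum_j\omega_j\sum_{k\in C_j}t_k$ for nonnegative ${\bf t}$, not the asserted identity valid for all ${\bf t}\in\realset^m$.
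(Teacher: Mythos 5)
Your proof is correct and is exactly the standard argument: the paper does not reproduce a proof but defers to Lemma~3.1 of \cite{janson04large}, whose justification is the same interchange of summation combined with the exactness condition $\sum_{j}\omega_j\indicator_{i\in C_j}=1$. Your closing remark correctly identifies exactness as the essential hypothesis.
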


\subsection{Chromatic \pac-Bayes Bounds}
We now provide new \pac-Bayes bounds for classifiers trained on
samples $\bfZ$ drawn from distributions $\bfD_m$ where dependencies
exist. We assume these dependencies are fully determined by $\bfD_m$
and we define the dependency graph $\Gamma(\bfD_m)$ of $\bfD_m$ to be
$\Gamma(\bfD_m)=\Gamma(\bfZ)$. As said before, the marginal
distributions of $\bfD_m$ along each coordinate are the same and are
equal to some distribution $D$.

We introduce additional notation.
 $\pefc(\bfD_m)$ is the set of proper exact fractional covers
of $\Gamma(\bfD_m)$. Given a cover $\bfC=\{(C_j,\omega_j)\}_{j=1}^n\in\pefc(\bfD_m)$, we use the following notation:
\begin{itemize}
\item  $\bfZ^{(j)}=\{Z_k\}_{k\in C_j}$;
\item $\bfD_m^{(j)}$, the distribution of $\bfZ^{(j)}$: it is equal to $D^{|C_j|}=\otimes_{i=1}^{|C_j|}D$ ($C_j$ is independent);
\item $\bfalpha=(\alpha_j)_{1\leq j\leq n}$ with $\alpha_j=\omega_j/\omega(\bfC)$: we have $\alpha_j\geq 0$ and $\sum_j\alpha_j=1$;
\item $\bfpi=(\pi_j)_{1\leq j\leq n}$, with $\pi_j=\omega_j|C_j|/m$: we have $\pi_j\geq 0$ and $\sum_j\pi_j=1$ (cf. Lemma~\ref{lem:efc}).
\end{itemize}
In addition, $\bfP_n$ and $\bfQ_n$ denote distributions over
$\family^n$, $P_n^j$ and $Q_n^j$ are the marginal distributions of
$\bfP_n$ and $\bfQ_n$ with respect to the $j$th coordinate,
respectively.

We can now state our main results. 
\begin{theorem}[Chromatic
  \pac-Bayes Bound (I)]\label{th:cpbb1}
$\forall \bfD_m$, $\forall\family$,
  $\forall\delta\in(0,1]$, $\forall
  \bfC=\{(C_j,\omega_j)\}_{j=1}^n\in\pefc(\bfD_m)$, $\forall \bfP_n$,
  with probability at least $1-\delta$ over the random draw of
  $\bfZ\sim\bfD_m$, the following holds:
  \begin{equationsize}{\normalsize}
    \forall\bfQ_n,\;\kl(\bar{e}_{\bfQ_n}(\bfZ)||e_{\bfQ_n})\leq\frac{\omega}{m}\left[\sum_{j=1}^n\alpha_j\KL(Q_n^j||P_n^j)+\ln\frac{m+\omega}{\delta
        \omega}\right],\label{eq:cpbb1}
  \end{equationsize}
  where $\omega$ stands for $\omega(\bfC)$, and
\begin{align*}
\bar{e}_{\bfQ_n}(\bfZ)&:=\sum_{j=1}^{n}\pi_j\expectation_{h\sim  Q_n^j}\hat{\risk}(h,\bfZ^{(j)}),\\
  e_{\bfQ_n}&:=\expectation_{\bfZ\sim\bfD_m}\bar{e}_{\bfQ_n}(\bfZ).
\end{align*}
\end{theorem}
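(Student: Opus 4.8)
The plan is to reduce the dependent case to the \iid case by exploiting the fractional cover: each subset $\bfZ^{(j)}=\{Z_k\}_{k\in C_j}$ is a genuine \iid sample of size $|C_j|$ drawn from $\bfD_m^{(j)}=D^{|C_j|}$, so Theorem~\ref{th:pbiid} applies verbatim to it. Concretely, fix the cover $\bfC$ and the prior $\bfP_n$. For each $j$, applying the \iid \pac-Bayes bound to the sample $\bfZ^{(j)}$ with prior $P_n^j$ and confidence parameter $\delta_j$ (to be chosen) gives: with probability at least $1-\delta_j$ over $\bfZ^{(j)}\sim D^{|C_j|}$,
\[
\forall Q_n^j,\;\kl\bigl(\hat e_{Q_n^j}(\bfZ^{(j)})\,\|\,e_{Q_n^j}\bigr)\leq\frac{1}{|C_j|}\left[\KL(Q_n^j\|P_n^j)+\ln\frac{|C_j|+1}{\delta_j}\right],
\]
where $\hat e_{Q_n^j}(\bfZ^{(j)})=\expectation_{h\sim Q_n^j}\hat\risk(h,\bfZ^{(j)})$ and $e_{Q_n^j}$ is its expectation. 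A union bound over $j=1,\dots,n$, with $\sum_j\delta_j=\delta$, makes all $n$ inequalities hold simultaneously with probability at least $1-\delta$. A natural choice is $\delta_j=\delta\,\omega_j/\omega$ (so that $\sum_j\delta_j=\delta$), which will make the $\ln$ terms assemble cleanly.

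Next I would combine these $n$ inequalities into a single statement about $\bar e_{\bfQ_n}(\bfZ)=\sum_j\pi_j\,\hat e_{Q_n^j}(\bfZ^{(j)})$ and $e_{\bfQ_n}=\sum_j\pi_j\,e_{Q_n^j}$ (using $\expectation_{\bfZ}\hat e_{Q_n^j}(\bfZ^{(j)})=e_{Q_n^j}$ since the marginals are all $D$). The key tool here is convexity of $\kl$: the map $(q,p)\mapsto\kl(q\|p)$ is jointly convex, so by Jensen's inequality
\[
\kl\bigl(\bar e_{\bfQ_n}(\bfZ)\,\|\,e_{\bfQ_n}\bigr)=\kl\!\left(\sum_j\pi_j\hat e_{Q_n^j}(\bfZ^{(j)})\;\Big\|\;\sum_j\pi_j e_{Q_n^j}\right)\leq\sum_j\pi_j\,\kl\bigl(\hat e_{Q_n^j}(\bfZ^{(j)})\,\|\,e_{Q_n^j}\bigr).
\]
Now bound each summand by the per-block bound above, giving
\[
\kl\bigl(\bar e_{\bfQ_n}(\bfZ)\,\|\,e_{\bfQ_n}\bigr)\leq\sum_j\frac{\pi_j}{|C_j|}\left[\KL(Q_n^j\|P_n^j)+\ln\frac{|C_j|+1}{\delta_j}\right].
\]
Since $\pi_j=\omega_j|C_j|/m$, we have $\pi_j/|C_j|=\omega_j/m$, so $\sum_j(\pi_j/|C_j|)\KL(Q_n^j\|P_n^j)=\frac{\omega}{m}\sum_j\alpha_j\KL(Q_n^j\|P_n^j)$ using $\alpha_j=\omega_j/\omega$. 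For the logarithmic term, $\sum_j(\omega_j/m)\ln\frac{|C_j|+1}{\delta_j}=\frac{\omega}{m}\sum_j\alpha_j\ln\frac{|C_j|+1}{\delta_j}$; plugging $\delta_j=\delta\omega_j/\omega$ and using concavity of $\ln$ (Jensen again, over the weights $\alpha_j$) one gets $\sum_j\alpha_j\ln\frac{|C_j|+1}{\delta_j}\leq\ln\sum_j\alpha_j\frac{|C_j|+1}{\delta_j}$; a short computation with $\sum_j\omega_j|C_j|=m$ (Lemma~\ref{lem:efc}) and $\sum_j\omega_j=\omega$ collapses this to $\ln\frac{m+\omega}{\delta\omega}$, yielding exactly \eqref{eq:cpbb1}.

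The main obstacle is making sure the bookkeeping of the logarithmic term produces precisely $\ln\frac{m+\omega}{\delta\omega}$ rather than a messier expression; this is where the specific choices $\delta_j=\delta\omega_j/\omega$ and the identities $\sum_j\omega_j|C_j|=m$, $\sum_j\omega_j=\omega$ must be used in concert, together with the concavity of $\ln$. A secondary point to verify carefully is that $\bar e_{\bfQ_n}(\bfZ)$ and $e_{\bfQ_n}$ as defined in the theorem really are the $\pi$-weighted averages of the per-block quantities to which Jensen is being applied — this is immediate from the definitions of $\pi_j$ and $\hat\risk$, but it is the linchpin that lets the convexity argument go through. Everything else is a direct invocation of Theorem~\ref{th:pbiid} on each independent block plus a union bound.
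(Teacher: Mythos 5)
There is a genuine gap in the handling of the confidence parameters, and it is not just bookkeeping: the union-bound route cannot produce the constant $\ln\frac{m+\omega}{\delta\omega}$. After the per-block application of Theorem~\ref{th:pbiid} and the union bound with $\sum_j\delta_j=\delta$, your logarithmic term is $\frac{\omega}{m}\sum_j\alpha_j\ln\frac{|C_j|+1}{\delta_j}$. The allocation $\delta_j=\delta\alpha_j$ is indeed optimal for this quantity, but it yields $\sum_j\alpha_j\ln\frac{|C_j|+1}{\delta\alpha_j}=\sum_j\alpha_j\ln(|C_j|+1)+\ln\frac{1}{\delta}+\sum_j\alpha_j\ln\frac{1}{\alpha_j}$, which carries an irreducible entropy term $\sum_j\alpha_j\ln\frac{1}{\alpha_j}$, as large as $\ln n$. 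Your proposed Jensen step does not remove it: $\ln\sum_j\alpha_j\frac{|C_j|+1}{\delta\alpha_j}=\ln\frac{\sum_j|C_j|+n}{\delta}$, and the \emph{unweighted} sum $\sum_j|C_j|$ is not controlled by $m$ (Lemma~\ref{lem:efc} only gives $\sum_j\omega_j|C_j|=m$). This is fatal precisely in the applications the theorem targets: the ranking cover has $n=\ell!$ elements with uniform weights, so the union bound would add a term of order $\ell\ln\ell$ and destroy the bound. (Your $\KL$ bookkeeping via $\pi_j/|C_j|=\omega_j/m$ and the final convexity-of-$\kl$ step are both correct and match the paper's Lemma~\ref{lem:l3}.)

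The missing idea, which is how the paper proceeds, is to average across blocks \emph{before} invoking a deviation inequality rather than after. The paper considers the single random variable $\expectation_{\bfh\sim\bfP_n}\sum_j\alpha_je^{|C_j|\kl(\hat{\risk}(h_j,\bfZ^{(j)})||\risk(h_j))}$, whose expectation is at most $\sum_j\alpha_j(|C_j|+1)=\frac{m+\omega}{\omega}$ by the per-block moment bound (Lemma~\ref{lem:entropy}) together with Lemma~\ref{lem:efc}, and applies Markov's inequality exactly once (Lemma~\ref{lem:l1}); this costs a single factor $1/\delta$ with no dependence on $n$. The change of measure and Jensen's inequality are then applied to the logarithm of this $\alpha$-weighted sum (Lemma~\ref{lem:l2}), which produces $\sum_j\alpha_j\KL(Q_n^j||P_n^j)$ and the weighted $\kl$ terms without any entropy overhead, after which convexity of $\kl$ concludes as in your last step.
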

\begin{proof} Deferred to Section~\ref{sec:proof}.\end{proof}
We would like to emphasize that the same type of result -- using the
same proof techniques -- can be
obtained if simple (i.e. not exact nor proper) fractional covers are
considered. However, as we shall see, the `best' (in terms of
tightness) bound is achieved for covers from the set of proper exact
fractional covers, and this is the reason why we have stated
Theorem~\ref{th:cpbb1} with a restriction to this particular set of covers.

 The empirical quantity $\bar{e}_{\bfQ_n}(\bfZ)$ is a weighted average of the
  empirical errors on $\bfZ^{(j)}$ of Gibbs classifiers with
  respective distributions $Q_n^j$. 
The following proposition characterizes $e_{\bfQ_n}=\expectation_{\bfZ\sim
  \bfD_m}\bar{e}_{\bfQ_n}(\bfZ)$. 
\begin{proposition} \label{prop:mixture}
$\forall \bfD_m$, $\forall\family$, $\forall
  \bfC=\{(C_j,\omega_j)\}_{j=1}^n\in\pefc(\bfD_m)$, $\forall \bfQ_n$:
  $e_{\bfQ_n}=\expectation_{\bfZ\sim \bfD_m}\bar{e}_{\bfQ_n}(\bfZ)$ is the
  error of the Gibbs classifier based on the mixture of distributions
  $Q^{\bfpi}=\sum_{j=1}^n\pi_jQ_n^j$. 
\end{proposition}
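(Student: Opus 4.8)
The plan is to unfold the definitions of $e_{\bfQ_n}$ and $\bar{e}_{\bfQ_n}(\bfZ)$ and then exploit the only structural hypothesis that is actually needed here, namely that every coordinate of $\bfZ$ has the same marginal distribution $D$ (the independence of the sets $C_j$ plays no role in this particular statement). First I would write
\begin{align*}
e_{\bfQ_n}=\expectation_{\bfZ\sim\bfD_m}\bar{e}_{\bfQ_n}(\bfZ)
=\expectation_{\bfZ\sim\bfD_m}\sum_{j=1}^n\pi_j\,\expectation_{h\sim Q_n^j}\hat{\risk}(h,\bfZ^{(j)}),
\end{align*}
and, since the sum is finite and all the quantities involved are bounded (the risk $r$ takes values in $\{0,1\}$), exchange the two expectations and the finite sum by linearity to obtain $e_{\bfQ_n}=\sum_{j=1}^n\pi_j\,\expectation_{h\sim Q_n^j}\expectation_{\bfZ\sim\bfD_m}\hat{\risk}(h,\bfZ^{(j)})$.

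Next I would evaluate the inner expectation. Since $\hat{\risk}(h,\bfZ^{(j)})=\frac{1}{|C_j|}\sum_{k\in C_j}r(h,Z_k)$ depends on $\bfZ$ only through the coordinates indexed by $C_j$, and since each $Z_k$ has marginal distribution $D$, linearity of expectation gives
\begin{align*}
\expectation_{\bfZ\sim\bfD_m}\hat{\risk}(h,\bfZ^{(j)})=\frac{1}{|C_j|}\sum_{k\in C_j}\expectation_{Z_k\sim D}r(h,Z_k)=\risk(h),
\end{align*}
where I use the identity $\risk(h)=\expectation_{Z\sim D}r(h,Z)$ recalled in \eqref{eq:iidrisk}. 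Substituting this back in yields $e_{\bfQ_n}=\sum_{j=1}^n\pi_j\,\expectation_{h\sim Q_n^j}\risk(h)$.

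Finally I would recognize the right-hand side as an expectation taken under the mixture distribution. Because $\pi_j\geq 0$ and $\sum_{j=1}^n\pi_j=1$ (Lemma~\ref{lem:efc}), the mixture $Q^{\bfpi}=\sum_{j=1}^n\pi_jQ_n^j$ is a genuine probability distribution over $\family$, and for any integrable $f$ one has $\expectation_{h\sim Q^{\bfpi}}f(h)=\sum_{j=1}^n\pi_j\,\expectation_{h\sim Q_n^j}f(h)$; applying this with $f=\risk$ gives $e_{\bfQ_n}=\expectation_{h\sim Q^{\bfpi}}\risk(h)$, which by the definition of the true error $e_Q$ in \eqref{eq:gibbs} is exactly the error of the Gibbs classifier $g_{Q^{\bfpi}}$. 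I do not expect any real obstacle in this argument; the only points that deserve an explicit word are that the conclusion does \emph{not} use independence of the $C_j$ — identical marginals alone suffice — and that one should check $\pi_j\geq0$, $\sum_j\pi_j=1$ so that $Q^{\bfpi}$ is a bona fide distribution over $\family$.
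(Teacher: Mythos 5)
Your proof is correct and follows essentially the same route as the paper's: unfold the definitions, swap the expectations, use the fact that each coordinate has marginal $D$ to identify $\expectation_{\bfZ\sim\bfD_m}\hat{\risk}(h,\bfZ^{(j)})$ with $\risk(h)$, and recognize the mixture $Q^{\bfpi}$. Your explicit remark that only the identical marginals (and not the independence of the variables within each $C_j$) are used is a slightly sharper observation than the paper's parenthetical, but the argument is the same.
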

\begin{proof} From the definition of $\bfpi$, $\pi_j\geq 0$ and $\sum_{j=1}^n\pi_j=1$. Thus,
  \begin{alignsize*}{\normalsize}
    \expectation_{\bfZ\sim \bfD_m}\bar{e}_{\bfQ_n}(\bfZ)&=\expectation_{\bfZ\sim \bfD_m}\sum_{j}\pi_j\expectation_{h\sim  Q_n^j}\hat{\risk}(h,\bfZ^{(j)})\\
    &=\sum_{j}\pi_j\expectation_{h\sim Q_j}\expectation_{\bfZ^{(j)}\sim \bfD_m^{(j)}}\hat{\risk}(h,\bfZ^{(j)}) \tag{marginalization}\\
    &=\sum_{j}\pi_j\expectation_{h\sim Q_n^{j}}\risk(h) \tag{$\expectation_{\bfZ^{(j)}\sim \bfD_m^{(j)}}\hat{\risk}(h,\bfZ^{(j)})=R(h),\forall j$}\\
    &=\expectation_{h\sim\pi_1Q_n^1+\ldots+\pi_jQ_n^j}\risk(h)=\expectation_{h\sim
      Q^{\bfpi}}\risk(h). 
  \end{alignsize*}
Where, in the third line, we have used the fact that the variables in
$\bfZ^{(j)}$ are identically distributed (by assumption, they are IID).
\end{proof}

\begin{remark}\label{rem:algo} The prior $\bfP_n$
  and the posterior $\bfQ_n$ enter into play in
  Proposition~\ref{prop:mixture} and Theorem~\ref{th:cpbb1} through
  their marginals only. This  advocates for the following learning scheme.
  Given a cover and a (possibly factorized) prior $\bfP_n$, look for a
  factorized posterior $\bfQ_n=\otimes_{j=1}^nQ_j$ such that each
  $Q_j$ independently minimizes the usual \iid \pac-Bayes bound given
  in Theorem~\ref{th:pbiid} on each $\bfZ^{(j)}$. Then make
  predictions according to the Gibbs classifier defined with respect
  to $Q^{\bfpi}=\sum_j\pi_jQ_j$. \end{remark}

The following theorem gives a result that readily applies without
choosing a specific cover. 
\begin{theorem}[Chromatic \pac-Bayes Bound
  (II)] \label{th:cpbb2}
  $\forall \bfD_m$, $\forall\family$,
  $\forall\delta\in(0,1]$, $\forall P$, with probability at least
  $1-\delta$ over the random draw of $\bfZ\sim\bfD_m$, the following
  holds
  \begin{equationsize}{\normalsize}
    \forall Q,\;\kl(\hat{e}_Q(\bfZ)||e_Q)\leq\frac{\chi^*}{m}\left[\KL(Q||P)+\ln\frac{m+\chi^*}{\delta\chi^*}\right],\label{eq:cpbb2}
\end{equationsize}
  where $\chi^*$ is the fractional chromatic number of $\Gamma(\bfD_m)$, and where $\hat{e}_Q(\bfZ)$ and $e_Q$ are as in~(\ref{eq:gibbs}).
\end{theorem}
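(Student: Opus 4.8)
The plan is to read Theorem~\ref{th:cpbb2} off Theorem~\ref{th:cpbb1} by making two choices in the latter: instantiate it with the proper exact fractional cover that \emph{realizes} the fractional chromatic number, and then restrict attention to \emph{product} priors and posteriors over $\family^n$, which makes all the per-colour quantities collapse onto their global analogues.

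Concretely, first I would invoke Definition~\ref{def:fc}: since $\chi^*=\chi^*(\Gamma(\bfD_m))$ is, by definition, the minimum chromatic weight over $\pefc(\bfD_m)$ (and for a finite graph this minimum is attained, an optimal fractional cover being a vertex of the associated covering polytope), there is a cover $\bfC=\{(C_j,\omega_j)\}_{j=1}^n\in\pefc(\bfD_m)$ with $\omega(\bfC)=\chi^*$. Fix an arbitrary prior $P$ over $\family$ and apply Theorem~\ref{th:cpbb1} with this $\bfC$ and with $\bfP_n=P^{\otimes n}$, so that every marginal $P_n^j$ equals $P$. This yields an event of probability at least $1-\delta$ over $\bfZ\sim\bfD_m$ on which \eqref{eq:cpbb1} holds simultaneously for every $\bfQ_n$; in particular, on that same event it holds for every product posterior $\bfQ_n=Q^{\otimes n}$, i.e.\ for every $Q$ over $\family$, for which $Q_n^j=Q$ for all $j$.

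It then remains to simplify the three ingredients of \eqref{eq:cpbb1} under these choices. The divergence term is $\sum_{j}\alpha_j\KL(Q_n^j||P_n^j)=\KL(Q||P)$ because $\sum_j\alpha_j=1$. For the empirical term, Lemma~\ref{lem:efc} applied with $t_i=r(h,Z_i)$ gives $\sum_{j}\pi_j\hat{\risk}(h,\bfZ^{(j)})=\tfrac1m\sum_{j}\omega_j\sum_{k\in C_j}r(h,Z_k)=\tfrac1m\sum_{i=1}^m r(h,Z_i)=\hat{\risk}(h,\bfZ)$, hence $\bar{e}_{\bfQ_n}(\bfZ)=\expectation_{h\sim Q}\hat{\risk}(h,\bfZ)=\hat{e}_Q(\bfZ)$; taking $\expectation_{\bfZ\sim\bfD_m}$ (or, equivalently, noting via Proposition~\ref{prop:mixture} that the mixture $Q^{\bfpi}=\sum_j\pi_jQ$ is just $Q$) gives $e_{\bfQ_n}=e_Q$. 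Substituting $\KL(Q||P)$, $\hat{e}_Q(\bfZ)$, $e_Q$ and $\omega(\bfC)=\chi^*$ into \eqref{eq:cpbb1} reproduces \eqref{eq:cpbb2} verbatim.

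I do not anticipate a genuine obstacle: once Theorem~\ref{th:cpbb1} is in hand the argument is pure bookkeeping. The two points that merit care are (i) the assertion that $\chi^*$ is attained by some proper exact fractional cover, so that Theorem~\ref{th:cpbb1} may legitimately be applied with $\omega=\chi^*$, and (ii) the collapse of the colour-indexed empirical error back to $\hat{e}_Q(\bfZ)$, which is precisely what Lemma~\ref{lem:efc} provides. Optionally I would add the remark explaining why it is the $\chi^*$-cover that gives the tightest bound of this shape: the right-hand side of \eqref{eq:cpbb1} is nondecreasing in $\omega=\omega(\bfC)$, the only non-obvious part being $\tfrac{\omega}{m}\ln\tfrac{m+\omega}{\delta\omega}$, whose $\omega$-derivative equals $\tfrac1m(\ln\tfrac{m+\omega}{\delta\omega}-\tfrac{m}{m+\omega})\geq\tfrac1m(\ln(1+\tfrac m\omega)-\tfrac{m/\omega}{1+m/\omega})\geq 0$ by the elementary inequality $\ln(1+u)\geq u/(1+u)$ together with $\delta\leq 1$.
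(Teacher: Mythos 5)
Your proposal is correct and follows essentially the same route as the paper: instantiate Theorem~\ref{th:cpbb1} with an optimal proper exact fractional cover of weight $\chi^*$ and with product prior $P^n$ and posterior $Q^n$, collapse the KL term via $\sum_j\alpha_j=1$, and collapse $\bar{e}_{\bfQ_n}(\bfZ)$ to $\hat{e}_Q(\bfZ)$ via Lemma~\ref{lem:efc}. Your added remarks on the attainment of $\chi^*$ and on the monotonicity of the bound in $\omega$ correspond to points the paper only asserts in its post-theorem comments, so they are welcome but not a departure from the paper's argument.
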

\begin{proof}
  This theorem is just a particular case of Theorem~\ref{th:cpbb1}.
  Assume that $\bfC=\{(C_j,\omega_j)\}_{j=1}^n\in\pefc(\bfD_m)$ such
  that $\omega(C)=\chi^{*}(\Gamma(\bfD_m))$,
  $\bfP_n=\otimes_{j=1}^nP=P^n$ and $\bfQ_n=\otimes_{j=1}^nQ=Q^n$, for
  some $P$ and $Q$.

  For the right-hand side of~(\ref{eq:cpbb2}), it directly comes that
  $$\sum_j\alpha_j\KL(Q_n^j||P_n^j)=\sum_j{\alpha_j}
  \KL(Q||P)=\KL(Q||P).$$ It then suffices to show that
  $\bar{e}_{\bfQ_n}(\bfZ)=\hat{e}_Q(\bfZ)$:
  \begin{alignsize*}{\normalsize}
    \bar{e}_{\bfQ_n}(\bfZ)&=\sum_j\pi_j\expectation_{h\sim Q_n^j}\hat{\risk}(h,\bfZ^{(j)})=\sum_j\pi_j\expectation_{h\sim Q}\hat{\risk}(h,\bfZ^{(j)})\\
    &=\frac{1}{m}\sum_j\omega_j|C_j|\expectation_{h\sim Q}\frac{1}{|C_j|}\sum_k r(h,Z_k)\tag{$\pi_j=\frac{\omega_j|C_j|}{m},\forall j$}\\
    &=\expectation_{h\sim Q}\frac{1}{m}\sum_j\omega_j\sum_kr(h,Z_k)\\
    &=\expectation_{h\sim Q}\frac{1}{m}\sum_{i}r(h,Z_i)\tag{cf. Lemma~\ref{lem:efc}}\\
    &=\expectation_{h\sim Q}\hat{\risk}(h,\bfZ)=\hat{e}_Q(\bfZ). 
  \end{alignsize*}
\end{proof}

A few comments are in order.
\begin{itemize}
\item A $\chi^*$ worsening.
 This theorem says that even in the case of non \iid data, a
  \pac-Bayes bound very similar to the \iid \pac-Bayes
  bound~(\ref{eq:pbiid}) can be stated, with a worsening (since
  $\chi^*\geq 1$) proportional to $\chi^*$, i.e proportional to the
  amount of dependencies in the data. In
  addition, the new \pac-Bayes bounds is valid with any priors and
  posteriors, without the need for these distributions to depend on the chosen cover (as is the case with the
  more general
  Theorem~\ref{th:cpbb1}).
\item $\chi^*$: the optimal constant.
  Among all elements of $\pefc(\bfD_m)$, $\chi^*$
  is the best constant achievable in terms of the tightness of the
  bound~\eqref{eq:cpbb2} on $e_Q$: getting an optimal coloring gives
  rise to an `optimal' bound. Indeed, it suffices to observe that the
  right-hand side of~\eqref{eq:cpbb1} is decreasing with respect to
  $\omega$ when all $Q^{j}_n$ are identical (we let the reader check that).
 As $\chi^*$ is the smallest chromatic weight, it gives the tightest
  bound. 
\item $\Gamma(\bfD_m)$ vs. induced subgraphs.
If $\bfs\subseteq[m]$ and $\bfZ_{\bfs}=\{Z_s:s\in \bfs\}$, it is obvious that Theorem~\ref{th:cpbb2} holds for $|\bfs|$-samples drawn from the marginal distribution
$\bfD_{\bfs}$ of $\bfZ_{\bfs}$. Considering only $\bfZ_{\bfs}$ amounts to working with the subgraph $\Gamma(\bfD_{\bfs})$ of $\Gamma(\bfD_m)$ induced by the vertices in $\bfs$: this might provide a better bound in situations where $\chi^*(\bfD_{\bfs})/|\bfs|$ is smaller than $\chi^*(\bfD_m)/m$ (this is not guaranteed, however, because the empirical error $\hat{e}_Q(\bfZ_{\bfs})$ computed on $\bfZ_{\bfs}$ might be larger than $\hat{e}_Q(\bfZ)$). To see this, consider a graph $\Gamma_{\text{1-edge}}=(V,E)$ of $m$ vertices where $|E|=1$, i.e. there are only two nodes, say $u$ and $v$, that are connected  (see Figure~\ref{fig:oneedge}). The fractional chromatic number $\chi^*_{\text{1-edge}}$ of $\Gamma_{\text{1-edge}}$ is $2$ ($u$ and $v$ must use distinct colors) while the (fractional) chromatic number $\chi^*_u$ of the subgraph $\Gamma_u$ of $\Gamma_{\text{1-edge}}$ obtained by removing $u$ is $1$: $\chi^{*}_{\text{1-edge}}$ is twice as big as $\chi^{*}_u$ while the number of nodes only differ by $1$ and, for large $m$, this ratio roughly carries over for $\chi^{*}_{\text{1-edge}}/m$ and $\chi^{*}_u/(m-1)$.
\end{itemize}

\begin{figure}[t]
\centering
\subfloat[$\Gamma_{\text{1-edge}}$]{
\includegraphics[width=0.38\textwidth]{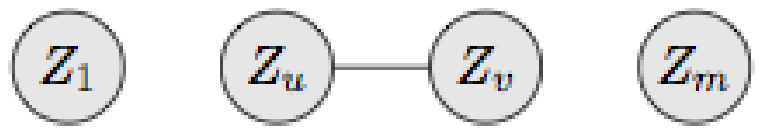}
}
\hspace{1.5cm}
\subfloat[$\Gamma_{u}$]{
\includegraphics[width=0.38\textwidth]{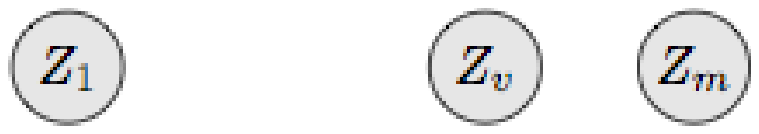}
}
\caption{$\Gamma_u$ is the subgraph induced by $\Gamma_{\text{1-edge}}$ --~which contains only one edge, between $u$ and $v$~-- when $u$ is removed: it might be preferable to consider the distribution corresponding to $\Gamma_u$ in Theorem~\ref{th:cpbb2} instead of the distribution defined wrt $\Gamma_{\text{1-edge}}$, since $\chi^*(\Gamma_{\text{1-edge}})=2$ and $\chi^*(\Gamma_u)=1$ (see text for detailed comments).\label{fig:oneedge}}
\end{figure}

This last comment outlines that considering a subset of $\bfZ$, or,
equivalently, a subgraph of $\Gamma(\bfD_m)$, in \eqref{eq:cpbb2}, might provide a better
generalization bound. However, it is assumed that the choice of the subgraph is done
{\em before} computing the bound: the bound does only hold
 with probability $1-\delta$ for the chosen subgraph. To alleviate this
and provide a bound that takes advantage of several induced subgraphs, 
we have the following proposition:

\begin{proposition}  Let $\{m\}^{\# k}$ denote
  $\{\bfs:\bfs\subseteq[m],|\bfs|= m- k\}$.
 $\forall \bfD_m$, $\forall\family$, $\forall k\in[m]$,
  $\forall\delta\in(0,1]$, $\forall P$, with probability at least
  $1-\delta$ over the random draw of $\bfZ\sim\bfD_m$: $\forall Q,$
  \begin{equationsize}{\small}
e_Q\leq\min_{\bfs\in\{m\}^{\# k}}\left\{\hat{e}_{Q}(\bfZ_\bfs)+\kl^{-1}\left(\hat{e}_{Q}(\bfZ_\bfs),\frac{\chi^*_{\bfs}}{|\bfs|}\left[\KL(Q||P)+\ln\frac{|\bfs|+\chi^*_{\bfs}}{\chi^*_{\bfs}}+ \ln\binom{m}{k}+\ln\frac{1}{\delta}\right]\right)\right\}.\label{eq:pbiidsubgraph}
\end{equationsize}
  where $\chi^*_{\bfs}$ is the fractional chromatic number of $\Gamma(\bfD_{\bfs})$, and where $\hat{e}_{Q}(\bfZ_\bfs)$ is the empirical error of the Gibbs classifier
$g_Q$ on $\bfZ_\bfs$, that is: $\hat{e}_{Q}(\bfZ_\bfs)=\expectation_{h\sim Q}\hat\risk(h,\bfZ_\bfs)$.
\end{proposition}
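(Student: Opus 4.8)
The plan is to deduce this proposition from Theorem~\ref{th:cpbb2} by a union bound over the finite collection $\{m\}^{\# k}$ of subsets of size $m-k$. First I would fix $k\in[m]$ and, for each $\bfs\in\{m\}^{\# k}$, apply Theorem~\ref{th:cpbb2} to the $|\bfs|$-sample $\bfZ_\bfs\sim\bfD_{\bfs}$ with confidence parameter $\delta'=\delta/\binom{m}{k}$ in place of $\delta$; this is legitimate because, as noted in the third bullet after Theorem~\ref{th:cpbb2}, the chromatic bound holds verbatim for the marginal distribution of any induced subgraph $\Gamma(\bfD_{\bfs})$. For that fixed $\bfs$, with probability at least $1-\delta'$ over the draw of $\bfZ$,
\begin{equation*}
\forall Q,\quad \kl(\hat{e}_Q(\bfZ_\bfs)\|e_Q)\leq\frac{\chi^*_\bfs}{|\bfs|}\left[\KL(Q\|P)+\ln\frac{|\bfs|+\chi^*_\bfs}{\delta'\chi^*_\bfs}\right],
\end{equation*}
and splitting $\ln(1/\delta')=\ln\binom{m}{k}+\ln(1/\delta)$ reproduces exactly the bracketed expression in~\eqref{eq:pbiidsubgraph}.

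Next I would take a union bound over all $\bfs\in\{m\}^{\# k}$: since there are $\binom{m}{k}$ such subsets and each `bad event' has probability at most $\delta/\binom{m}{k}$, with probability at least $1-\delta$ the displayed inequality holds \emph{simultaneously} for every $\bfs\in\{m\}^{\# k}$. Note that the draw of $\bfZ\sim\bfD_m$ already determines $\bfZ_\bfs$ for all $\bfs$ at once, so this is a genuine union bound over events defined on the same probability space; the key point is that the choice of which $\bfs$ we ultimately use may be made \emph{after} seeing the data, which is precisely what the simultaneity buys us over the remark preceding the proposition.

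Finally, on this event of probability $1-\delta$, I would convert each $\kl$ inequality into an additive bound on $e_Q$ using the inverse-$\kl$ function $\kl^{-1}$ introduced after Theorem~\ref{th:pbiid}: from $t\geq\kl^{-1}(q,\varepsilon)\Leftrightarrow\kl(q\|q+t)\geq\varepsilon$ one gets $e_Q\leq\hat{e}_Q(\bfZ_\bfs)+\kl^{-1}(\hat{e}_Q(\bfZ_\bfs),\varepsilon_\bfs)$ with $\varepsilon_\bfs$ the right-hand side above, for every $\bfs$. Since this holds for all $\bfs\in\{m\}^{\# k}$ simultaneously, we may take the minimum over $\bfs$, yielding~\eqref{eq:pbiidsubgraph}.

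There is no serious obstacle here: the argument is a textbook stratified union bound plus the $\kl^{-1}$ rewriting, and the only things to be slightly careful about are (i) checking that Theorem~\ref{th:cpbb2} really does transfer to marginals $\bfD_\bfs$ of induced subgraphs — this is asserted in the bullet list but one should make sure the marginal-distribution hypothesis (identical marginals equal to $D$) is inherited, which it is — and (ii) bookkeeping the logarithm split so that $\ln\binom{m}{k}$ lands inside the bracket multiplied by $\chi^*_\bfs/|\bfs|$ exactly as written. If one wanted the cleaner-looking constant one could also remark, via $\ln x\le x-1$ or monotonicity of $f_{m,\delta}(\omega)=\omega\ln\frac{m+\omega}{\delta\omega}$ as in the commented-out computation after Theorem~\ref{th:cpbb2}, that replacing $\chi^*_\bfs$ by any upper bound only loosens the bound — but that is optional and not needed for the statement as given.
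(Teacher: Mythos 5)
Your proposal is correct and follows essentially the same route as the paper: a union bound over the $\binom{m}{k}$ induced subgraphs with confidence $\delta/\binom{m}{k}$ each, followed by the $\kl^{-1}$ rewriting of~\eqref{eq:pbiidinvkl}. The extra care you take with the marginal distributions $\bfD_\bfs$ and the identification of $e_Q$ across subsamples is implicit in the paper's one-line argument but entirely consistent with it.
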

\begin{proof}
Simply apply the union bound to equation~\eqref{eq:cpbb2} of Theorem~\ref{th:cpbb2}:
for fixed $k$, there are $\binom{m}{m-k}=\binom{m}{k}$ subgraphs and
using $\delta/\binom{m}{k}$ makes the bound hold with probability
$1-\delta$ for all possible $\binom{m}{k}$ subgraphs
(simultaneously). Making use of the form \eqref{eq:pbiidinvkl} gives
the result.
\end{proof}

This bound is particularly useful when, for some small $k$,
there exists a subset $\bfs\subseteq \{m\}^{\# k}$ such that the induced subgraph
$\Gamma(\bfD_\bfs)$, which has $k$ fewer nodes than
$\Gamma(\bfD_m)$, has a fractional chromatic number $\chi^{*}_{\bfs}$ that is
smaller than $\chi^{*}(\bfD_m)$ (as is the case with
the graph $\Gamma_{\text{1-edge}}$ of Figure~\ref{fig:oneedge}, where
$k=1$). Obtaining a similar result that holds for subgraphs associated with
sets $\bfs$ of sizes {\em larger or equal} to $m-k$ is possible by replacing
$\ln\binom{m}{k}$ with $\ln\sum_{\kappa=0}^k\binom{m}{\kappa}$ in the
bound (in that case, $k$ should be kept small enough with respect to
$m$, e.g. $k=\mathcal{O}_m(1)$,  to ensure that
the resulting bound still goes down to zero when $m\rightarrow\infty$).

\subsection{On the Relevance of  Fractional Covers}
One may wonder whether using the fractional cover framework is the
only way to establish a result similar to the one provided by
Theorem~\ref{th:cpbb1}. Of course, this is not the case and one
may imagine other ways of deriving closely related results without
mentioning the idea of fractional/cover coloring. (For instance, one may
manipulate subsets of independent
variables, assign weights to these subsets without referring to
fractional covers, and arrive at results that are comparable to
ours.)

However, if we assume that singling out independent sets of 
variables is the cornerstone of dealing with interdependent random
variables, we find it
enlightning to cast our approach within the rich and well-studied fractional cover/coloring
framework. On the one hand, our objective of deriving tight bounds
amounts to finding a decomposition of the set of random variables at
hand into {\em few and large} independent subsets and taking the graph
theory point of view, this obviously corresponds
to a problem of graph coloring. Explicitly using the fractional
cover/coloring argument allows us to directly benefit from the wealth
of related results, such as Property 1 or, for instance, approaches as
to how compute a cover or approximate the fractional chromatic number (e.g., linear
programming). On the other hand, from a technical point of view, making use of the fractional cover
argument allows us to preserve the simple structure of the proof of
the classical IID PAC-Bayes bound to derive Theorem~\ref{th:cpbb1}. 

To summarize, the richness of the results on graph (fractional)
coloring provides us with elegant tools to deal with a natural
representation of the dependencies that may occur within a set of
random variables. In addition, and as showed in this article, it is
possible to 
seamlessly take advantage of these tools
in the PAC-Bayesian framework (and
probably in other bound-related frameworks).


\subsection{Proof of Theorem~\ref{th:cpbb1}}
\label{sec:proof}
A proof in three steps, following the lines of the proofs given
by~\cite{seeger02pac} and~\cite{langford05tutorial} for the \iid
\pac-Bayes bound, can be provided.

\begin{lemma}
\label{lem:l1}
$\forall \bfD_m$, $\forall\delta\in(0,1]$, $\forall\bfC=\{(C_j,\omega_j)\}_{j=1}^n$, $\forall\bfP_n$ distribution over $\family^n$, with probability at least $1-\delta$ over the random draw of $\bfZ\sim\bfD_m$, the following holds (here, $\omega$ stands for $\omega(\bfC)$)
\begin{equation}\expectation_{\bfh\sim\bfP_n}\sum_{j=1}^n\alpha_je^{|C_j|\kl(\hat{\risk}(h_j,\bfZ^{(j)})||\risk(h_j))}\leq \frac{m+\omega}{\delta\omega},
\label{eq:l1}
\end{equation}
where $\bfh=(h_1,\ldots,h_n)$ is a random vector of hypotheses.
\end{lemma}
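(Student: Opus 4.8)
The plan is to mimic the standard first step of the Seeger/Langford proof of the \iid\ \pac-Bayes bound, but applied simultaneously to each independent subsample $\bfZ^{(j)}$ and then combined through the convex weights $\alpha_j$. First I would fix one index $j$ and work with the subgraph $C_j$, on which the variables $\bfZ^{(j)}=\{Z_k\}_{k\in C_j}$ are \iid\ with common marginal $D$. For a fixed $h\in\family$, the classical argument (a Chernoff-type bound on the binomial $|C_j|\hat{\risk}(h,\bfZ^{(j)})$, or equivalently Maurer's refinement) gives
\begin{equation*}
\expectation_{\bfZ^{(j)}\sim\bfD_m^{(j)}}\, e^{|C_j|\,\kl(\hat{\risk}(h,\bfZ^{(j)})\|\risk(h))}\;\le\; |C_j|+1.
\end{equation*}
This holds for every fixed $h$, hence, integrating over $h\sim P_n^j$ and using Fubini, $\expectation_{h\sim P_n^j}\expectation_{\bfZ^{(j)}}e^{|C_j|\kl(\hat{\risk}(h,\bfZ^{(j)})\|\risk(h))}\le |C_j|+1$.

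Next I would take the $\alpha_j$-weighted combination over $j=1,\dots,n$. Writing $X_j(\bfZ):=\expectation_{h\sim P_n^j}e^{|C_j|\kl(\hat{\risk}(h,\bfZ^{(j)})\|\risk(h))}$, the previous step gives $\expectation_{\bfZ\sim\bfD_m}X_j(\bfZ)\le |C_j|+1$ for each $j$ (the expectation of $X_j$ only depends on the marginal $\bfD_m^{(j)}$, so the dependencies in $\bfD_m$ are irrelevant here — this is the point where the cover does its job). Therefore
\begin{equation*}
\expectation_{\bfZ\sim\bfD_m}\sum_{j=1}^n \alpha_j X_j(\bfZ)\;\le\;\sum_{j=1}^n\alpha_j(|C_j|+1)\;=\;\sum_{j=1}^n\frac{\omega_j}{\omega}|C_j|+\sum_{j=1}^n\frac{\omega_j}{\omega}\;=\;\frac{m}{\omega}+1\;=\;\frac{m+\omega}{\omega},
\end{equation*}
where I used $\sum_j\omega_j|C_j|=m$ from Lemma~\ref{lem:efc} and $\sum_j\omega_j=\omega$. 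Finally, since $\sum_j\alpha_jX_j(\bfZ)$ is a nonnegative random variable, Markov's inequality yields that with probability at least $1-\delta$ over $\bfZ\sim\bfD_m$ its value is at most $\frac{1}{\delta}\cdot\frac{m+\omega}{\omega}$, which is exactly~\eqref{eq:l1}.

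The only delicate point is the very first inequality — the bound $\expectation\, e^{n\,\kl(\hat p\|p)}\le n+1$ for a binomial average $\hat p$ of $n$ \iid\ Bernoulli$(p)$ draws. This is a known ingredient (it is what produces the $\ln\frac{m+1}{\delta}$ term in Theorem~\ref{th:pbiid}), so I would simply cite it rather than reprove it; everything after it is bookkeeping with the cover weights and a single application of Markov's inequality. Note also that no properness/exactness of the cover is needed for the pointwise steps — exactness enters only through $\sum_j\omega_j|C_j|=m$, which is what makes the constant come out as the clean $\frac{m+\omega}{\delta\omega}$.
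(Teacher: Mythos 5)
Your proposal is correct and follows essentially the same route as the paper's proof: the per-block moment bound $\expectation\,e^{|C_j|\kl(\hat{\risk}\|\risk)}\le|C_j|+1$ (the paper's Lemma~\ref{lem:entropy}), marginalization to each $\bfD_m^{(j)}$, the identity $\sum_j\omega_j|C_j|=m$ from Lemma~\ref{lem:efc} to get the constant $\frac{m+\omega}{\omega}$, and a final application of Markov's inequality. The only cosmetic difference is the order in which you integrate over $h$ and over $\bfZ$, which is immaterial by Fubini.
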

\begin{proof}
We first observe the following:
\begin{alignsize*}{\normalsize}
\expectation_{\bfZ\sim\bfD_m}\sum_j\alpha_je^{|C_j|\kl(\hat{\risk}(h_j,\bfZ^{(j)})||\risk(h_j))}
&=\sum_j\alpha_j\expectation_{\bfZ^{(j)}\sim \bfD_m^{(j)}}e^{|C_j|\kl(\hat{\risk}(h,\bfZ^{(j)})||\risk(h))}\\
&\leq\sum_j\alpha_j(|C_j|+1)\tag{Lemma~\ref{lem:entropy}, Appendix}\\
&=\frac{1}{\omega}\sum_j\omega_j(|C_j|+1)\\
&=\frac{m+\omega}{\omega},\tag{Lemma~\ref{lem:efc}}
\end{alignsize*}
where using Lemma~\ref{lem:entropy} is made possible by the fact that $\bfZ^{(j)}$ is an \iid sample. Therefore,
\begin{equationsize*}{\normalsize}
\expectation_{\bfZ\sim\bfD_m}\expectation_{\bfh\sim\bfP_n}\sum_{j=1}^n\alpha_je^{|C_j|\kl(\hat{\risk}(h_j,\bfZ^{(j)})||\risk(h_j))}\leq \frac{m+\omega}{\omega}.
\end{equationsize*}
According to Markov's inequality (Theorem~\ref{th:markov}, Appendix),
$$\proba_{\bfZ}\left(\expectation_{\bfh\sim\bfP_n}\sum_j\alpha_je^{|C_j|\kl(\hat{\risk}(h_j,\bfZ^{(j)})||\risk(h_j))}\geq \frac{m+\omega}{\omega\delta}\right)\leq\delta.$$
\end{proof}

\begin{lemma}
\label{lem:l2}
$\forall\bfD_m$, $\forall\bfC=\{(C_j,\omega_j)\}_{j=1}^n$, $\forall\bfP_n$, $\forall\bfQ_n$, with probability at least $1-\delta$ over the random draw of $\bfZ\sim\bfD_m$, the following holds
\begin{equationsize}{\normalsize}
\frac{m}{\omega}\sum\nolimits_{j=1}^n\pi_j\expectation_{h\sim Q_n^j}\kl(\hat{\risk}(h,\bfZ^{(j)})||\risk(h))
\leq\sum\nolimits_{j=1}^n\alpha_j\KL(Q_n^j||P_n^j)+\ln\frac{m+\omega}{\delta\omega}.\label{eq:l2}
\end{equationsize}
\end{lemma}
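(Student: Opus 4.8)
The plan is to obtain Lemma~\ref{lem:l2} from Lemma~\ref{lem:l1} by a single change-of-measure step, mirroring exactly the passage from the ``prior-expectation'' inequality to the ``posterior'' inequality in the classical \iid PAC-Bayes proof of~\cite{seeger02pac,langford05tutorial}. Throughout, I work on the event of probability at least $1-\delta$ on which the conclusion~\eqref{eq:l1} of Lemma~\ref{lem:l1} holds. The point to keep in mind is that this event is determined by $\bfZ$ (together with the fixed cover $\bfC$ and prior $\bfP_n$) alone and does \emph{not} depend on $\bfQ_n$; hence anything proved on it holds simultaneously for every posterior $\bfQ_n$.

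First I would package the color index and the hypothesis into a single randomization. Define two probability measures $\tilde P$ and $\tilde Q$ on $[m]\supseteq[n]\times\family$ — more precisely on $\{1,\dots,n\}\times\family$ — by $\tilde P(j,\cdot)=\alpha_j P_n^j$ and $\tilde Q(j,\cdot)=\alpha_j Q_n^j$, i.e. draw $j\sim\bfalpha$ and then $h$ from the $j$-th marginal of the prior, resp. of the posterior. Since $\tilde P$ and $\tilde Q$ share the marginal $\bfalpha$ on the first coordinate, disintegrating the Kullback-Leibler divergence over $j$ gives
\[
\KL(\tilde Q\,\|\,\tilde P)=\sum_{j=1}^n\alpha_j\,\KL(Q_n^j\,\|\,P_n^j),
\]
which is finite under the standing absolute-continuity assumption. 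Writing $X(j,h):=|C_j|\,\kl(\hat{\risk}(h,\bfZ^{(j)})\,\|\,\risk(h))$, and noting that the $j$-th summand in~\eqref{eq:l1} depends on $\bfh$ only through its coordinate $h_j$, the left-hand side of~\eqref{eq:l1} is exactly $\expectation_{(j,h)\sim\tilde P}\,e^{X(j,h)}$; so on the good event $\expectation_{\tilde P}\,e^{X}\le (m+\omega)/(\delta\omega)$, in particular $e^X$ is $\tilde P$-integrable.

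Next I would invoke the Donsker--Varadhan change-of-measure inequality, $\expectation_{\tilde Q}X\le \KL(\tilde Q\,\|\,\tilde P)+\ln\expectation_{\tilde P}\,e^{X}$, which is itself a one-line consequence of Jensen's inequality applied to $\ln$, exactly as in the \iid proof. Combined with the two previous displays this yields
\[
\sum_{j=1}^n\alpha_j|C_j|\,\expectation_{h\sim Q_n^j}\kl(\hat{\risk}(h,\bfZ^{(j)})\,\|\,\risk(h))\;\le\;\sum_{j=1}^n\alpha_j\,\KL(Q_n^j\,\|\,P_n^j)+\ln\frac{m+\omega}{\delta\omega}.
\]
Finally I would rewrite the left-hand side using only the definitions $\alpha_j=\omega_j/\omega$ and $\pi_j=\omega_j|C_j|/m$, which give $\alpha_j|C_j|=\tfrac{m}{\omega}\,\pi_j$; substituting turns the display into~\eqref{eq:l2}.

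The step requiring the most care is not any individual inequality but the quantifier bookkeeping: the high-probability event must be frozen (from Lemma~\ref{lem:l1}, before any posterior is chosen), and the Donsker--Varadhan inequality must be applied on the \emph{product} space $\{1,\dots,n\}\times\family$ so that a single term $\ln\expectation_{\tilde P}e^X$ appears rather than $n$ separate ones — this is precisely what produces the $\bfalpha$-weighted combination $\sum_j\alpha_j\KL(Q_n^j\|P_n^j)$ instead of a coarser bound. One should also record the small measure-theoretic points (the boundary conventions $\kl(0\|0)=\kl(1\|1)=0$, and that $e^X$ is $\tilde P$-a.s. finite on the good event) so that the change of measure is legitimate.
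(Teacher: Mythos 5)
Your proposal is correct and is essentially the paper's own argument: the paper performs the change of measure $P_n^j\to Q_n^j$ term by term and then applies Jensen's inequality to $\ln$ under the mixture weights $\alpha_j$, which is exactly the Donsker--Varadhan step on the product measure $\tilde Q(j,h)=\alpha_j Q_n^j(h)$ that you invoke, followed by the same identity $\alpha_j|C_j|=\tfrac{m}{\omega}\pi_j$ and the bound from Lemma~\ref{lem:l1}. The only difference is presentational (you name the change-of-measure inequality and make the product-space packaging explicit), so no gap.
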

\begin{proof}
It suffices to use Jensen's inequality (Theorem~\ref{th:jensen}, Appendix) with $\ln$ and the fact that
$\expectation_{X\sim P}f(X)=\expectation_{X\sim Q}\frac{P(X)}{Q(X)}f(X)$, for
all $f, P, Q$. Therefore, $\forall\bfQ_n$:
\begin{alignsize*}{\normalsize}
\ln \expectation_{\bfh\sim\bfP_n}&\sum_j\alpha_je^{|C_j|\kl\left(\hat{\risk}(h_j,\bfZ^{(j)})||\risk(h_j)\right)}
=\ln\sum_j\alpha_j\expectation_{h\sim P_n^j}e^{|C_j|\kl\left(\hat{\risk}(h,\bfZ^{(j)})||\risk(h)\right)}\\
&=\ln\sum_j\alpha_j\expectation_{h\sim Q_n^j}\frac{P_n^j(h)}{Q_n^j(h)}e^{|C_j|\kl\left(\hat{\risk}(h,\bfZ^{(j)})||\risk(h)\right)}\\
&\geq\sum_j\alpha_j\expectation_{h\sim Q_n^j}\ln\left[\frac{P_n^j(h)}{Q_n^j(h)}e^{|C_j|\kl\left(\hat{\risk}(h,\bfZ^{(j)})||\risk(h)\right)}\right]\tag{Jensen's inequality}\\
&=-\sum_j\alpha_j\KL(Q_n^j||P_n^j) +\sum_j\alpha_j|C_j|\expectation_{h\sim Q_n^j}\kl\left(\hat{\risk}(h,\bfZ^{(j)})||\risk(h)\right)\\
&=-\sum_j\alpha_j\KL(Q_n^j||P_n^j) +\frac{m}{\omega}\sum_j\pi_j\expectation_{h\sim Q_n^j}\kl\left(\hat{\risk}(h,\bfZ^{(j)})||\risk(h)\right).
\end{alignsize*}
Lemma~\ref{lem:l1} then gives the result. 
\end{proof}

\begin{lemma}
\label{lem:l3}
$\forall\bfD_m$, $\forall\bfC=\{(C_j,\omega_j)\}_{j=1}^n$,  $\forall\bfQ_n,$, the following holds
\begin{equationsize*}{\normalsize}
\frac{m}{\omega}\sum\nolimits_{j=1}^n\pi_j\expectation_{h\sim Q_n^j}\kl(\hat{\risk}(h,\bfZ^{(j)})||\risk(h))\geq\kl(\bar{e}_Q||e_Q).
\end{equationsize*}
\end{lemma}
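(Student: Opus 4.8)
The plan is to reduce the inequality to a single application of Jensen's inequality for the jointly convex map $(q,p)\mapsto\kl(q||p)$, after first getting rid of the prefactor $m/\omega$. The key observation for the latter is that $m\ge\omega$: by Lemma~\ref{lem:efc}, $m=\sum_{j=1}^n\omega_j|C_j|\ge\sum_{j=1}^n\omega_j=\omega(\bfC)=\omega$, where I use that $|C_j|\ge1$ for every $j$ (we may assume every cover element is nonempty, since a $j$ with $\omega_j=0$ can be dropped). Since $\kl(q||p)\ge0$ and $\pi_j\ge0$, this yields
\[
\frac{m}{\omega}\sum_{j=1}^n\pi_j\,\expectation_{h\sim Q_n^j}\kl(\hat{\risk}(h,\bfZ^{(j)})||\risk(h))\ \ge\ \sum_{j=1}^n\pi_j\,\expectation_{h\sim Q_n^j}\kl(\hat{\risk}(h,\bfZ^{(j)})||\risk(h)),
\]
so it is enough to lower-bound the right-hand side by $\kl(\bar{e}_{\bfQ_n}(\bfZ)||e_{\bfQ_n})$.

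For that, I would recall from the proof of Proposition~\ref{prop:mixture} that $e_{\bfQ_n}=\sum_{j=1}^n\pi_j\,\expectation_{h\sim Q_n^j}\risk(h)$, while by definition $\bar{e}_{\bfQ_n}(\bfZ)=\sum_{j=1}^n\pi_j\,\expectation_{h\sim Q_n^j}\hat{\risk}(h,\bfZ^{(j)})$. Because $\pi_j\ge0$ and $\sum_j\pi_j=1$ (recorded just before Theorem~\ref{th:cpbb1}), the rule ``draw the index $j$ with probability $\pi_j$, then draw $h\sim Q_n^j$'' defines a bona fide probability measure, and $\bar{e}_{\bfQ_n}(\bfZ)$ and $e_{\bfQ_n}$ are exactly the expectations of $\hat{\risk}(h,\bfZ^{(j)})$ and of $\risk(h)$ under it. Applying Jensen's inequality (Theorem~\ref{th:jensen}, Appendix) to the jointly convex function $(q,p)\mapsto\kl(q||p)$ against this single mixture measure gives
\[
\sum_{j=1}^n\pi_j\,\expectation_{h\sim Q_n^j}\kl(\hat{\risk}(h,\bfZ^{(j)})||\risk(h))\ \ge\ \kl(\bar{e}_{\bfQ_n}(\bfZ)||e_{\bfQ_n}),
\]
which, chained with the reduction above, is precisely the claimed inequality.

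The only ingredient that is not pure bookkeeping is the joint convexity of $(q,p)\mapsto\kl(q||p)$ on $[0,1]^2$ (with the conventions $\kl(0||0)=\kl(1||1)=0$): this is the classical fact that the relative entropy between two Bernoulli laws is jointly convex in the pair of laws, which follows, e.g., from the log-sum inequality; I would either cite it or include its short derivation. Everything else follows from the identities $\pi_j=\omega_j|C_j|/m$, $\sum_j\pi_j=1$ and Lemma~\ref{lem:efc}, so I do not expect a genuine obstacle beyond being careful that the Jensen step is carried out with respect to the correct mixture of the $Q_n^j$'s (the same mixture $Q^{\bfpi}$ appearing in Proposition~\ref{prop:mixture}).
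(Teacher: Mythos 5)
Your proof is correct and follows essentially the same route as the paper's: the paper's one-line argument invokes the joint convexity of $\kl$ (Lemma~\ref{lem:kl}) exactly as your Jensen step does, with respect to the same mixture $\sum_j \pi_j Q_n^j$. Your explicit treatment of the prefactor via $m=\sum_j\omega_j|C_j|\geq\omega$ together with $\kl\geq 0$ just fills in a detail the paper leaves implicit.
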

\begin{proof}
This simply comes from the convexity of $\kl(x,y)$ in $(x,y)$ (Lemma~\ref{lem:kl}, Appendix). This, in combination with Lemma~\ref{lem:l2}, closes the proof of Theorem~\ref{th:cpbb1}.
\end{proof}

 
\section{Applications}
\label{sec:examples}
In this section, we provide instances of Theorem~\ref{th:cpbb2} for various settings; amazingly,
they alllow us to easily derive \pac-Bayes generalization bounds for problems such as ranking
and learning from stationary $\beta$-mixing processes. The theorems we provide here are all
new \pac-Bayes bounds for different non-\iid settings.
\begin{figure}
\centering
\subfloat[\iid data\label{fig:iid}]{
\includegraphics[width=0.23\textwidth]{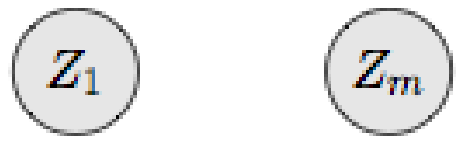}
}
\hspace{2cm}
\subfloat[Bipartite ranking data\label{fig:ranking}]{
\includegraphics[width=0.4\textwidth]{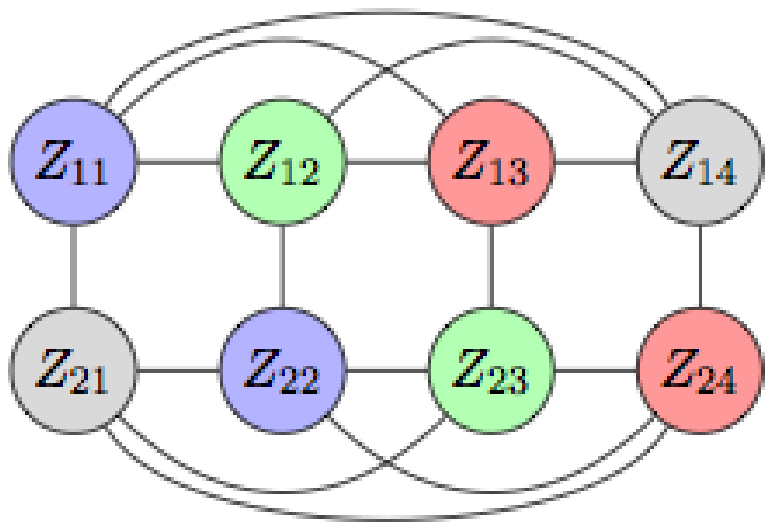}
}
\caption{Dependency graphs for different settings described in section~\ref{sec:examples}. Nodes of the
same color are part of the same cover element;  hence, they are probabilistically independent. (a) When the
data are \iid, the dependency graph is disconnected and the fractional number is $\chi^*=1$; (b)
a dependency graph obtained for bipartite ranking from a sample of 4 positive and 2 negative instances: $\chi^*=4$.\label{fig:examples}}
\end{figure}
\subsection{IID Case}
The first case we are interested in is the \iid setting.
In this case, the training sample $\bfZ=\{(X_i,Y_i)\}_{i=1}^m$
is distributed according to $\bfD_m=D^m$ and the fractional chromatic
number of $\Gamma(\bfD_m)$ is $\chi^*=1$, since the dependency graph, depicted in Figure~\ref{fig:iid} is totally
disconnected (see Property~\ref{prop:fc}).  Plugging in this value of
$\chi^*$ in the bound of Theorem~\ref{th:cpbb2} gives the \iid
\pac-Bayes bound of Theorem~\ref{th:pbiid}. This emphasizes
the fact that the standard \pac-Bayes bound is a special case of our more general results.

\subsection{General Ranking and Connection to U-Statistics}
Here, the learning problem of interest is the
following. ${D}$ is a distribution over
${\inputspace}\times{\outputspace}$ with
${\outputspace}=\realset$ and one looks for a ranking rule
$h\in\realset^{\inputspace\times\inputspace}$
that minimizes the {\em ranking risk} $R^{\rank}(h)$ defined as:
\begin{equation}
  R^{\rank}(h):=\proba_{\ontop{({X},{Y})\sim{D}}{({X}',{Y}')\sim{D}}}(({Y}-{Y}')h({X},{X}')<0).\label{eq:rankingrisk}
\end{equation}
For a random pair $({X},{Y})$, ${Y}$ can be
thought of as a score that allows one to rank objects: given two pairs
$({X},{Y})$ and $({X}',{Y}')$,
${X}$ has a higher rank (or is `better') than ${X}'$
if ${Y}>{Y}'$. The ranking rule $h$ predicts
${X}$ to be better than ${X}'$ if
$\sign(h({X},{X}'))=1$ and conversely. 
The objective of
learning is to produce a rule $h$ that makes as few misrankings as
possible, as measured by~\eqref{eq:rankingrisk}.  Given a finite \iid
(according to ${D}$) sample
$\bfS=\{({X}_i,{Y}_i)\}_{i=1}^{\ell}$ an unbiased
estimate of $R^{\rank}(h)$ is $\hat{\risk}^{\rank}(h,\bfS)$, with:
\begin{equation}
  \hat{R}^{\rank}(h,\bfS):=\frac{1}{\ell(\ell-1)}\sum_{i\neq j}\indicator_{({Y}_{i}-{Y}_j)h({X}_i,{X}_j)<0}=\frac{1}{\ell(\ell-1)}\sum_{i\neq j}\indicator_{{Y}_{ij}h({X}_i,{X}_j)<0},\label{eq:emprankingrisk}
\end{equation}
where ${Y}_{ij}:=({Y}_i-{Y}_j)$. A natural question
is to bound the ranking risk for any learning rule $h$ given $\bfS$,
where the difficulty is that \eqref{eq:emprankingrisk} is a sum of
identically but not independently random variables, namely the
variables $\indicator_{Y_{ij}h({X}_i,{X}_j)}$.  

Let us define
$X_{ij}:=({X}_i,{X}_j)$, $Z_{ij}:=(X_{ij},Y_{ij})$, 
and $\bfZ:=\{Z_{ij}\}_{i\neq j}$. We note that the number $\ell$ of training data suffices to determine the structure
of the dependency graph $\Gamma_{\rank}$ of $\bfZ$ and its distribution, which
we denote $\bfD_{\ell(\ell-1)}$. Henceforth, we are clearly in the framework
for the application of the chromatic \pac-Bayes bounds defined in the previous section.
In particular, to instantiate Theorem~\ref{th:cpbb2} to the present ranking problem, we simply need to have at hand
the value $\chi^{*}_{\rank}$ --~or an upper bound thereof~-- of the fractional chromatic number of $\Gamma_{\rank}$.
We claim that $\chi^*_{\rank}\leq \ell(\ell-1)/\lfloor\ell/2\rfloor$ where $\lfloor x\rfloor$ is the largest integer less than or equal to $x$. We provide the following new \pac-Bayes bound for the ranking risk:
\begin{theorem}[Ranking \pac-Bayes bound]
\label{th:pbranking}
$\forall {D}$ over ${\inputspace}\times{\outputspace}$, $\forall{\family}\subseteq\realset^{{\inputspace}\times{\inputspace}}$, $\forall\delta\in(0,1]$, $\forall {P}$, with
probability at least $1-\delta$ over the random draw of $\bfS\sim {D}^\ell$, the following holds
  \begin{equationsize}{\small}
    \forall {Q},\;\kl(\hat{e}_Q^{\rank}(\bfS)||e_Q^{\rank})\leq\frac{1}{\lfloor\ell/2\rfloor}\left[\KL({Q}||{P})+\ln\frac{\lfloor\ell/2\rfloor+1}{\delta}\right],\label{eq:pbranking}
\end{equationsize}
where 
\begin{align*}
\hat{e}_{{Q}}^{\rank}(\bfS)&:=\expectation_{h\sim {Q}}\hat{R}^{\rank}(h,\bfS)\\
e_{{Q}}^{\rank}&:=\expectation_{\bfS\sim D^{\ell}}\hat{e}_{{Q}}^{\rank}(\bfS).
\end{align*}
\end{theorem}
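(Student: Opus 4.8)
The plan is to derive Theorem~\ref{th:pbranking} as a direct instance of Theorem~\ref{th:cpbb1} (with a factorised prior and posterior, exactly as in the proof of Theorem~\ref{th:cpbb2}) applied to the collection $\bfZ=\{Z_{ij}\}_{i\neq j}$ of $m=\ell(\ell-1)$ interdependent variables. Two ingredients have to be supplied: first, a proper exact fractional cover of the dependency graph $\Gamma_{\rank}=\Gamma(\bfZ)$ witnessing $\chi^*_{\rank}\le\ell(\ell-1)/\lfloor\ell/2\rfloor$; second, the routine identification of the empirical and true errors produced by the chromatic bound with $\hat e_Q^{\rank}(\bfS)$ and $e_Q^{\rank}$.

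For the first ingredient I would start from an observation about the graph. Since $Z_{ij}=((X_i,X_j),Y_i-Y_j)$ is a deterministic function of $(X_i,Y_i)$ and $(X_j,Y_j)$ and the $\ell$ pairs $(X_i,Y_i)$ are \iid, $Z_{ij}$ and $Z_{kl}$ are independent whenever $\{i,j\}\cap\{k,l\}=\emptyset$; hence $\Gamma_{\rank}$ is a subgraph of the index-overlap graph $G$ on the $\ell(\ell-1)$ vertices $(i,j)$, $i\neq j$, whose edges join exactly the pairs of vertices with overlapping index sets. Consequently every set independent in $G$ is independent in $\Gamma_{\rank}$, so a proper exact fractional cover of $G$ is in particular an element of $\pefc(\bfD_m)$ — which is all that Theorem~\ref{th:cpbb1} asks for, whatever the actual $D$.

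To cover $G$ I would use the averaging-over-permutations device familiar from U-statistics. An independent set of $G$ is a family of ordered pairs with pairwise disjoint index sets, hence of size at most $\lfloor\ell/2\rfloor$; and for each of the $\ell!$ permutations $\sigma$ of $[\ell]$ the set $C_\sigma=\{(\sigma(2t-1),\sigma(2t)):1\le t\le\lfloor\ell/2\rfloor\}$ is independent with $|C_\sigma|=\lfloor\ell/2\rfloor$, while $\bigcup_\sigma C_\sigma$ is the whole vertex set. Because the permutation action is transitive on ordered pairs, each $(i,j)$ belongs to the same number $N=\ell!\lfloor\ell/2\rfloor/(\ell(\ell-1))$ of the $C_\sigma$ (count the incidences $\{(\sigma,v):v\in C_\sigma\}$ two ways), so assigning weight $1/N$ to each $C_\sigma$ yields a proper exact fractional cover $\bfC$ of chromatic weight $\omega(\bfC)=\ell!/N=\ell(\ell-1)/\lfloor\ell/2\rfloor$; in particular $\chi^*_{\rank}\le\ell(\ell-1)/\lfloor\ell/2\rfloor$. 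Plugging $\bfC$, $\bfP_n=P^n$ and $\bfQ_n=Q^n$ into Theorem~\ref{th:cpbb1}, and using — as in the proof of Theorem~\ref{th:cpbb2} — that $\sum_j\alpha_j\KL(Q_n^j\|P_n^j)=\KL(Q\|P)$ and $\bar e_{\bfQ_n}(\bfZ)=\hat e_Q(\bfZ)$, $e_{\bfQ_n}=e_Q$, the substitution $m=\ell(\ell-1)$, $\omega=\ell(\ell-1)/\lfloor\ell/2\rfloor$ gives $\omega/m=1/\lfloor\ell/2\rfloor$ and $(m+\omega)/(\delta\omega)=(\lfloor\ell/2\rfloor+1)/\delta$, i.e.\ exactly the right-hand side of \eqref{eq:pbranking}. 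It then only remains to match the two sides: by definition of $r$, $\hat e_Q(\bfZ)=\expectation_{h\sim Q}\frac1{\ell(\ell-1)}\sum_{i\neq j}\indicator_{(Y_i-Y_j)h(X_i,X_j)<0}=\expectation_{h\sim Q}\hat R^{\rank}(h,\bfS)=\hat e_Q^{\rank}(\bfS)$, and by linearity of expectation together with the unbiasedness of $\hat R^{\rank}$, $e_Q=\expectation_{\bfZ\sim\bfD_m}\hat e_Q(\bfZ)=\expectation_{\bfS\sim D^\ell}\hat e_Q^{\rank}(\bfS)=e_Q^{\rank}$ (which also equals $\expectation_{h\sim Q}R^{\rank}(h)$, cf.\ Proposition~\ref{prop:mixture}).

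The only step with genuine content is the construction of the fractional cover — checking that the permutation family is an \emph{exact} cover and computing its weight as $\ell(\ell-1)/\lfloor\ell/2\rfloor$; this is Hoeffding's symmetrisation of U-statistics recast in the coloring language of \citep{janson04large}. Everything else is bookkeeping: noting that $\Gamma_{\rank}$ sits inside the index-overlap graph, reading off the constants after substitution, and identifying $\hat e_Q(\bfZ)$ and $e_Q$ with the ranking error and its expectation.
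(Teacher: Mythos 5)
Your proposal is correct and follows essentially the same route as the paper: the permutation-indexed fractional cover from Hoeffding's U-statistics decomposition (your count $N=\ell!\lfloor\ell/2\rfloor/(\ell(\ell-1))=(\ell-2)!\lfloor\ell/2\rfloor$ and weight $\ell(\ell-1)/\lfloor\ell/2\rfloor$ match the paper's exactly, the pairing $(\sigma(2t-1),\sigma(2t))$ versus $(\sigma(i),\sigma(\lfloor\ell/2\rfloor+i))$ being an immaterial variant), followed by instantiation of the chromatic bound with $m=\ell(\ell-1)$. The paper invokes Theorem~\ref{th:cpbb2} with the upper bound on $\chi^*_{\rank}$ where you go through Theorem~\ref{th:cpbb1} with the explicit cover, which is the same computation and, if anything, slightly more careful about why an upper bound on the chromatic weight suffices.
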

\begin{proof}
We essentially need to prove our claim on the bound on $\chi^*_{\rank}$.
To do so, we consider a fractional cover of $\Gamma_{\rank}$ motivated by the theory of
U-statistics \citep{hoeffding48class,hoeffding63probability}. $\hat{\risk}(h,\bfS)$
is indeed a U-statistics of order 2 and it might be rewritten as a sum of \iid blocks as follows
\begin{align*}
\hat{\risk}(h,\bfS)&=\frac{1}{\ell(\ell-1)}\sum_{i\neq j}r(h,Z_{ij})=\frac{1}{\ell !}\sum_{\sigma\in\Sigma_\ell}\frac{1}{\lfloor\ell/2\rfloor}\sum_{i=1}^{\lfloor\ell/2\rfloor}r\left(h,Z_{\sigma(i)\sigma(\lfloor\ell/2\rfloor+i)}\right),
\end{align*}
where $\Sigma_{\ell}$ is the set of permutations over $[\ell]$.
The innermost sum is obviously a sum of \iid random variables as no two summands share the same indices.

A proper exact fractional cover $\bfC_\rank$ can be derived from this decomposition as\footnote{Note that the cover defined here 
considers elements $C_\sigma$ containing random variables themselves instead of
their indices. This abuse of notation is made for sake of readability.}
$$\bfC_{\rank}:=\left\{\left(C_\sigma:=\left\{Z_{\sigma(i)\sigma(\lfloor\ell/2\rfloor+i)}\right\}_{i=1}^{\lfloor \ell/2\rfloor},\omega_\sigma:=\frac{1}{(\ell-2)!\lfloor\ell/2\rfloor}\right)\right\}_{\sigma\in\Sigma_\ell}.$$
Indeed, as remarked before, each $C_\sigma$ is an independent set and
 each random variable $Z_{pq}$ for $p\neq q$, appears in exactly $(\ell-2)!\times \lfloor\ell/2\rfloor$ sets $C_\sigma$ (for $i$ fixed,
the number of permutations $\sigma$ such that $\sigma(i)=p$ and $\sigma(\lfloor\ell/2\rfloor+i)=q$ is
equal to $(\ell-2)!$, i.e. the number of permutations on $\ell-2$ elements; as $i$ can take $\lfloor\ell/2\rfloor$ values, this gives the result). Therefore, $\forall p,q, p\neq q$:
$$\sum_{\sigma\in\Sigma_{\ell}}\omega_\sigma\indicator_{Z_{pq}\in C_\sigma}=\frac{1}{(\ell-2)!\lfloor\ell/2\rfloor}\sum_{\sigma\in\Sigma_{\ell}}\indicator_{Z_{pq}\in C_\sigma}=\frac{1}{(\ell-2)!\lfloor\ell/2\rfloor}\times (\ell-2)!\lfloor\ell/2\rfloor=1,$$
which proves that $\bfC_{\rank}$ is a proper exact fractional cover. Its weight $\omega(\bfC_{\rank})$ is 
$$\omega(\bfC_{\rank})=\ell!\times\omega_\sigma=\frac{\ell(\ell-1))}{\lfloor\ell/2\rfloor}.$$
Hence, from the definition of $\chi^*_{\rank}$, $$\chi^*_{\rank}\leq \frac{\ell(\ell-1))}{\lfloor\ell/2\rfloor}.$$

The theorem follows by an instantiation of Theorem~\ref{th:cpbb2} with $m:=\ell(\ell-1)$ and the bound
on $\chi^*_{\rank}$ we have just proven.
\end{proof}
To our knowledge, this is the first \pac-Bayes bound on the ranking risk, while a Rademacher-complexity
based analysis was given by \cite{clemencon08ranking}. In the proof, we have used arguments from
the analysis of U-processes, which allow us to easily derive a convenient fractional cover of the
dependency graph of $\bfZ$. Note however that our framework still applies even if not all
the $Z_{ij}$'s are known, as required if an analysis based on U-processes is undertaken. This is
particularly handy in practical situations where one may only be given the values $Y_{ij}$ --~but {\em not}
the values of $Y_i$ and $Y_j$~-- for
a limited number of $(i,j)$ pairs (and not all the pairs).

An interesting question is to know how the so-called Hoeffding decomposition
used by \cite{clemencon08ranking} to establish fast rates of convergence for empirical ranking
risk minimizers could be used to draw possibly tighter \pac-Bayes bounds. This would imply
being able to appropriately take advantage of moments of order $2$ in \pac-Bayes bounds, 
and a possible direction for that has been proposed by~\cite{lacasse07pacbayes}. This
is left for future work as it is not central to the present paper.

Of course, the ranking rule may be
based on a scoring function $f\in\realset^{\inputspace}$ such that
$h(X,X')=f(X)-f(X')$, in which case all the results that we state 
in terms of $h$ can be stated similarly in terms of $f$. This is 
important to note from a practical point of view as it is probably
more usual to learn functions defined over $\inputspace$ rather
than $\inputspace\times\inputspace$ (as is $h$).

Finally, we would like to stress that the bound on $\chi^*_{\rank}$ that we have
exhibited is actually rather tight. Indeed, it is straightforward to see that the clique number 
of $\Gamma_{\rank}$ is $2(\ell-1)$ (the cliques are made of variables $\{Z_{ip}\}_{p}\union\{Z_{pi}\}_{p}$ for every $i$), 
and according to Property \ref{prop:fc}, $2(\ell-1)$ is therefore a lower bound on $\chi^*_{\rank}$. If $\ell$ is even,
then our bound on $\chi^*_{\rank}$ is equal to $2(\ell-1)$ and so is $\chi^*_{\rank}$; if $\ell$ is odd, then our bound is
$2\ell$.

\subsection{Bipartite Ranking and a Bound on the \auc}
A particular ranking setting is that of bipartite ranking, where $\outputspace=\{-1,+1\}$.
Let ${D}$ be a distribution over ${\inputspace}\times{\outputspace}$ and
${D}_{+1}$ (${D}_{-1}$) be the class conditional distribution ${D}_{X|Y=+1}$
(${D}_{X|Y=-1}$) with respect to ${D}$.  In this setting
(see, e.g. \cite{agarwal05generalization}), one may be interested in controlling what we call the
{\em bipartite misranking risk} $\risk^{\auc}(h)$ (the reason for the \auc superscript  will become clear in the sequel), of a ranking rule $h\in\realset^{\inputspace\times\inputspace}$ by
\begin{equation}
 R^{\auc}(h):=\proba_{\ontop{{X}\sim {D}_{+1}}{{X}'\sim{D}_{-1}}}(h(X,X')< 0).\label{eq:exrank}
\end{equation}
Note that the relation between $R^{\auc}$ and $R^{\rank}$ (cf. Equation~\eqref{eq:rankingrisk}) can be
made clear whenever the hypotheses $h$ under consideration are such that $h(x,x')$ and $h(x',x)$ have
opposite signs. In this situation, it is straightforward to see that
\begin{equation*}
R^{\rank}(h)=2\proba(Y=+1)\proba(Y=-1)\risk^{\auc}(h).
\end{equation*}

Let $\bfS=\{({X}_i,{Y}_i)\}_{i=1}^\ell$ be an \iid sample distributed according to
 ${\bfD}_\ell={D}^\ell$. The empirical bipartite ranking risk $\hat{R}^{\auc}(h,\bfS)$
of $h$ on $\bfS$ defined as
\begin{equation}
\hat{R}^{\auc}(h,\bfS):=\frac{1}{\ell^+\ell^-}\sum_{\ontop{i:{Y}_i=+1}{ j:{Y}_j=-1}}\indicator_{h(X_i,X_j)<0}\label{eq:emprank} 
\end{equation}
where $\ell^+$ ($\ell^-$) is the number of positive (negative) data in $\bfS$,
estimates the
fraction of pairs $(X_i,X_j)$ that are incorrectly ranked
incorrectly (given that ${Y}_i=+1$ and ${Y}_j=-1$)
by $h$: it is an unbiased estimator of $R^{\auc}(h)$. 

As before, $h$ may be expressed in terms of a scoring function $f\in\realset^{\inputspace}$  such that $h(X,X')=f(X)-f(X')$, in which
case (overloading notation):
\begin{equation*}
 R^{\auc}(f)=\proba_{\ontop{{X}\sim {D}_{+1}}{{X}'\sim{D}_{-1}}}(f(X)<f(X')) \text{ and } \hat{R}^{\auc}(f,\bfS)=\frac{1}{\ell^+\ell^-}\sum_{\ontop{i:{Y}_i=+1}{ j:{Y}_j=-1}}\indicator_{f(X_i)<f(X_j)},
\end{equation*}
 where we recognize in $\hat{R}^{\auc}(f,\bfS)$ one minus the Area under the {\sc Roc} curve, or \auc,
of $f$ on $\bfS$ \citep{agarwal05generalization,cortes04auc}, hence the \auc superscript in the
name of the risk. As a consequence, providing a \pac-Bayes bound on $\risk^{\auc}(h)$ (or $\risk^{\auc}(f)$) amounts
to providing a generalization (lower) bound on the \auc, which is a widely used measure in practice to evaluate
the performance of a scoring function.

Let us define $X_{ij}:=(X_i,X_j)$, $Z_{ij}:=(X_{ij},1)$ and
$\bfZ:=\{Z_{ij}\}_{ij:Y_i=+1,Y_j=-1}$, i.e. $\bfZ$ is a sequence of pairs
$X_{ij}$ made of one positive example and one negative example. We then are once again
in the framework defined earlier\footnote{The slight difference with
  what has been described above is that the dependency graph is now a
  random variable: it depends on the $Y_i$'s. It is shown in the proof of Theorem~\ref{th:pbauc} how this can be dealt with.}, i.e., the $Z_{ij}$'s share the same distribution but are dependent
on each other, since $Z_{ij}$ depends on $\{Z_{pq}:p=i\text{ or }
q=j\}$ (see Figure~\ref{fig:examples}). Note that in order to ease the
reading of the present subsection, we make
the implicit decomposition of training set $\bfS$ into
$\bfS=\bfS^+\cup\bfS^-$, where $\bfS^+$ (resp. $\bfS^-$) is
made of the $\ell^+$ ($\ell^-$) positive (negative) data of
$\bfS$; the size $\ell$ of $\bfS$ is therefore $\ell=\ell^++\ell^-$.
 This decomposition entails a separate reindexing of the
 positive (negative) data from $1$ to $\ell^+$ (from $1$ to $\ell^-$).

Building on Theorem~\ref{th:cpbb2}, we have the following result:
\begin{theorem}[\auc \pac-Bayes bound]
\label{th:pbauc}
$\forall {D}$ over ${\inputspace}\times{\outputspace}$, $\forall{\family}\subseteq\realset^{{\inputspace}\times\inputspace}$, $\forall\delta\in(0,1]$, $\forall {P}$, with
probability at least $1-\delta$ over the random draw of $\bfS\sim {D}^\ell$, the following holds
  \begin{equationsize}{\small}
    \forall {Q},\;\kl(\hat{e}_{{Q}}^{\auc}(\bfS)||e_{{Q}}^{\auc})\leq\frac{1}{\ell_{\min}}\left[\KL({Q}||{P})+\ln\frac{\ell_{\min}+1}{\delta}\right],\label{eq:pbauc}
\end{equationsize}
where $\ell_{\min}=\min(\ell^+,\ell^-)$, and 
\begin{align*}
\hat{e}_{{Q}}^{\auc}(\bfS)&:=\expectation_{h\sim {Q}}\hat{R}^{\auc}(h,\bfS)\\
e_{{Q}}^{\auc}&:=\expectation_{\bfS\sim D^{\ell}}\hat{e}_{{Q}}^{\auc}(\bfS).
\end{align*}
\end{theorem}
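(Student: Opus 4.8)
The plan is to instantiate Theorem~\ref{th:cpbb2} on the family $\bfZ=\{Z_{ij}\}_{ij:Y_i=+1,Y_j=-1}$. Two things must be done: (i) identify the dependency graph of $\bfZ$ and bound its fractional chromatic number, and (ii) handle the fact that this graph is itself random, since its shape depends on the realized labels. I would deal with (ii) by conditioning on the label vector $\bfY=(Y_1,\dots,Y_\ell)$, and with (i) by an explicit cover of the resulting ``rook's graph'' that mirrors the U-statistic cover used for $\chi^*_{\rank}$.

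\emph{Conditioning on the labels.} Fix a value of $\bfY$ with $\ell^+\ge 1$ and $\ell^-\ge 1$. Conditionally on $\bfY$, the inputs $\{X_i:Y_i=+1\}$ are \iid according to $D_{+1}$, the inputs $\{X_j:Y_j=-1\}$ are \iid according to $D_{-1}$, and the two groups are mutually independent; hence the variables $Z_{ij}=((X_i,X_j),1)$ are identically distributed, with $Z_{ij}$ independent of $Z_{pq}$ whenever $i\neq p$ and $j\neq q$. The conditional dependency graph $\Gamma_\bfY$ is therefore deterministic: its vertices are the cells of an $\ell^+\times\ell^-$ grid, two cells being adjacent iff they share a row or a column. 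Moreover $\expectation[\hat R^{\auc}(h,\bfS)\mid\bfY]=\proba_{X\sim D_{+1},X'\sim D_{-1}}(h(X,X')<0)=R^{\auc}(h)$, so the conditional mean of $\hat e_Q^{\auc}(\bfS)$ equals $\expectation_{h\sim Q}R^{\auc}(h)$, which does not depend on $\bfY$; this common value is $e_Q^{\auc}$. In particular, conditionally on $\bfY$, the quantities $\hat e_Q^{\auc}(\bfS)$ and $e_Q^{\auc}$ coincide with the $\hat e_Q(\bfZ)$ and $e_Q$ of Theorem~\ref{th:cpbb2}.

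\emph{Fractional chromatic number of $\Gamma_\bfY$.} Assume without loss of generality $\ell^-\le\ell^+$. For $k\in\{0,\dots,\ell^+-1\}$ set
\[
C_k:=\bigl\{\,Z_{ij}\ :\ j\in[\ell^-],\ i=((k+j-1)\bmod \ell^+)+1\,\bigr\}.
\]
Each $C_k$ contains exactly one pair in each column, and the $\ell^-\le\ell^+$ row indices it uses are consecutive residues modulo $\ell^+$, hence pairwise distinct, so $C_k$ is an independent set. Every $Z_{ij}$ lies in exactly one $C_k$ (namely $k\equiv i-j\bmod\ell^+$), so $\{(C_k,1)\}_{k=0}^{\ell^+-1}$ is a proper exact (fractional) cover of size $\ell^+$; thus $\chi^*(\Gamma_\bfY)\le\ell^+=\max(\ell^+,\ell^-)$. (This is tight, since any full row or column is a clique of size $\max(\ell^+,\ell^-)$ and Property~\ref{prop:fc} then forces equality, but only the upper bound is needed.) With $m=\ell^+\ell^-$ and $\chi^*=\max(\ell^+,\ell^-)$ one has $m/\chi^*=\ell_{\min}$ and $(m+\chi^*)/\chi^*=\ell_{\min}+1$, so Theorem~\ref{th:cpbb2} applied conditionally on $\bfY$ gives: with conditional probability at least $1-\delta$ over the $X_i$'s, for all $Q$,
\[
\kl\bigl(\hat e_Q^{\auc}(\bfS)\,\|\,e_Q^{\auc}\bigr)\le\frac{1}{\ell_{\min}}\Bigl[\KL(Q\|P)+\ln\tfrac{\ell_{\min}+1}{\delta}\Bigr].
\]
Since this holds for every admissible $\bfY$, the probability that it fails is at most $\delta$ after averaging over $\bfY$ (law of total probability), which yields the claim over $\bfS\sim D^\ell$; the degenerate cases $\ell^+=0$ or $\ell^-=0$ make the right-hand side infinite and may be discarded.

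\emph{Main obstacle.} The combinatorics is routine --- the cover above is just the bipartite analogue of the permutation cover used for $\chi^*_{\rank}$. The one delicate point is the randomness of the dependency graph: one must check that conditioning on $\bfY$ renders $\Gamma_\bfY$ deterministic so that Theorem~\ref{th:cpbb2} applies verbatim, and, crucially, that the bounded quantity $e_Q^{\auc}$ is the \emph{same} conditionally and unconditionally (which holds because $\hat R^{\auc}$ is conditionally unbiased), so that averaging over $\bfY$ does not degrade the bound.
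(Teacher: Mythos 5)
Your proposal is correct and follows essentially the same route as the paper's proof: condition on the label vector to make the dependency graph deterministic, exhibit the cyclic-shift proper exact cover of the $\ell^+\times\ell^-$ ``rook's graph'' to get $\chi^*=\ell_{\max}$ (hence $m/\chi^*=\ell_{\min}$), apply Theorem~\ref{th:cpbb2} conditionally, and then integrate out the labels. Your explicit check that $e_Q^{\auc}$ is the same conditionally and unconditionally, and your remark on the degenerate cases $\ell^+=0$ or $\ell^-=0$, are welcome points of care that the paper treats more tersely.
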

\begin{proof}
The proof works in three steps and borrows ideas from~\cite{agarwal05generalization}. The first
two parts are necessary to deal with the fact that the dependency graph of $\bfZ$, as it depends on 
the random sample $\bfS$, does not have a deterministic structure.
\paragraph{Conditioning on $\bfY=\bfy$.}
Let $\bfy\in\{-1,+1\}^\ell$ be a fixed vector and let $\ell_{\bfy}^+$
and $\ell_{\bfy}^-$ be the number of positive and negative labels in $\bfy$, respectively.
We define the distribution $\bfD_{\bfy}$ as $\bfD_{\bfy}:=\otimes_{i=1}^{\ell}D_{y_i}$;
this is a distribution on $\inputspace^{\ell}$. With a slight abuse of notation, 
${\bfD}_{\bfy}$ will also be used to denote the distribution over $({\inputspace}\times{\outputspace})^\ell$ 
of samples $\bfS=\{({X}_i,y_i)\}_{i=1}^{\ell}$ such that the sequence $\{{X}_i\}_{i=1}^{\ell}$ is distributed according to ${\bfD}_{\bfy}$.
It is easy to check that  $\forall h\in{\family}$, $\expectation_{\bfS\sim{\bfD}_{\bfy}}\hat{\risk}^{\rank}(h,\bfS)=\risk^{\rank}(h)$ (cf. equations \eqref{eq:exrank} and \eqref{eq:emprank}).

Given $\bfS$, if we define, as said earlier, $X_{ij}:=(X_i,X_j)$, $Y_{ij}:=1$ and $Z_{ij}:=(X_{ij},Y_{ij})$, then $\bfZ:=\{Z_{ij}\}_{i:y_i=1,j:y_j=-1}$ is a sample of
identically distributed variables, each with distribution $D_{\pm
  1}={D}_{+1}\otimes {D}_{-1}\otimes {\bf 1}$ over
$\inputspace\times\inputspace\times\outputspace$, where
$\outputspace=\{-1,+1\}$ and where ${\bf 1}$ is the distribution
that produces $1$ with probability $1$. 

Letting $m=\ell_{\bfy}^+\ell_{\bfy}^-$ we denote by $\bfD_{\bfy,m}$ the distribution of
the training sample $\bfZ$, within which interdependencies exist, as illustrated in
Figure~\ref{fig:examples}. Theorem~\ref{th:cpbb2} can thus be directly
applied to classifiers trained on $\bfZ$, the structure of
$\Gamma(\bfD_{\bfy,m})$ and its corresponding fractional chromatic
number $\chi_{\bfy}^*$ being completely determined by $\bfy$. Hence, letting
$\family\subseteq\realset^{\inputspace\times\inputspace}$, we have: $\forall\delta\in(0,1]$, $\forall P$ over $\family$, with probability at least $1-\delta$ over the random draw of $\bfZ\sim\bfD_{\bfy,m}$,
\begin{equationsize*}{\normalsize}
\forall Q,\;\kl(\hat{e}_Q(\bfZ)||e_Q)\leq\frac{\chi_{\bfy}^*}{m}\left[\KL(Q||P)+\ln\frac{m+\chi_{\bfy}^*}{\delta\chi_\bfy^*}\right],
\end{equationsize*}
where $\hat{e}_Q(\bfZ)=\expectation_{h\sim Q}\hat{\risk}(h,\bfZ)=\expectation_{h\sim Q}\sum_{ij}\indicator_{Y_{ij}h(Z_{ij})<0}=\expectation_{h\sim Q}\sum_{ij}\indicator_{h(Z_{ij})<0}$, which is exactly equal to $\hat{e}^{\auc}_Q(\bfS)$ (cf. \eqref{eq:emprank}); likewise, $e_Q=\expectation_{\bfZ\sim\bfD_{\bfy,m}}\hat{e}_Q(\bfZ)=\expectation_{\bfS\sim\bfD_{\bfy}}\hat{e}^{\auc}_Q(\bfS)=e^{\auc}_Q$. Hence, 
$\forall\delta\in(0,1]$, $\forall P$, with probability 
at least $1-\delta$ over the random draw of $\bfS\sim\bfD_{\bfy}$,
\begin{equationsize}{\normalsize}
\forall Q,\;\kl(\hat{e}_{Q}^{\auc}(\bfS)||e_{Q}^{\auc})\leq\frac{\chi_{\bfy}^*}{m}\left[\KL(Q||P)+\ln\frac{m+\chi_{\bfy}^*}{\delta\chi_\bfy^*}\right].\label{eq:event}
\end{equationsize}

\paragraph{Unconditioning on $\bfY$.}
As proposed by~\cite{agarwal05generalization}, let us call
$\Phi(P,\bfS,\delta)$ the event~(\ref{eq:event}); we just stated
that $\forall \bfy\in\{-1,+1\}^\ell$, $\forall
P$, $\forall\delta\in(0,1]$, $\proba_{\bfS\sim{\bfD}_{\bfy}}(\Phi(P,\bfS,\delta))\geq
1-\delta$, or, equivalently
$$\proba_{\bfS\sim{\bfD}_{\ell}}(\neg\Phi(P,\bfS,\delta)|Y=\bfy)=\proba_{\bfS\sim{\bfD}_{\bfy}}(\neg\Phi(P,\bfS,\delta))<\delta,$$
i.e., the conditional (to $Y=\bfy$) probability of the event
$\neg\Phi(P,\bfS,\delta)$ is bounded by $\delta$. This directly
implies that the unconditional probability of
$\neg\Phi(P,\bfS,\delta)$ is bounded by
$\delta$ as well:
$$\proba_{\bfS\sim{\bfD}_{\ell}}(\neg\Phi(P,\bfS,\delta))\leq \proba_{\bfS\sim{\bfD}_{\ell}}(\neg\Phi(P,\bfS,\delta)|Y=\bfy)<\delta.$$

Hence,
$\forall\delta\in(0,1]$, $\forall P$, with probability 
at least $1-\delta$ over the random draw of $\bfS\sim{\bfD}_\ell$,
\begin{equationsize}{\normalsize}
\forall Q,\;\kl(\hat{e}_{Q}^{\auc}||e_{Q}^{\auc})\leq\frac{\chi_{\bfS}^*}{m_{\bfS}}\left[\KL(Q||P)+\ln\frac{m_{\bfS}+\chi_{\bfS}^*}{\delta\chi_\bfS^*}\right].\label{eq:chis}
\end{equationsize}
where $\chi^*_{\bfS}$ is the fractional chromatic number of the graph
$\Gamma(\bfZ)$, with $\bfZ$ defined from $\bfS$ as in the first part of the proof, where
the observed (random) labels are now taken into account; here $m_{\bfS}=\ell^+\ell^-$,
where $\ell^+$ ($\ell^-$) is the number of positive
(negative) data in $\bfS$.

\paragraph{Computing  the Fractional Chromatic Number.}
In order to finish the proof, it suffices to observe that, for
$\bfZ=\{Z_{ij}\}_{ij}$, if
$\ell_{\max}=\max(\ell^+,\ell^-)$, then the fractional chromatic number
of $\Gamma(\bfZ)$ is $\chi^*=\ell_{\max}$.  

Indeed, the clique
number of $\Gamma(\bfZ)$ is $\ell_{\max}$ as for all
$i=1,\ldots,\ell^+$ ($j=1,\ldots,\ell^-$),
$\{Z_{ij}:j=1,\ldots,\ell^-\}$ ($\{Z_{ij}:i=1,\ldots,\ell^+\}$) defines a
clique of order $\ell^-$ ($\ell^+$) in $\Gamma(\bfZ)$. Thus, from
Property~\ref{prop:fc}: $\chi\geq\chi^*\geq\ell_{\max}$.  

A proper exact cover
$\bfC=\{C_k\}_{k=1}^{\ell_{\max}}$ of $\Gamma(\bfZ)$ can be
constructed as follows. Suppose that
$\ell_{\max}=\ell^+$, then $C_k=\{Z_{i\sigma_k(i)}:i=1,\ldots,\ell^-\}$,
 with $$\sigma_k(i)=(i+k-2 \mod \ell^+)+1,$$
is an independent set: no two variables $Z_{ij}$ and $Z_{pq}$ in $C_k$ are such
that $i=p$ or $j=q$. In addition, it is straightforward to check that $\bfC$ is indeed a cover of $\Gamma(\bfZ)$.
This cover is of size $\ell^+=\ell_{\max}$, which means that it achieves
the minimal possible weight over proper exact (fractional) covers since $\chi^*\geq\ell_{\max}$. Hence, $\chi^*=\chi=\ell_{\max}(=c(\Gamma))$.
Plugging in this value of $\chi^*$ in~(\ref{eq:chis}), and noting that $m_{\bfS}=\ell_{\max}\ell_{\min}$ with $\ell_{\min}=\min(\ell^+,\ell^-)$,
closes the proof.
\end{proof}
We observe that in the theorem, the dependence on the skew of the sample is expressed in
terms of $1/\min(\ell^+,\ell^-)$, whereas in the the works of~\cite{agarwal05generalization} and \cite{usunier05data},
 the bound depends on the larger $1/\ell^++1/\ell^-$.

The \pac-Bayes bound of Theorem~\ref{th:pbauc} can be specialized to the
case where $h(x,x')=f(x)-f(x')$ with $f\in\{x\mapsto w\cdot x: w\in\inputspace\}$:
$f$ is therefore a linear scoring function and $h(x,x')=w\cdot(x-x')$. The ranking rule
$h$ is thus a linear classifier acting on the difference of its arguments (the next result we present therefore carries over to kernel classifiers).
As proposed by~\cite{langford05tutorial}, we may assume an isotropic Gaussian prior
$P=\normal(0,\identity)$ and a family of posteriors $Q_{w,\mu}$ parameterized by $w\in\overline{\inputspace}$ and $\mu>0$ such
that $Q_{w,\mu}$ is $\normal(\mu,1)$ in the direction $w$ and $\normal(0,1)$ in all perpendicular directions, we arrive at the 
following theorem:
\begin{theorem}[\auc Linear \pac-Bayes bound]
\label{th:pbaucgaussian}
$\forall\ell,\forall{D}$ over ${\inputspace}\times{\outputspace}$, $\forall\delta\in(0,1]$,
the following holds with probability at least $1-\delta$ over the draw of $\bfS\sim{D}^{\ell}$:
\begin{equationsize*}{\small}
\forall w,\mu>0,\;\kl(\hat{e}^{\auc}_{Q_{w,\mu}}(\bfS)||e^{\auc}_{Q_{w,\mu}})\leq \frac{1}{\ell_{\min}}\left[\frac{\mu^2}{2}+\ln\frac{\ell_{\min}+1}{\delta}\right].
\end{equationsize*}
\end{theorem}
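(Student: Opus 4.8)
The plan is to invoke Theorem~\ref{th:pbauc} with the hypothesis class $\family=\{h_w:(x,x')\mapsto w\cdot(x-x')\}$ identified with the set of weight vectors $w\in\overline{\inputspace}$, with the fixed, data-independent prior $P=\normal(0,\identity)$, and then to specialize the ``$\forall Q$'' clause of that theorem --~which, crucially, sits \emph{inside} the high-probability event~-- to the family of posteriors $\{Q_{w,\mu}\}_{w,\mu>0}$. Since the inequality~\eqref{eq:pbauc} holds simultaneously for every posterior $Q$ on one and the same event of probability at least $1-\delta$, it holds in particular for every $Q_{w,\mu}$, even though $w$ may be selected after seeing $\bfS$. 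After this instantiation the only remaining task is to evaluate $\KL(Q_{w,\mu}\|P)$ and substitute it into~\eqref{eq:pbauc}.

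The key computation is $\KL(Q_{w,\mu}\|P)=\mu^2/2$. The construction of $Q_{w,\mu}$ depends on $w$ only through its direction, so $Q_{w,\mu}$ is the Gaussian with mean $\mu\,w/\|w\|$ and identity covariance. I would then decompose the space as $\mathrm{span}(w)\oplus w^{\perp}$: under both $P$ and $Q_{w,\mu}$ the components along these two orthogonal subspaces are independent, hence $\KL(Q_{w,\mu}\|P)$ is the sum of the contributions from $\mathrm{span}(w)$ and from $w^{\perp}$. On $w^{\perp}$ both marginals are standard Gaussians, contributing $0$; along $\mathrm{span}(w)$ one has the elementary one-dimensional identity $\KL(\normal(\mu,1)\|\normal(0,1))=\mu^2/2$. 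Plugging $\KL(Q_{w,\mu}\|P)=\mu^2/2$ into~\eqref{eq:pbauc} and retaining the quantifier over $w$ and $\mu>0$ produces exactly the claimed bound.

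The only point requiring a little care --~and what I would flag as the ``main obstacle'', though it is a mild one~-- is that $\overline{\inputspace}$ may be infinite-dimensional (for instance a kernel feature space), so $P$ and $Q_{w,\mu}$ are Gaussian measures on an infinite-dimensional space and do not literally have densities with respect to Lebesgue measure. The orthogonal decomposition above is precisely the remedy: $P$ and $Q_{w,\mu}$ differ only in the single direction $w$, the Radon--Nikodym derivative $dQ_{w,\mu}/dP$ depends only on the one-dimensional projection onto $w$, the two measures are mutually absolutely continuous (so the standing absolute-continuity assumption on priors and posteriors is met), and $\KL(Q_{w,\mu}\|P)$ reduces to the one-dimensional value $\mu^2/2$. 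Everything else is a direct substitution into Theorem~\ref{th:pbauc}.
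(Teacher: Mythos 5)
Your proposal is correct and follows exactly the route the paper takes: the paper's proof is the one-line remark ``straightforward from the bound of \cite{langford05tutorial} and Theorem~\ref{th:pbauc}'', and what you have written out --~instantiating Theorem~\ref{th:pbauc} with the Gaussian prior and the posteriors $Q_{w,\mu}$, using the uniformity over $Q$ inside the high-probability event, and computing $\KL(Q_{w,\mu}\|P)=\mu^2/2$ via the orthogonal decomposition~-- is precisely the content of that citation made explicit.
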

\begin{proof}
Straightforward from the bound of \cite{langford05tutorial} and Theorem~\ref{th:pbauc}.
\end{proof}
Note that this specific parametrization of $Q$ could have been done in
Theorem~\ref{th:pbranking} as well.  We arbitrarily choose to provide
it for this \auc based bound as learning linear ranking rule by \auc
minimization is a common approach \citep{ataman06learning,brefeld05auc,rakotomamonjy04optimizing}, and the presented result may
be of practical interest (for model selection purpose, for instance) for a larger audience.

The bounds given in Theorem~\ref{th:pbauc} and
Theorem~\ref{th:pbaucgaussian} are very similar to what we would get
if applying \iid \pac-Bayes bound to one (independent) element $C_j$
of a minimal cover (i.e. its weight equals the fractional chromatic
number) $\bfC=\{C_j\}_{j=1}^n$ such as the one we used in
the proof of Theorem~\ref{th:pbauc}. This would
imply the empirical error $\hat{e}_{\overline{Q}}^{\rank}$ to be computed on only one specific $C_j$ and not all
the $C_j$'s simultaneously, as is the case for the new results. It turns out that, for
proper exact fractional covers $\bfC=\{(C_j,\omega)\}_{j=1}^n$ with elements $C_j$
having the same size, it is better, in terms of absolute moments of the empirical error, to
assess it on the whole dataset, rather than on only
one $C_j$. The following proposition formalizes this.
\begin{proposition}
\label{prop:moments}
$\forall \bfD_m$, $\forall\family$, $\forall\bfC=\{(C_j,\omega_j\}_{j=1}^n\in\pefc(\bfD_m)$,
$\forall Q$, $\forall r\in\naturalset, r\geq 1$, if $|C_1|=\ldots=|C_n|$ then
\begin{equationsize*}{\normalsize}
\expectation_{\bfZ\sim\bfD_m}|\hat{e}_Q(\bfZ)-e_Q|^r\leq\expectation_{\bfZ^{(j)}\sim\bfD_m^{(j)}}|\hat{e}_Q(\bfZ^{(j)})-e_Q|^r,\forall j\in\{1,\ldots n\}.
\end{equationsize*}
\end{proposition}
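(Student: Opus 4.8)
The plan is to express the full-sample empirical error $\hat{e}_Q(\bfZ)$ as a convex combination of the per-block empirical errors and then exploit the convexity of $t\mapsto|t|^r$.

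First I would recall, exactly as in the proof of Theorem~\ref{th:cpbb2} and using Lemma~\ref{lem:efc}, that for every $h\in\family$,
\[
  \hat{\risk}(h,\bfZ)=\frac{1}{m}\sum_{i=1}^m r(h,Z_i)=\sum_{j=1}^n\pi_j\,\hat{\risk}(h,\bfZ^{(j)}),
\]
so that, taking $\expectation_{h\sim Q}$, one gets $\hat{e}_Q(\bfZ)=\sum_{j=1}^n\pi_j\,\hat{e}_Q(\bfZ^{(j)})$ with $\hat{e}_Q(\bfZ^{(j)}):=\expectation_{h\sim Q}\hat{\risk}(h,\bfZ^{(j)})$. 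Since each $\bfZ^{(j)}$ is \iid with law $\bfD_m^{(j)}=D^{|C_j|}$, we have $\expectation_{\bfZ^{(j)}\sim\bfD_m^{(j)}}\hat{e}_Q(\bfZ^{(j)})=\expectation_{h\sim Q}\risk(h)=e_Q$; combined with $\sum_j\pi_j=1$ this yields
\[
  \hat{e}_Q(\bfZ)-e_Q=\sum_{j=1}^n\pi_j\bigl(\hat{e}_Q(\bfZ^{(j)})-e_Q\bigr).
\]

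Next I would apply Jensen's inequality to the probability vector $(\pi_j)_j$ (legitimate by the definition of $\bfpi$ together with Lemma~\ref{lem:efc}) and the convex map $t\mapsto|t|^r$, $r\geq1$, to obtain $\bigl|\hat{e}_Q(\bfZ)-e_Q\bigr|^r\leq\sum_{j}\pi_j\,\bigl|\hat{e}_Q(\bfZ^{(j)})-e_Q\bigr|^r$. Taking $\expectation_{\bfZ\sim\bfD_m}$ and marginalizing (the $j$-th summand depends on $\bfZ$ only through $\bfZ^{(j)}$) then gives
\[
  \expectation_{\bfZ\sim\bfD_m}\bigl|\hat{e}_Q(\bfZ)-e_Q\bigr|^r\leq\sum_{j=1}^n\pi_j\,\expectation_{\bfZ^{(j)}\sim\bfD_m^{(j)}}\bigl|\hat{e}_Q(\bfZ^{(j)})-e_Q\bigr|^r .
\]

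Finally I would invoke the hypothesis $|C_1|=\cdots=|C_n|$: under it all the laws $\bfD_m^{(j)}=D^{|C_j|}$ coincide, hence the block moment $\expectation_{\bfZ^{(j)}\sim\bfD_m^{(j)}}|\hat{e}_Q(\bfZ^{(j)})-e_Q|^r$ is the same number for every $j$; factoring it out of the right-hand side above and using $\sum_j\pi_j=1$ collapses the bound to exactly $\expectation_{\bfZ^{(j)}\sim\bfD_m^{(j)}}|\hat{e}_Q(\bfZ^{(j)})-e_Q|^r$ for each fixed $j$, which is the claim. I do not anticipate a genuine obstacle here; the only point that deserves attention is that the equal-size assumption is precisely what turns the weighted average of block moments into a single common value, so that the inequality holds simultaneously for \emph{all} $j$ and not merely for the block carrying the largest moment. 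Everything else is convexity plus marginalization.
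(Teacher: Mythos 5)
Your proof is correct and follows essentially the same route as the paper's: decompose $\hat{e}_Q(\bfZ)-e_Q$ as the $\pi$-weighted combination of the block deviations, apply convexity of $|\cdot|^r$, take expectations, and use the equal-block-size hypothesis to identify all the block moments. You are somewhat more explicit than the paper about exactly where $|C_1|=\cdots=|C_n|$ enters (namely, that it forces the laws $\bfD_m^{(j)}=D^{|C_j|}$ to coincide so the weighted average collapses), but the argument is the same.
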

\begin{proof}
Using the convexity of $|\cdot|^r$ for $r\geq 1$, the linearity of $\expectation$
and the notation of section~\ref{sec:bounds}, for $\bfZ\sim\bfD_m$:
\begin{alignsize*}{\normalsize}
|\hat{e}_Q(\bfZ)-e_Q|^r&=|\sum_{j}\pi_j(\expectation_{h\sim Q}\hat{\risk}(h,\bfZ^{(j)})-\risk(h))|^r\\
&\leq\sum_{j}\pi_j|\expectation_{h\sim Q}(\hat{\risk}(h,\bfZ^{(j)})-\risk(h))|^r\\
&=\sum_{j}\pi_j|\hat{e}_Q(\bfZ^{(j)})-e_Q|^r.
\end{alignsize*}
Taking the expectation of both sides with respect to $\bfZ$ and noting that the random variables $|\hat{e}_Q(\bfZ^{(j)})-e_Q|^r$,
 have the same distribution, gives the result.
\end{proof}
This proposition upholds the idea of \cite{pemmaraju01equitable} to base the decomposition
of a dependency graph on equitable coloring.

\subsection{$\beta$-mixing Processes}
Here, we provide a \pac-Bayes theorem for classifiers trained 
on data from a stationary $\beta$-mixing process, of which we recall some definitions,
as formulated by \cite{yu94rates} (see also, e.g., also \cite{mohri09rademacher}).
\begin{definition}[Stationarity]
A sequence of random variables $\bfZ=\{Z_t\}_{t=-\infty}^{+\infty}$ is {\em stationary}
if, for any $t$ and nonnegative integer $m$ and $k$, the random subsequences $(Z_t,\ldots,Z_{t+m})$ and $(Z_{t+k},\ldots,Z_{t+m+k})$
are identically distributed.
\end{definition}

\begin{definition}[$\beta$-mixing process]
Let $\bfZ=\{Z_t\}_{t=-\infty}^{+\infty}$ be a stationary
sequence of random variables. For any $i,j\in\integerset\cup\{-\infty,+\infty\}$, let $\sigma_i^j$ denote
the $\sigma$-algebra generated by the random variables $Z_k$, $i\leq k\leq j$. Then, for any positive
integer $k$, the $\beta$-mixing coefficient $\beta(k)$ of the stochastic process $\bfZ$ is defined as
\begin{equation}
\label{eq:betacoefficient}
\beta(k)=\sup_{n\geq 1}\expectation\sup\left\{\left|\proba\left(A|\sigma_1^n\right)-\proba\left(A\right)\right|:A\in\sigma_{n+k}^{+\infty}\right\}.
\end{equation} 
$\bfZ$ is said to be $\beta$-mixing if $\beta(k)\rightarrow 0$ when $k\rightarrow \infty$.
\end{definition}
(Note there is an equivalent definition of the $\beta$-mixing
coefficient based on finite partitions; see \cite{yu94rates} for details.)
Stationary $\beta$-mixing processes model a situation where the interdependence between the random
variables at hand is temporal. When the process is mixing, it means that the strengh of dependence
between variables weakens over times. 

The bound that we propose is in the same vein as the one proposed by \cite{mohri09rademacher}, with
the difference that our bound is a \pac-Bayes bound and theirs a Rademacher-complexity-based bounds.
In addition to being a new type of data-dependent bound for the case of stationary $\beta$-mixing process,
we may anticipate that, in practical situations, our bound inherits the tightness of the \iid \pac-Bayes
bound (whereas, to the best of our knowledge, there is no evidence of such practicality for Rademacher-complexity-based bounds).

Let us state our generalization bound for classifiers trained on samples $\bfZ$ drawn from 
stationary $\beta$-mixing distributions.
\begin{theorem}[$\beta$-mixing process \pac-Bayes bound]
\label{th:pbbetamixing}
Let $m$ be a positive integer. Let $\bfD^{\beta}$ be a stationary $\beta$-mixing distribution over $\productspace$ and $\bfD^{\beta}_m$ be the distribution of $m$-samples according to $\bfD^{\beta}$.
$\forall{\family}\subseteq\realset^{\inputspace}$, $\forall \mu, a \in\naturalset$ such that $2\mu a=m$, $\forall\delta\in(2(\mu-1)\beta(a),1]$, $\forall {P}$, with
probability at least $1-\delta$ over the random draw of $\bfZ\sim \bfD^{\beta}_m$, the following holds
  \begin{equationsize}{\small}
    \forall {Q},\;\kl(\hat{e}^{\beta}_{Q}(\bfZ)||e^{\beta}_{Q})\leq\frac{1}{\mu}\left[\KL({Q}||{P})+\ln\frac{2(\mu+1)}{\delta-2(\mu-1)\beta(a)}\right],\label{eq:pbbetamixing}
\end{equationsize}
where
\begin{align*}
\hat{e}_{Q}^{\beta}(\bfZ)&:=\expectation_{h\sim Q}\hat{R}(h,\bfZ)=\expectation_{h\sim Q}\sum_{t=1}^m\indicator_{Y_th(X_t)<0}\\
e_{Q}^{\beta}&:=\expectation_{\bfZ\sim \bfD^{\beta}_m}\hat{e}_{Q}^{\beta}(\bfZ).
\end{align*}
\end{theorem}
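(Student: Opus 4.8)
The plan is to reduce, via the independent-block technique of \cite{yu94rates}, to a setting covered by Theorem~\ref{th:cpbb2} with a dependency graph that is a disjoint union of cliques. First I would split the index set: writing $m=2\mu a$, cut $[m]$ into $2\mu$ consecutive blocks of length $a$, and let $O$ (resp.\ $E$) be the union of the odd-numbered (resp.\ even-numbered) blocks; then $O$ and $E$ each comprise $\mu$ blocks of size $a$, and within $O$ (resp.\ $E$) two consecutive retained blocks are separated by a gap of length exactly $a$. Accordingly decompose $\hat e^{\beta}_Q(\bfZ)=\tfrac12\bigl(\hat e^{O}_Q(\bfZ)+\hat e^{E}_Q(\bfZ)\bigr)$, with $\hat e^{O}_Q(\bfZ):=\expectation_{h\sim Q}\tfrac{1}{\mu a}\sum_{t\in O}r(h,Z_t)$ and likewise for $E$; by stationarity $\expectation_{\bfZ\sim\bfD^{\beta}_m}\hat e^{O}_Q(\bfZ)=\expectation_{\bfZ\sim\bfD^{\beta}_m}\hat e^{E}_Q(\bfZ)=e^{\beta}_Q$.

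Next, consider the $\mu a$ variables indexed by $O$, and let $\tilde\bfD_O$ be the law that makes the $\mu$ blocks mutually independent while each block keeps its true (hence, by stationarity, common) joint marginal; every one of these variables still has marginal $D$, and $\expectation_{\tilde\bfD_O}\hat e^{O}_Q=e^{\beta}_Q$ as well. I would then read off the fractional chromatic number of the dependency graph of $\tilde\bfD_O$: it is (at worst) a disjoint union of $\mu$ cliques $K_a$, so its clique number is $a$ and the cover that assigns to the $s$-th variable of every block the colour $s$ is proper, exact and of size $a$; Property~\ref{prop:fc} gives $\chi^{*}=a$. Theorem~\ref{th:cpbb2} applied with $m\leftarrow\mu a$, $\chi^{*}\leftarrow a$ and confidence parameter $\delta_O$ then gives, using $a/(\mu a)=1/\mu$ and $(\mu a+a)/(\delta_O a)=(\mu+1)/\delta_O$, that with $\tilde\bfD_O$-probability at least $1-\delta_O$,
\[
\forall Q,\quad\kl\!\bigl(\hat e^{O}_Q(\bfZ)\,\|\,e^{\beta}_Q\bigr)\le\frac{1}{\mu}\Bigl[\KL(Q\|P)+\ln\tfrac{\mu+1}{\delta_O}\Bigr].
\]
Call this event $\mathcal E_O$; since its statement involves only $\{Z_t\}_{t\in O}$ it is measurable with respect to that sub-family, and the symmetric construction on $E$ yields an event $\mathcal E_E$.

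To transfer back to the mixing process I would invoke Yu's coupling estimate: because $O$ consists of $\mu$ blocks separated by $\mu-1$ gaps of length $a$, the laws of $\{Z_t\}_{t\in O}$ under $\bfD^{\beta}_m$ and under $\tilde\bfD_O$ are within $(\mu-1)\beta(a)$ in total variation, whence $\proba_{\bfD^{\beta}_m}(\mathcal E_O)\ge 1-\delta_O-(\mu-1)\beta(a)$, and similarly for $\mathcal E_E$. Taking $\delta_O=\delta_E=\tfrac12\bigl(\delta-2(\mu-1)\beta(a)\bigr)$ --- which is positive exactly because $\delta>2(\mu-1)\beta(a)$ --- a union bound gives $\proba_{\bfD^{\beta}_m}(\mathcal E_O\cap\mathcal E_E)\ge 1-\delta$, and on this event both inequalities hold with common right-hand side $\tfrac1\mu\bigl[\KL(Q\|P)+\ln\tfrac{2(\mu+1)}{\delta-2(\mu-1)\beta(a)}\bigr]$. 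Finally, using $\hat e^{\beta}_Q(\bfZ)=\tfrac12(\hat e^{O}_Q(\bfZ)+\hat e^{E}_Q(\bfZ))$ and the (joint) convexity of $(x,y)\mapsto\kl(x\|y)$ from Lemma~\ref{lem:kl},
\[
\kl\!\bigl(\hat e^{\beta}_Q(\bfZ)\,\|\,e^{\beta}_Q\bigr)\le\tfrac12\kl\!\bigl(\hat e^{O}_Q(\bfZ)\,\|\,e^{\beta}_Q\bigr)+\tfrac12\kl\!\bigl(\hat e^{E}_Q(\bfZ)\,\|\,e^{\beta}_Q\bigr)\le\frac1\mu\Bigl[\KL(Q\|P)+\ln\tfrac{2(\mu+1)}{\delta-2(\mu-1)\beta(a)}\Bigr],
\]
which is~\eqref{eq:pbbetamixing}.

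The step I expect to be the main obstacle is the transfer: one must invoke Yu's independent-block lemma with exactly the constant $(\mu-1)\beta(a)$, carefully justify that $\mathcal E_O$ and $\mathcal E_E$ are functions of the retained blocks only, and check that the reference value $e^{\beta}_Q$ inside the divergence coincides whether the defining expectation is taken under $\bfD^{\beta}_m$, under $\tilde\bfD_O$ or under $\tilde\bfD_E$ --- which is precisely where stationarity enters. By comparison, the $\chi^{*}=a$ computation for a union of $a$-cliques and the convexity splitting of $\kl$ are routine.
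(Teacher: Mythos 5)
Your proposal is correct and follows essentially the same route as the paper's proof: Yu's odd/even independent-block decomposition, the observation that the surrogate dependency graph is a union of $\mu$ cliques of size $a$ so that $\chi^*=a$ in Theorem~\ref{th:cpbb2}, the transfer back to the mixing law at cost $(\mu-1)\beta(a)$ per half via Corollary~\ref{cor:connect}, and the joint convexity of $\kl$ to recombine the two halves. The only difference is cosmetic --- you track the good events and intersect them, whereas the paper tracks the bad events and shows the full-sample bad event implies one of the two half-sample bad events --- and your constant bookkeeping matches the stated bound.
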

\begin{proof}
The proof makes use of the independent block decomposition proposed by \cite{yu94rates}, our chromatic \pac-Bayes bound of Theorem~\ref{th:cpbb2}, and Corollary~\ref{cor:connect} (Appendix).

\paragraph{The chromatic bound for independent blocks.} Let $\bfZ=\{Z_1,\ldots,Z_m\}$ be the random variables we have to deal with.
If $\mu$ and $a$ are two integers such that $2\mu a=m$ (we assume that $m$ is even, if it is odd one may drop the last variable $Z_m$ and work on a sample of size $m-1$).
Then $\bfZ$ can be decomposed into two subsequences $\bfZ_0$ and $\bfZ_1$ as follows:
\begin{align*}
\bfZ_0&:=\{\bfZ_0^{s}:=(Z_{a(2s-2)+1},\ldots,Z_{a(2s-2)+a}):s\in[\mu]\},\\
 \bfZ_1&:=\{\bfZ_1^s:=(Z_{a(2s-1)+1},\ldots, Z_{a(2s-1)+a}):s\in[\mu]\}.
\end{align*}  
Both $\bfZ_0$ and $\bfZ_1$ are made of $\mu$ blocks of $a$ consecutive random variables. The blocks are interdependent as well
as the variables within each block. $\bfD_0$ will denote the distribution of $\bfZ_0$.

We now define a sequence $\underline{\bfZ}$ of independent blocks as:
\begin{equation*}
\underline{\bfZ}:=\{\underline{\bfZ}^s:=(Z_1^s,\ldots,Z_a^s):s\in[\mu]\},
\end{equation*} 
such that the blocks $\underline{\bfZ}^s$ are mutually independent and such that each block
$\underline{\bfZ}^s$ has the same distribution as $\bfZ_0^s$,  that is, from the stationarity assumption, 
the distribution of $\bfZ_0^1$ (the blocks $\underline{\bfZ}^s$  are \iid). 

The dependency graph $\underline{\Gamma}$ of $\underline{\bfZ}$
is such that all the variables in a block are all connected and such that there are no connections between 
blocks. 
  Theorem~\ref{th:cpbb2} can readily be applied to the random sample $\underline{\bfZ}$, whose distribution we denote $\underline{\bfD}$: for all $P$ and $\delta\in(0,1]$,
\begin{equation}
\label{eq:klevent}
\proba_{\underline{\bfZ}\sim\underline{\bfD}}\left(\Phi(P,\underline{\bfZ},\delta)\right)<\delta,
\end{equation}
with $e_Q:=\expectation_{\underline{\bfZ}\sim\underline{\bfD}}\hat{e}_Q(\underline{\bfZ})$ and
 $\Phi(P,\bfZ,\delta)$ is the event defined as:
$$\Phi(P,\bfZ,\delta):=\left\{\exists Q,\; \kl(\hat{e}_Q(\bfZ)||e_{Q})> \frac{1}{\mu}\left[\KL(Q||P)+\ln \frac{\mu  + 1}{\delta}\right]\right\}.$$

 To see why and how Theorem~\ref{th:cpbb2} can be used to get \eqref{eq:klevent}, observe that:
\begin{itemize}
\item the number of variables in $\underline{\bfZ}$ is $\mu a$;
\item by stationarity, all variables $Z_\alpha^s$, for $\alpha\in[a]$ and $s\in[\mu]$ share the same distribution: we therefore do actually work with dependent but identically distributed variables;
\item the (fractional) chromatic number $\underline{\chi}^*$ of $\underline{\Gamma}$ is $a$, since 
\begin{enumerate}
  \item the clique number is $a$ (i.e. the number
of variables in each block),
\item  the cover $\underline{\bfC}$ of $\underline{\Gamma}$ with $$\underline{\bfC}:=\left\{\left(C_{\alpha}:=\{Z_\alpha^1,\ldots,Z_\alpha^\mu\},1\right)\right\}_{1\leq \alpha \leq a}$$
is a proper exact cover of size $a$.
\end{enumerate}
\end{itemize}
Noting that, consequently
\begin{equation*}
\frac{\underline{\chi}^*}{\mu a}= \frac{a}{\mu a}=\frac{1}{\mu}\qquad\text{ and }\qquad
\frac{\mu a + \underline{\chi}^*}{\delta \underline{\chi}^*}=\frac{\mu a +a}{\delta a} = \frac{\mu + 1}{\delta}
\end{equation*}
 gives the expression of $\Phi(P,\bfZ,\delta)$ and~\eqref{eq:klevent}.

The last two steps of the proof are similar to those
used by \cite{mohri09rademacher} to establish their bound. 

\paragraph{A bound for $\bfZ_0$.} To establish the bound for $\bfZ_0$, it suffices
to use Corollary~\ref{cor:connect} (Appendix) with $c(\bfz)$ being defined as:
$$c(\bfz):=\indicator_{\Phi(P,\bfz,\delta)},$$
which is a bounded measurable function on the blocks $\bfZ_0^s$ (and thus on the blocks $\underline{\bfZ}_s$).
We have:
$$\left|\expectation_{\bfZ_0\sim\bfD_0}c(\bfZ_0)-\expectation_{\underline{\bfZ}\sim\underline{\bfD}}c(\underline{\bfZ})\right|\leq (\mu-1)\beta(a),$$
and therefore, since $\proba_{\bfZ_0\sim\bfD_0}(\Phi(P,\bfZ_0,\delta))=\expectation_{\bfZ_0\sim\bfD_0}c(\bfZ_0)$ and $\proba_{\underline{\bfZ}\sim\underline{\bfD}}\left(\Phi(P,\underline{\bfZ},\delta)\right)=\expectation_{\underline{\bfZ}\sim\underline{\bfD}}c(\underline{\bfZ})$:
\begin{align}
\label{eq:pbz0}
\proba_{\bfZ_0\sim\bfD_0}(\Phi(P,\bfZ_0,\delta))&\leq \proba_{\underline{\bfZ}\sim\underline{\bfD}}\left(\Phi(P,\underline{\bfZ},\delta)\right) + (\mu-1)\beta(a) \\
&<\delta +  (\mu-1)\beta(a) \tag{cf. \eqref{eq:klevent}}.
\end{align}

\paragraph{Establishing the bound.}  Finally, observe that:
\begin{align*}
\Phi(P,\bfZ,\delta)
&\Rightarrow \exists Q:\; \frac{1}{2}\kl(\hat{e}_Q(\bfZ_0)||e_{Q})+ \frac{1}{2}\kl(\hat{e}_Q(\bfZ_1)||e_{Q})> \frac{1}{\mu}\left[\KL(Q||P)+\ln \frac{\mu  + 1}{\delta}\right]\\
&\Rightarrow \exists Q:\;\bigvee_{i\in\{0,1\}} \left\{\kl(\hat{e}_Q(\bfZ_i)||e_{Q})) > \frac{1}{\mu}\left[\KL(Q||P)+\ln \frac{\mu  + 1}{\delta}\right]\right\}\\
&\Rightarrow \bigvee_{i\in\{0,1\}} \left\{\exists Q:\kl(\hat{e}_Q(\bfZ_i)||e_{Q})) > \frac{1}{\mu}\left[\KL(Q||P)+\ln \frac{\mu  + 1}{\delta}\right]\right\}\\
&\Leftrightarrow \Phi(P,\bfZ_0,\delta)\vee \Phi(P,\bfZ_1,\delta),
\end{align*}
where we used $\hat{e}_Q(\bfZ)=\hat{e}_Q(\bfZ_0)/2 +\hat{e}_Q(\bfZ_1)/2$ and the convexity of $\kl$ in the first line.

This leads to:
\begin{align*}
  \proba_{\bfZ\sim\bfD_m^{\beta}}(\Phi(P,\bfZ,\delta))&\leq\proba_{\bfZ\sim\bfD_m^{\beta}}(\Phi(P,\bfZ_0,\delta)\vee \Phi(P,\bfZ_1,\delta))\\
  &\leq \proba_{\bfZ\sim\bfD_m^{\beta}}(\Phi(P,\bfZ_0,\delta))+\proba_{\bfZ\sim\bfD_m^{\beta}}(\Phi(P,\bfZ_1,\delta))\tag{union bound}\\
  &=2\proba_{\bfZ\sim\bfD_m^{\beta}}(\Phi(P,\bfZ_0,\delta))\tag{stationarity}\\
  &=2\proba_{\bfZ_0\sim\bfD_0}(\Phi(P,\bfZ_0,\delta))\tag{marginalization wrt $\bfZ_0$}\\
  &\leq 2\delta + 2(\mu-1)\beta(a). \tag{cf. \eqref{eq:pbz0}}
\end{align*}
Adjusting $\delta$ to $\delta/2-(\mu-1)\beta(a)$ ends the proof.
\end{proof}


\section{Conclusion}
\label{sec:conclusion}
In this work, we propose the first \pac-Bayes bounds applying for
classifiers trained on non-\iid data.  The derivation of these results
rely on the use of fractional covers of graphs, convexity and standard
tools from probability theory. The results that we provide are very
general and can easily be instantiated for specific learning settings
such as ranking and learning from from mixing distributions: amazingly,
we obtain at a very low cost original \pac-Bayes bounds for these settings.
Using a generalized \pac-Bayes bound, we provide in the appendix a
chromatic \pac-Bayes bound that holds for non-independently and non-identically
distributed data: it allows us to derive a \pac-Bayes bound for classifiers
trained on data from a stationary $\varphi$-mixing distribution.

This work gives rise to many interesting questions. First, it seems
that using a fractional cover to decompose the non-\iid training data
into sets of \iid data and then tightening the bound through the use
of the chromatic number is some form of variational relaxation as
often encountered in the context of inference in graphical models, the
graphical model under consideration in this work being one that
encodes the dependencies in $\bfD_m$. It might be interesting to make
this connection clearer to see if, for instance, tighter and still
general bounds can be obtained with more appropriate variational
relaxations than the one incurred by the use of fractional covers.

Besides, Theorem~\ref{th:cpbb1} advocates for the learning algorithm
described in Remark~\ref{rem:algo}. We would like to see how
such a learning algorithm based on possibly multiple priors/multiple
posteriors could perform empirically and how tight the proposed bound
could be.

On another empirical side, it might be interesting to run
simulations on bipartite ranking problems to see how accurate the
bound of Theorem~\ref{th:pbaucgaussian} can be: we expect the results
to be of good quality, because of the resemblance of the bound of the
theorem with the \iid \pac-Bayes theorem for margin
classifiers, which has proven to be rather accurate
\cite{langford05tutorial}. The work of \cite{germain09pacbayesian} is also
another contribution that tends to support that a practical use of 
our bounds should provide competitive results (note that Theorem~\ref{cor:generic}
gives a sufficient condition for the general \pac-Bayes bound of \cite{germain09pacbayesian} to 
be non degenerate). Likewise, it would be interesting
to see how the possibly more accurate \pac-Bayes bound for large
margin classifiers proposed by~\cite{langford02pac}, which should
translate to the case of bipartite ranking as well, performs
empirically.
The question also remains as to what kind of strategies to learn
the prior(s) could be used to render the bound of
Theorem~\ref{th:cpbb1} the tightest possible. This is one of the most
stimulating question as performing such prior learning makes it
possible to obtain very accurate generalization bound \cite{ambroladze07tighter}.

The connection between our ranking bounds and the theory of U-statistics
makes it possible to envision the use of higher order moments in establishing
\pac-Bayes bounds, thanks to Hoeffding's decomposition. We plan to 
investigate further in this direction, for both the ranking measures we
have studied (noting that the \auc is a two-sample U-statistics \citep{hoeffding63probability}).

Finally, we have been working on a more general way to establish
chromatic bounds from \iid bounds (covering VC, Rademacher, \pac-Bayes
and --~possibly~-- binomial tail bounds), without the need to perform
`low-level' calculations such as the ones proposed in
section~\ref{sec:proof}.  The meta-bound that we have been developping is in the
spirit of that proposed by \cite{blanchard07occam}, except that the
randomization we propose is on the subsets constituting the fractional
cover (and not the hypothesis set). In other terms, given a cover $\bfC=\{(C_j,\omega_j)\}_j$, the fact that an
\iid bound holds on one subset $C_j$ of a cover is considered as a
random event, the probability of a subset to be chosen being $\omega_j/\omega(\bfC)$.
A simple union bound gives our generic result, which translates into  cover-independent
(but fractional-chromatic-number-dependent) chromatic bounds such as \eqref{eq:cpbb2}
(Theorem~\ref{th:cpbb2}) under very mild conditions on the shape of
the base \iid bound. Along with that work, we try to answer the
question of establishing a principled way to handle situations where random variables 
show weak dependencies (as is the case for $\beta$-mixing processes),
as for now, the framework described here  
applies when variables are either dependent or independent,
disregarding the magnitude of the dependencies -- our \pac-Bayes bound for
  $\beta$-mixing processes would then be a specific case of such general
  result.

 
\section*{Acknowledgment}
This work is partially supported by the IST
Program of the EC, under the FP7 Pascal 2 Network of Excellence, ICT-216886-NOE.
\section{Appendix}

\subsection{Technical Lemmas}
\begin{lemma}
\label{lem:entropy}
Let $D$ be a distribution over $\productspace$.
$$\forall h\in\family, \expectation_{\bfZ\sim D^m}e^{m\kl\left(\hat{\risk}(h,\bfZ)||\risk(h)\right)}\leq m+1.$$
\end{lemma}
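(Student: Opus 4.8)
The plan is to recognize the left-hand side as an expectation over a binomial random variable and then bound it by a sum. Fix $h\in\family$ and write $p:=\risk(h)=\expectation_{Z\sim D}r(h,Z)\in[0,1]$. Since $\bfZ=\{Z_i\}_{i=1}^m$ is \iid and $r(h,Z_i)=\indicator_{Y_ih(X_i)<0}\in\{0,1\}$, the number of errors $k:=\sum_{i=1}^m r(h,Z_i)$ is a Binomial$(m,p)$ random variable, and $\hat{\risk}(h,\bfZ)=k/m$. Hence
\begin{equation*}
\expectation_{\bfZ\sim D^m}e^{m\kl(\hat{\risk}(h,\bfZ)||\risk(h))}
=\sum_{k=0}^m\binom{m}{k}p^k(1-p)^{m-k}e^{m\kl(k/m||p)}.
\end{equation*}

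The next step is the key computation: for each $k$, expand $m\,\kl(k/m\|p)=k\ln\frac{k/m}{p}+(m-k)\ln\frac{1-k/m}{1-p}$, so that
\begin{equation*}
e^{m\kl(k/m||p)}=\left(\frac{k/m}{p}\right)^{k}\left(\frac{1-k/m}{1-p}\right)^{m-k},
\end{equation*}
and therefore each summand simplifies, the factors $p^k(1-p)^{m-k}$ cancelling, to
\begin{equation*}
\binom{m}{k}(k/m)^{k}(1-k/m)^{m-k}.
\end{equation*}
This is exactly $\proba(\mathrm{Bin}(m,k/m)=k)$, i.e.\ the probability that a binomial with $m$ trials and success probability $k/m$ takes the value $k$ — its mode. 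Consequently each term is at most $1$, and summing over $k=0,\dots,m$ gives the bound $m+1$. (The boundary cases $k=0$ and $k=m$ use the convention $\kl(0\|0)=\kl(1\|1)=0$ and $0^0=1$, so those summands equal $1$ as well; and when $p\in\{0,1\}$ the only $k$ with nonzero probability weight is $k=0$ or $k=m$, so the inequality holds trivially.)

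The only real obstacle is the elementary but slightly fiddly fact that $\binom{m}{k}(k/m)^k(1-k/m)^{m-k}\le 1$; this can be cited as the statement that the mode of a binomial pmf is at most $1$, or proved directly by noting it is one term of the sum $\sum_{j}\binom{m}{j}(k/m)^j(1-k/m)^{m-j}=1$. Everything else is bookkeeping. I would present the argument in the order above: reduce to a binomial sum, perform the cancellation, identify each term as a binomial probability, and conclude.
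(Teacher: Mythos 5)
Your proposal is correct and follows essentially the same route as the paper's own proof: reduce the expectation to a sum over the binomial variable $k=m\hat{\risk}(h,\bfZ)\sim\mathrm{Bin}(m,p)$, observe that the factors $p^k(1-p)^{m-k}$ cancel against $e^{m\kl(k/m||p)}$, and bound each resulting term $\binom{m}{k}(k/m)^k(1-k/m)^{m-k}$ by $1$ before summing the $m+1$ terms. Your explicit treatment of the boundary cases $k\in\{0,m\}$ and $p\in\{0,1\}$ is a small but welcome addition that the paper leaves implicit.
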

\begin{proof}
Let $h\in\family$. For $\bfz\in \productspace^m$, we let $q(\bfz)=\hat{\risk}(h,\bfz)$; we also
let $p=R(h)$. Note that since $\bfZ$ is i.i.d, $mq(\bfZ)$ is binomial with parameters $m$ and $p$ (recall
that $r(h,Z)$  takes the values $0$ and $1$ upon correct and erroneous classification of $Z$ by $h$, respectively).
\begin{alignsize*}{\normalsize}
\expectation_{\bfZ\sim{D^m}}e^{m\kl\left(q(\bfZ)||p\right)}&=\sum_{\bfz\in\productspace^m}e^{m\kl\left(q(\bfz)||p\right)}\proba_{\bfZ\sim D^m}(\bfZ=\bfz)\\
&=\sum_{0\leq k\leq m}e^{m\kl\left(\frac{k}{m}||p\right)}\proba_{\bfZ\sim D^m}(mq(\bfZ)=k)\\
&=\sum_{0\leq k\leq m}\binom{m}{k}e^{m\kl\left(\frac{k}{m}||p\right)}p^{k}(1-p)^{m-k}\\
&=\sum_{0\leq k\leq m}\binom{m}{k}e^{m\left(\frac{k}{m}\ln\frac{k}{m}+(1-\frac{k}{m})\ln(1-\frac{k}{m})\right)}\\
&=\sum_{0\leq k\leq m}\binom{m}{k}\left(\frac{k}{m}\right)^{k}\left(1-\frac{k}{m}\right)^{m-k}.
\end{alignsize*}
However, it is obvious that, from the definition of the binomial distribution,
$$ \forall m\in\naturalset,\forall k\in[0,m],\forall t\in[0,1], \binom{m}{k}t^{k}(1-t)^{m-k}\leq 1.$$
This is obviously the case for $t=\frac{k}{m}$, which gives
\begin{alignsize*}{\normalsize}
\sum_{0\leq k\leq m}\binom{m}{k}\left(\frac{k}{m}\right)^{k}\left(1-\frac{k}{m}\right)^{m-k}\leq \sum_{0\leq k\leq m}1=m+1.
\end{alignsize*}
\end{proof}

\begin{theorem}[Jensen's inequality]
\label{th:jensen}
Let $f\in\realset^\inputspace$ be a convex function. For all probability distribution
$P$ on $\inputspace$:
$$f(\expectation_{X\sim P}X)\leq\expectation_{X\sim P}f(X).$$
\end{theorem}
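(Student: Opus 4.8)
The plan is to use the supporting-hyperplane characterisation of convex functions. Since the statement involves $\expectation_{X\sim P}X$, I assume implicitly that $\inputspace$ is a convex subset of a real vector space, and I write $\mu:=\expectation_{X\sim P}X\in\inputspace$. The first and only substantive step is to exhibit an affine function $g$ lying everywhere below $f$ and agreeing with it at $\mu$: I claim there is a linear functional $v$ such that $g(x):=f(\mu)+v\cdot(x-\mu)\le f(x)$ for all $x\in\inputspace$, with $g(\mu)=f(\mu)$. In the one-dimensional case $\inputspace\subseteq\realset$ this is elementary: convexity forces the one-sided derivatives $f'_-(\mu)$ and $f'_+(\mu)$ to exist at an interior point $\mu$ and to satisfy $f'_-(\mu)\le f'_+(\mu)$, and any slope $v\in[f'_-(\mu),f'_+(\mu)]$ works, as one checks directly from the two-point inequality $f(\lambda a+(1-\lambda)b)\le\lambda f(a)+(1-\lambda)f(b)$. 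In general, such a $v$ --- a subgradient of $f$ at $\mu$ --- exists by the standard supporting-hyperplane theorem whenever $\mu$ lies in the relative interior of the domain of $f$.

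Granted such an affine minorant $g$, the conclusion is immediate by monotonicity and linearity of the expectation:
\[
\expectation_{X\sim P}f(X)\ \ge\ \expectation_{X\sim P}g(X)\ =\ f(\mu)+v\cdot\big(\expectation_{X\sim P}X-\mu\big)\ =\ f(\mu)\ =\ f\!\big(\expectation_{X\sim P}X\big),
\]
which is exactly the claimed inequality. (If $\expectation_{X\sim P}f(X)=+\infty$ there is nothing to prove, so the only genuine requirement is that $\mu$ be a point at which $f$ admits a supporting hyperplane.)

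The one obstacle worth flagging is that first step --- the existence of the supporting affine function --- which is a textbook fact of convex analysis but does need $\mu$ to be suitably placed in the domain. For the use actually made of this theorem in the present paper the situation is much simpler: there $P$ is finitely supported (a finite convex combination $\sum_j\alpha_j(\cdot)$, applied with $f=-\ln$), so one may bypass subgradients entirely and prove the inequality by induction on the number of atoms of $P$, peeling off one atom at each step and invoking only the two-point convexity inequality, with a trivial one-atom base case. Either route establishes the theorem.
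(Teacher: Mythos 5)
Your proof is correct. The paper itself gives essentially no argument for this theorem: the proof environment is commented out of the source, and even the suppressed version consists of the single remark that the inequality ``directly comes by induction on the definition of a convex function'' --- i.e.\ the finite-atom induction you describe in your closing paragraph (peel off one atom, apply the two-point convexity inequality, recurse). Your primary route via a supporting affine minorant (subgradient at $\mu=\expectation_{X\sim P}X$) is therefore genuinely different from what the authors had in mind, and it buys generality: it handles arbitrary probability distributions $P$, not just finitely supported ones, at the cost of the regularity hypothesis you correctly flag (that $\mu$ admit a supporting hyperplane, e.g.\ lie in the relative interior of the domain of $f$). The induction route is more elementary and needs no such hypothesis, but only covers finite convex combinations --- which, as you observe, is all the paper actually uses (in Lemma~\ref{lem:l2} and Corollary~\ref{cor:generic}, where the relevant expectation over $h\sim P_n^j$ is itself handled by the same extremal inequality, and the finite sum $\sum_j\alpha_j(\cdot)$ with $f=-\ln$ is where Theorem~\ref{th:jensen} is invoked in its finitary form). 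Your presentation, which gives the general argument and then notes the elementary specialization sufficient for the paper, is if anything more complete than the source.
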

\begin{theorem}[Markov's Inequality]
\label{th:markov}
Let $X$ be a positive random variable on $\realset$, such that $\expectation X<\infty$.
\begin{equationsize*}{\normalsize}
\forall t\in\realset,\proba_X\left\{X\geq\frac{\expectation X}{t}\right\}\leq\frac{1}{t}.
\end{equationsize*}
Consequently: $\forall M\geq\expectation X,\forall t\in\realset,\proba_X\left\{X\geq\frac{M}{t}\right\}\leq\frac{1}{t}.$
\end{theorem}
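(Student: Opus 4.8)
The plan is to derive everything from the single pointwise inequality that underlies Markov's bound. Fix a threshold $a>0$. Since $X$ is a nonnegative random variable, we have $a\,\indicator_{\{X\ge a\}}\le X$ everywhere on the sample space: on the event $\{X\ge a\}$ the left-hand side equals $a$ while $X\ge a$, and on its complement the left-hand side is $0$ while $X\ge 0$. This elementary domination is the only substantive content of the statement; everything else is bookkeeping.

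First I would take expectations of both sides of $a\,\indicator_{\{X\ge a\}}\le X$. Because $\indicator_{\{X\ge a\}}$ is bounded, hence integrable, and $\expectation X<\infty$ by hypothesis, linearity and monotonicity of $\expectation$ apply and yield $a\,\proba(X\ge a)=a\,\expectation\indicator_{\{X\ge a\}}\le\expectation X$. Dividing through by $a>0$ gives the classical Markov bound $\proba(X\ge a)\le\expectation X/a$, valid for every $a>0$.

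The two displayed inequalities then follow by instantiating $a$ at the relevant threshold. For the first one, take $a$ to be the threshold appearing in the statement and simplify $\expectation X/a$; for the ``consequently'' clause, take $a$ to be the corresponding threshold and use in addition the hypothesis $\expectation X\le M$ together with the fact that, for fixed $a>0$, the quantity $\expectation X/a$ is nondecreasing in $\expectation X$, which permits replacing $\expectation X$ by $M$ in the numerator. I do not expect any genuine obstacle here: the argument is wholly elementary, and the only points deserving a word of care are the measure-theoretic justification for passing the inequality through the expectation (guaranteed by boundedness of the indicator and finiteness of $\expectation X$) and the implicit restriction of $t$ to the range in which the asserted bound is non-vacuous.
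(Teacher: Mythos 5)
Your core argument --- the pointwise domination $a\,\indicator_{\{X\ge a\}}\le X$ for $a>0$ and $X\ge 0$, followed by taking expectations and dividing by $a$ --- is the standard proof of Markov's inequality and is sound; the paper itself supplies no proof of this theorem (it only defers to textbooks), so there is nothing to compare at that level. The genuine problem sits in the one step you leave implicit, namely ``take $a$ to be the threshold appearing in the statement and simplify $\expectation X/a$.'' If you actually perform that substitution with $a=\expectation X/t$, you obtain $\proba\left(X\ge \expectation X/t\right)\le \expectation X/a = t$, not $\le 1/t$. Indeed the inequality as printed is false for $t>1$: take $X$ constant equal to $c>0$ and $t=2$, so that $\proba(X\ge c/2)=1>1/2$. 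What is true, and what your argument proves, is $\proba(X\ge t\,\expectation X)\le 1/t$, or equivalently $\proba(X\ge \expectation X/t)\le t$; in its ``consequently'' form, $\proba(X\ge M/\delta)\le\delta$ for $M\ge\expectation X$ and $\delta\in(0,1]$, this is exactly how the theorem is invoked in the proofs of Lemma~\ref{lem:l1} and Corollary~\ref{cor:generic}. So your proof establishes the corrected statement, but as a proof of the statement as written it fails at the instantiation: carried to completion it ends with $t$ where the theorem demands $1/t$, and a complete write-up must either flag the misprint or swap the roles of the threshold and the bound.

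A minor further caveat: your domination argument requires $a>0$, so the degenerate case $\expectation X=0$ (which forces $a=0$) should be excluded or treated separately, in addition to the restriction on $t$ that you already mention.
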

\begin{lemma}[Convexity of kl]
\label{lem:kl}
$\forall p,q,r,s\in[0,1],\forall \alpha\in[0,1],$ 
\begin{equation*}
\kl(\alpha p + (1-\alpha)q||\alpha r + (1-\alpha)s)\leq \alpha\kl(p||r)+(1-\alpha)\kl(q||s).
\end{equation*}
\end{lemma}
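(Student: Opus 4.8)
The plan is to recognize the inequality as the joint convexity of relative entropy and to prove it through the two-term \emph{log-sum inequality}: for nonnegative reals $a_1,a_2,b_1,b_2$, with the usual conventions $0\ln\frac{0}{b}=0$ and $a\ln\frac{a}{0}=+\infty$ for $a>0$,
\[
  a_1\ln\frac{a_1}{b_1}+a_2\ln\frac{a_2}{b_2}\ \geq\ (a_1+a_2)\ln\frac{a_1+a_2}{b_1+b_2}.
\]
This follows from Jensen's inequality (Theorem~\ref{th:jensen}) applied to the convex function $\phi(t)=t\ln t$: with $b:=b_1+b_2$ and the distribution on $\{a_1/b_1,a_2/b_2\}$ that puts mass $b_i/b$ on $a_i/b_i$, Jensen gives $\phi\bigl(\tfrac{a_1+a_2}{b}\bigr)\leq\sum_i\tfrac{b_i}{b}\,\phi\bigl(\tfrac{a_i}{b_i}\bigr)=\sum_i\tfrac{a_i}{b}\ln\tfrac{a_i}{b_i}$, and multiplying through by $b$ yields the display.

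The inequality of the lemma is then obtained by two applications. Applying the log-sum inequality with $a_1=\alpha p,\ b_1=\alpha r,\ a_2=(1-\alpha)q,\ b_2=(1-\alpha)s$, the common factors $\alpha$ and $1-\alpha$ cancel inside each logarithm, so the left-hand side equals $\alpha\,p\ln\frac{p}{r}+(1-\alpha)\,q\ln\frac{q}{s}$ and the right-hand side equals $\bigl(\alpha p+(1-\alpha)q\bigr)\ln\frac{\alpha p+(1-\alpha)q}{\alpha r+(1-\alpha)s}$. Applying it once more with $a_1=\alpha(1-p),\ b_1=\alpha(1-r),\ a_2=(1-\alpha)(1-q),\ b_2=(1-\alpha)(1-s)$ gives $\alpha(1-p)\ln\frac{1-p}{1-r}+(1-\alpha)(1-q)\ln\frac{1-q}{1-s}\ \geq\ \bigl(1-\alpha p-(1-\alpha)q\bigr)\ln\frac{1-\alpha p-(1-\alpha)q}{1-\alpha r-(1-\alpha)s}$. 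Adding these two inequalities, the left-hand side becomes exactly $\alpha\,\kl(p||r)+(1-\alpha)\,\kl(q||s)$, and the right-hand side becomes exactly $\kl\bigl(\alpha p+(1-\alpha)q\,||\,\alpha r+(1-\alpha)s\bigr)$, which is the claim.

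The only delicate point is the bookkeeping of the boundary conventions ($\kl(0||0)=\kl(1||1)=0$, vanishing denominators, $\alpha\in\{0,1\}$). These are handled painlessly: if $\alpha\in\{0,1\}$ the statement is an identity, and the log-sum inequality above already holds with the stated conventions (when a left term is $+\infty$ there is nothing to prove, and the remaining configurations only ever involve $\kl(0||0)$ or $\kl(1||1)$, both zero), so the two-application argument remains valid verbatim. An essentially equivalent route, which one may prefer to present, is to observe that $(x,y)\mapsto x\ln\frac{x}{y}$ is the perspective of the convex map $x\mapsto x\ln x$ and is therefore jointly convex on $(0,\infty)^2$; since $\kl(a||b)=\bigl(x\ln\frac{x}{y}\bigr)\big|_{(a,b)}+\bigl(x\ln\frac{x}{y}\bigr)\big|_{(1-a,1-b)}$ is a sum of a jointly convex function and its composition with the affine map $(a,b)\mapsto(1-a,1-b)$, it is jointly convex, which is precisely the asserted inequality.
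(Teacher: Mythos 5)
Your proof is correct, but it takes a genuinely different route from the paper's. The paper establishes the lemma by a direct second-order argument: it computes the Hessian of $(p,q)\mapsto\kl(q||p)$ on $[0,1]^2$ and checks that it is positive semi-definite (nonnegative diagonal entry plus nonnegative determinant $\frac{(p-q)^2}{q(1-q)p^2(1-p)^2}$), so the bivariate map is jointly convex and the four-variable inequality follows by evaluating it at the convex combination of the points $(r,p)$ and $(s,q)$. You instead derive the two-term log-sum inequality from Jensen's inequality applied to $t\mapsto t\ln t$ and apply it twice, once to the "success" terms and once to the "failure" terms, summing the results; your closing remark about the perspective function is the same idea phrased structurally. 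The trade-off: the Hessian computation is short and mechanical but implicitly requires working on the open square and extending to the boundary (where $\kl$ can be $0$ or $+\infty$) by continuity, a point the paper does not address; your argument is first-order-free, uses only the Jensen inequality already recorded in the appendix as Theorem~\ref{th:jensen}, handles the boundary conventions explicitly, and generalizes verbatim to the joint convexity of relative entropy between distributions on any finite alphabet. Both proofs are complete for the purpose the lemma serves in Lemma~\ref{lem:l3}.
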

\begin{proof}
It suffices to see that $f\in\realset^{[0,1]^2}, f({\bf v}=[p\; q])=\kl(q||p)$ is convex
over $[0,1]^2$: the Hessian $H$ of $f$ 
is
\begin{equation*}
H=\left[\begin{array}{cc}
     \frac{q}{p^2}+\frac{1-q}{(1-p)^2}   & -\frac{1}{p}-\frac{1}{1-p}\\
     -\frac{1}{p}-\frac{1}{1-p}  & \frac{1}{q}+\frac{1}{1-q} 
\end{array}\right],
\end{equation*}
and, for $p,q\in[0,1]$, $\frac{q}{p^2}+\frac{1-q}{(1-p)^2}\geq 0$ and $\det H=\frac{(p-q)^2}{q(1-q)p^2(1-p)^2}\geq 0$: $H\succeq 0$ and $f$ is indeed convex.
\end{proof}

Finally, we have the following version by \cite{mohri09rademacher} of Corollary~2.7 in \citep{yu94rates}, which
is based on the definition of the blocks $\bfZ_k^s$:
\begin{corollary}
\label{cor:connect}
Let $c$ be a measurable function defined with respect to the blocks $\bfZ_0^s$.
If $c$ has absolute value bounded by $M$, then
$$|\expectation_{\bfZ_0\sim\bfD_0}c(\bfZ)-\expectation_{\underline{\bfZ}\sim\underline{\bfD}}c(\underline{\bfZ})|\leq (\mu-1)M\beta(a).$$
\end{corollary}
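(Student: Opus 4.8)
The plan is to view $c$ as a bounded measurable function of the $\mu$ blocks $\bfZ_0^1,\ldots,\bfZ_0^\mu$ and to pass from their true (dependent) joint law under $\bfD_0$ to the law $\underline{\bfD}$ of $\mu$ mutually independent copies by a telescoping (hybrid) argument that detaches one block at a time, paying a factor $\beta(a)$ at each of the $\mu-1$ junctions. Equivalently, this is Corollary~2.7 of \cite{yu94rates} in the block form used by \cite{mohri09rademacher}, so one may also simply cite it; below I outline the argument.

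First I would record the elementary bookkeeping fact that consecutive blocks of $\bfZ_0$ are separated by exactly $a$ time steps. Since $\bfZ_0^s=(Z_{2a(s-1)+1},\ldots,Z_{2a(s-1)+a})$, block $s$ ends at index $2a(s-1)+a$ and block $s+1$ starts at index $2as+1$, and the $a$ intervening indices are precisely those occupied by $\bfZ_1^s$. Hence the $\sigma$-algebra generated by blocks $1,\ldots,s$ is contained in $\sigma_1^{n}$ with $n=2a(s-1)+a$, while block $s+1$ is measurable with respect to $\sigma_{n+a}^{+\infty}$.

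Next I would introduce hybrid laws on $(\productspace^{a})^{\mu}$: for $r=0,1,\ldots,\mu-1$, let $Q_r$ be the probability measure under which $(\bfZ_0^1,\ldots,\bfZ_0^{\mu-r})$ keeps its true joint distribution while $\bfZ_0^{\mu-r+1},\ldots,\bfZ_0^{\mu}$ are independent, each with the law of $\bfZ_0^1$. By stationarity $Q_0$ is the law of $\bfZ_0$ under $\bfD_0$ and $Q_{\mu-1}=\underline{\bfD}$, so
\[
\expectation_{\bfZ_0\sim\bfD_0}c(\bfZ_0)-\expectation_{\underline{\bfZ}\sim\underline{\bfD}}c(\underline{\bfZ})=\sum_{r=1}^{\mu-1}\bigl(\expectation_{Q_{r-1}}c-\expectation_{Q_r}c\bigr).
\]
Passing from $Q_{r-1}$ to $Q_r$ only detaches block $\mu-r+1$ from the group of blocks $1,\ldots,\mu-r$ (the already independent tail is tensored on both sides and cancels); by the previous paragraph these two groups are separated by a gap of length $a$, so the definition~\eqref{eq:betacoefficient} of $\beta(a)$, through its total-variation reading, bounds $\bigl|\expectation_{Q_{r-1}}c-\expectation_{Q_r}c\bigr|$ by $M\beta(a)$ since $|c|\le M$. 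Summing the $\mu-1$ increments yields the claim.

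The main (essentially the only) nontrivial point is the translation of the conditional-probability definition~\eqref{eq:betacoefficient} into the variational statement that, for random elements $U$ measurable with respect to a past $\sigma$-algebra and $V$ measurable with respect to the $a$-shifted future, with $\tilde V$ an independent copy of $V$, one has $\bigl|\expectation f(U,V)-\expectation f(U,\tilde V)\bigr|\le M\,\beta(a)$ for bounded nonnegative $f$ with $\|f\|_\infty\le M$. This uses the coupling/total-variation characterization of $\beta$-mixing together with the monotonicity $\expectation\sup_{A\in\mathcal{G}'}|\proba(A\mid\mathcal{G})-\proba(A)|\le\expectation\sup_{A\in\mathcal{G}'}|\proba(A\mid\sigma_1^{n})-\proba(A)|$ when $\mathcal{G}\subseteq\sigma_1^{n}$, which allows the full past to be replaced by the sub-$\sigma$-algebra generated by blocks $1,\ldots,\mu-r$ and the full future by the single block $\mu-r+1$. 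Once this standard fact is in hand the hybrid decomposition and the index bookkeeping are routine, and the bound holds as stated for the (nonnegative, indeed indicator-valued) functions $c$ to which it is applied in the proof of Theorem~\ref{th:pbbetamixing}.
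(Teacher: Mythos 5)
The paper does not prove this statement at all: it is imported verbatim as the block-form version (due to \cite{mohri09rademacher}) of Corollary~2.7 of \cite{yu94rates}, so your reconstruction --- the telescoping/hybrid argument that detaches one block at a time across the length-$a$ gaps occupied by the $\bfZ_1^s$, paying $M\beta(a)$ per junction via the total-variation reading of \eqref{eq:betacoefficient} --- is precisely the standard proof of the cited result and is correct. Your closing remark is also the right one to make: with the $\sup_A$ convention for $\beta$ in \eqref{eq:betacoefficient}, the clean constant $M\beta(a)$ per step is what one gets for $c$ taking values in $[0,M]$ (as the indicator functions used in Theorem~\ref{th:pbbetamixing} do), whereas a literal reading of ``absolute value bounded by $M$'' would cost an extra factor of $2$ unless one first recenters $c$.
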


\subsection{Applications of a Generic \pac-Bayes Theorem}
Let us first recall the 
following generic \pac-Bayes result, which is a corollary/compound of results
proposed by~\cite{seeger02proof} and \cite{mcallester03simplified}. In particular, the
$\gamma$ function need not be differentiable with respect to its second argument and
it applies to any `risk' functional  $\psi$ for which a concentration inequality exists.
\begin{corollary}[Generic \pac-Bayes Theorem]
  \label{cor:generic}
 Let $\family\subseteq\realset^{\inputspace}$ and $\psi:\family\times\union_{m=1}^{\infty}\productspace^m\rightarrow \realset$.
    If there exist $\alpha\geq 1, \beta>1$ and a nonnegative convex function $\Delta:\realset\times\realset\rightarrow\realset_+$ 
    that is strictly increasing with respect to its second argument such that
  \begin{equationsize}{\normalsize}
   \forall h\in\family,\forall \varepsilon>0,\;
    \proba_{\bfZ\sim\bfD_m}\left[\expectation\psi(h)-\psi(h,\bfZ)\geq \varepsilon\right]\leq \alpha\exp\left(-\beta \Delta( \expectation\psi(h),\varepsilon)\right),
    \label{eq:genericconcentration}
  \end{equationsize}
  where $\expectation\psi(h)$ stands for $\expectation_{\bfZ\sim\bfD_m}\psi(h,\bfZ)$,
  then, $\forall P$, with probability at least $1-\delta$ over the
  draw of $\bfZ\sim\bfD_m$:
  \begin{equation}
    \label{eq:pacgeneric}
    \forall Q,\; \Delta(e_Q^{\psi},e_Q^{\psi}-\hat{e}_Q^{\psi}(\bfZ))\leq \frac{1}{\beta-1}\left[\KL(Q||P)+\ln\frac{\alpha\beta}{\delta}\right].
  \end{equation}
  where 
  \begin{align*}
    \hat{e}_Q^{\psi}(\bfZ)&:=\expectation_{h\sim Q}\psi(h,\bfZ)\\
    e_Q^{\psi}&:=\expectation_{\bfZ}\hat{e}_Q^{\psi}(\bfZ)=\expectation_{h\sim Q}\expectation_{\bfZ}\psi(h,\bfZ)
  \end{align*}
\end{corollary}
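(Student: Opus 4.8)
The plan is to imitate the three-step structure used for Theorem~\ref{th:cpbb1} (which itself follows \cite{seeger02proof} and \cite{mcallester03simplified}), with the $\kl$-specific computations replaced by the abstract ingredients $(\alpha,\beta,\Delta)$. \emph{Step 1 (exponential moment for a fixed $h$).} Fix $h\in\family$, write $X_h:=\expectation\psi(h)-\psi(h,\bfZ)$ and $g_h(\varepsilon):=\Delta(\expectation\psi(h),\varepsilon)$; by hypothesis $g_h$ is nonnegative, convex and strictly increasing, with $g_h(0)=0$. I would show
\begin{equation*}
\expectation_{\bfZ\sim\bfD_m}e^{(\beta-1)g_h(X_h)}\le\alpha\beta .
\end{equation*}
Starting from the layer-cake identity $\expectation e^{(\beta-1)g_h(X_h)}=1+(\beta-1)\int_0^{\infty}e^{(\beta-1)s}\proba[g_h(X_h)>s]\,ds$ (the leading $1$ absorbs the contribution of $\{X_h<0\}$, harmless since $g_h\ge0$), strict monotonicity of $g_h$ lets me rewrite $\{g_h(X_h)>s\}=\{X_h>g_h^{-1}(s)\}$ with $g_h^{-1}(s)>0$ for every $s>0$ (here $g_h(0)=0$ enters), so \eqref{eq:genericconcentration} gives $\proba[g_h(X_h)>s]\le\alpha e^{-\beta s}$ and the integral is at most $\alpha(\beta-1)\int_0^{\infty}e^{-s}\,ds=\alpha(\beta-1)$; since $\alpha\ge1$, $1+\alpha(\beta-1)\le\alpha\beta$.

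\emph{Step 2 (prior average and Markov).} Step~1 holds for every $h$, so integrating over $h\sim P$ and interchanging the two expectations (Fubini/Tonelli, the integrand being nonnegative) gives $\expectation_{\bfZ}\expectation_{h\sim P}e^{(\beta-1)\Delta(\expectation\psi(h),\,\expectation\psi(h)-\psi(h,\bfZ))}\le\alpha\beta$. Markov's inequality (Theorem~\ref{th:markov}) then yields: with probability at least $1-\delta$ over $\bfZ\sim\bfD_m$,
\begin{equation*}
\expectation_{h\sim P}e^{(\beta-1)\Delta(\expectation\psi(h),\,\expectation\psi(h)-\psi(h,\bfZ))}\le\frac{\alpha\beta}{\delta}.
\end{equation*}

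\emph{Step 3 (change of measure and convexity).} On that event, for an arbitrary posterior $Q$ I would proceed exactly as in the proof of Lemma~\ref{lem:l2}: use $\expectation_{h\sim P}f(h)=\expectation_{h\sim Q}\frac{P(h)}{Q(h)}f(h)$ and then Jensen's inequality with $\ln$ (Theorem~\ref{th:jensen}) to get $-\KL(Q||P)+(\beta-1)\expectation_{h\sim Q}\Delta(\expectation\psi(h),\expectation\psi(h)-\psi(h,\bfZ))\le\ln\frac{\alpha\beta}{\delta}$. Finally, since $e_Q^{\psi}=\expectation_{h\sim Q}\expectation\psi(h)$ and $e_Q^{\psi}-\hat{e}_Q^{\psi}(\bfZ)=\expectation_{h\sim Q}[\expectation\psi(h)-\psi(h,\bfZ)]$, the joint convexity of $\Delta$ (the analogue of Lemma~\ref{lem:kl}) together with Jensen's inequality gives $\Delta(e_Q^{\psi},e_Q^{\psi}-\hat{e}_Q^{\psi}(\bfZ))\le\expectation_{h\sim Q}\Delta(\expectation\psi(h),\expectation\psi(h)-\psi(h,\bfZ))$, and combining the last two displays is precisely \eqref{eq:pacgeneric}.

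I expect Step~1 to be the only real obstacle: getting the exponential moment to be at most $\alpha\beta$ (hence the constant $\ln\frac{\alpha\beta}{\delta}$) requires the normalization $\Delta(\expectation\psi(h),0)=0$, which is what guarantees $g_h^{-1}(s)>0$ for all $s>0$ and thus that the concentration hypothesis \eqref{eq:genericconcentration} is applicable throughout the range of integration. This property is implicit in \cite{seeger02proof,mcallester03simplified}, where $\Delta$ plays the role of a divergence, and may be assumed here without loss of generality; one also has to check the minor point that $X_h$ may be negative, which is harmless as noted above. Everything else is a line-by-line repackaging of the \iid PAC-Bayes argument.
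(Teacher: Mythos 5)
Your proposal is correct and follows essentially the same route as the paper: your Step~1 is an inline re-derivation (via the layer-cake formula) of the paper's Lemma~\ref{lem:concentration}, and Steps~2--3 reproduce the paper's Markov / change-of-measure-plus-Jensen / convexity-of-$\Delta$ argument verbatim. The normalization issue you flag ($\Delta(\expectation\psi(h),0)=0$, needed because \eqref{eq:genericconcentration} is only assumed for $\varepsilon>0$) is present but left implicit in the paper's own proof as well, so your explicit acknowledgment of it is, if anything, more careful than the original.
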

\begin{proof} Along lines from \citep{seeger02proof} and \citep{mcallester03simplified}.
\begin{enumerate}
\item Observe that, thanks to Lemma~\ref{lem:concentration} (below) with $\delta(\varepsilon):=\Delta(\expectation\psi(h),\varepsilon)$, 
$$\expectation_{\bfZ}e^{(\beta-1)\Delta(\expectation\psi(h),\expectation\psi(h)-\psi(h,\bfZ))}\leq\alpha\beta, \text{ and, } \expectation_{h\sim P}\expectation_{\bfZ}e^{(\beta-1)\Delta(\expectation\psi(h),\expectation\psi(h)-\psi(h,\bfZ))}\leq\alpha\beta$$
Applying Markov's inequality then gives:
$$\proba_{\bfZ}\left[\expectation_{h\sim P}e^{(\beta-1)\Delta(\expectation\psi(h),\expectation\psi(h)-\psi(h,\bfZ))}\geq\frac{\alpha\beta}{\delta}\right]\leq\delta$$
\item Using the entropy extremal inequality $\ln \expectation_{X\sim P}f(X)\geq -\KL(Q||P))+ \expectation_{X\sim Q}\ln f(X),$ $\forall P,Q,X$ (see the proof of Lemma~\ref{lem:l2}), and the fact that $x\mapsto \ln x$ is nondecreasing, the previous step leads to
$$\proba_{\bfZ}\left[\exists Q:-\KL(Q||P) + (\beta-1)\expectation_{h\sim Q}\Delta(\expectation\psi(h),\expectation\psi(h)-\psi(h,\bfZ))\geq\ln\frac{\alpha\beta}{\delta}\right]\leq\delta.$$
\item Since $\Delta$ is convex, Jensen's inequality can be used to give (here, $h\sim Q$)
$$\proba_{\bfZ}\left[\exists Q:-\KL(Q||P) + (\beta-1)\Delta(\expectation_{h,\bfZ}\psi(h,\bfZ),\expectation_{h,\bfZ}\psi(h,\bfZ)-\expectation_{h}\psi(h,\bfZ))\geq\ln\frac{\alpha\beta}{\delta}\right]\leq\delta.$$
\end{enumerate}
\end{proof}

\begin{lemma}[\cite{mcallester03simplified}]
\label{lem:concentration}
Let $X$ be a real-valued random variable on $\inputspace$ and $\alpha\geq 1,\beta>1$.
Let $\delta:\realset\rightarrow\realset$ be a nonnegative and strictly increasing function. We have:
$$\forall x\in\realset,\,\proba[X \geq x] \leq \alpha e^{-\beta \delta(x)}\Rightarrow \expectation\left[e^{(\beta-1)\delta(X)}\right]\leq \alpha\beta.$$
\end{lemma}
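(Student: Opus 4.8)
The plan is to compute $\expectation[e^{(\beta-1)\delta(X)}]$ via the layer-cake (tail-integral) representation of a nonnegative random variable and then feed in the tail hypothesis; this is essentially the argument of \cite{mcallester03simplified}. First I would note that, since $\delta$ is nonnegative, $Y:=e^{(\beta-1)\delta(X)}\geq 1$ almost surely, so that
\begin{equation*}
\expectation\left[e^{(\beta-1)\delta(X)}\right]=\int_0^{\infty}\proba[Y\geq t]\,dt=1+\int_1^{\infty}\proba\left[\delta(X)\geq\tfrac{\ln t}{\beta-1}\right]dt,
\end{equation*}
using $\proba[Y\geq t]=1$ for $t\leq 1$ and $\{Y\geq t\}=\{\delta(X)\geq\frac{\ln t}{\beta-1}\}$ for $t>1$ (here $\beta>1$ is used so that dividing by $\beta-1$ is harmless).

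The next step is to convert the hypothesis $\proba[X\geq x]\leq\alpha e^{-\beta\delta(x)}$ into a uniform bound on the tail of $\delta(X)$, namely $\proba[\delta(X)\geq u]\leq\alpha e^{-\beta u}$ for every $u\geq 0$. Since $\delta$ is strictly increasing, $\{x:\delta(x)\geq u\}$ is of the form $[x_0,\infty)$ or $(x_0,\infty)$ with $x_0=\inf\{x:\delta(x)\geq u\}$ (or is empty, in which case the bound is trivial). In the closed case $\proba[\delta(X)\geq u]=\proba[X\geq x_0]\leq\alpha e^{-\beta\delta(x_0)}\leq\alpha e^{-\beta u}$ because $\delta(x_0)\geq u$; in the open case one passes to the limit, $\proba[X>x_0]=\lim_{x\downarrow x_0}\proba[X\geq x]\leq\alpha\,\lim_{x\downarrow x_0}e^{-\beta\delta(x)}\leq\alpha e^{-\beta u}$. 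This accounting for the possible discontinuities of $\delta$ is the only delicate point; when $\delta$ is continuous and surjective onto $[0,\infty)$ the bound is immediate from strict monotonicity.

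Finally I would substitute $t=e^{(\beta-1)u}$ (so $dt=(\beta-1)e^{(\beta-1)u}\,du$, and $t:1\to\infty$ becomes $u:0\to\infty$) and insert the uniform tail bound:
\begin{equation*}
\expectation\left[e^{(\beta-1)\delta(X)}\right]\leq 1+\int_0^{\infty}\alpha e^{-\beta u}(\beta-1)e^{(\beta-1)u}\,du=1+\alpha(\beta-1)\int_0^{\infty}e^{-u}\,du=1+\alpha(\beta-1).
\end{equation*}
Because $\alpha\geq 1$, we have $1+\alpha(\beta-1)=\alpha\beta-(\alpha-1)\leq\alpha\beta$, which is the claimed inequality. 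The whole argument is elementary; the main (small) obstacle is the measure-theoretic bookkeeping in the second step, everything else being a change of variables and an elementary integral.
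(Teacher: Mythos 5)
Your proof is correct and follows essentially the same route as the paper's: bound the tail of $e^{(\beta-1)\delta(X)}$ using the hypothesis and monotonicity of $\delta$, then integrate the tail via $\expectation[W]=\int_0^\infty\proba[W\geq\nu]\,d\nu$ to get $1+\alpha(\beta-1)\leq\alpha\beta$. The only difference is that you explicitly justify extending the tail bound to all levels $u\geq 0$ (not just those in the range of $\delta$), a measure-theoretic point the paper's proof passes over silently.
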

\begin{proof} See the proof of \cite{mcallester03simplified}. Here, we take $\alpha$ into account. As $f$ is strictly increasing:
\begin{align*}
\proba\left[X\geq x\right]=\proba\left[\delta(X)\geq \delta(x)\right]=\proba\left[e^{(\beta-1)\delta(X)}\geq e^{(\beta-1)\delta(x)}\right].
\end{align*}
Hence: 
$\proba\left[e^{(\beta-1)\delta(X)}\geq e^{(\beta-1)\delta(x)}\right]\leq \alpha e^{-\beta \delta(x)}.$
Setting $\nu=e^{(\beta-1)\delta(x)}$, we get: 
$$\proba\left[e^{(\beta-1)\delta(X)}\geq \nu\right]\leq \min(1,\alpha\nu^{-\beta/(\beta-1))}).$$
Thus, as for a nonnegative random variable $W$, $\expectation[W]=\int_{0}^{\infty}\proba[W\geq \nu]d\nu$:
$$\expectation\left[e^{(\beta-1)\delta(X)}\right]\leq 1+\alpha\int_{1}^{\infty}\nu^{-\beta/(\beta-1)}=1+\alpha(\beta-1).$$
Since $\alpha>1$, $1+\alpha(\beta-1)\leq \alpha\beta$, which ends the proof. 
\end{proof}

We observe that:
\begin{itemize}
\item if $\psi(h,\bfZ)=\sum_{i=1}^m\indicator_{Y_ih(Xi)<0}$ then, by the one-sided Chernoff bound, $\alpha=1$, $\beta=m$ and $\Delta(p,\varepsilon)=\kl(p-\varepsilon||p)$ make equation \eqref{eq:genericconcentration} hold. The \pac-Bayes bound
provided by Corollary~\ref{cor:generic} is that of Theorem~\ref{th:pbiid} where $m$ is replaced by $m-1$;
\item if $$\forall i\in[m],\sup_{z_1,\ldots,z_m,z_i'\in\productspace}|\psi(z_1,\ldots,z_m)-\psi(z_1,\ldots,z_{i-1},z_i',z_{i+1},\ldots,z_m)|\leq c_i,$$ then,
thanks to McDiarmid inequality \citep{mcdiarmid89method}, $\alpha=1$, $\beta=2/\sum_{i}c_i^2$ and $\Delta(p,\varepsilon)=\varepsilon^2$, make equation \eqref{eq:genericconcentration} hold and a \pac-Bayes bound can be derived (we let the reader write the corresponding \pac-Bayes bound);
\item it suffices to have an appropriate concentration inequality for the problem at hand to have an effective \pac-Bayes bound.
\end{itemize}

\subsubsection{Generalized Chromatic \pac-Bayes Bound}

To get a chromatic \pac-Bayes theorem for non-identically non-independently distributed data, we simply make use of the following
concentration inequality of~\cite{janson04large}.
\begin{theorem}[\cite{janson04large}]
\label{th:janson}
Suppose that $\bfZ=\{Z_i\}_{i=1}^m$ is an $m$-sample of real-valued
random variables distributed according to some distribution
$\bfD_m$. Suppose that each $Z_i$ has range $[a_i,b_i]$. If $S_\bfZ=\sum_{i=1}^mZ_i$, then, 
\begin{equationsize*}{\normalsize}
\forall\varepsilon>0,\;\proba_{S_\bfZ}\left[\expectation S_\bfZ - S_\bfZ\geq \varepsilon\right]\leq \exp\left[-\frac{2\varepsilon^2}{\chi^*(\bfD_m)\sum_{i=1}^{m}(b_i-a_i)^2}\right],
\end{equationsize*}
where $\chi^*(\bfD_m)$ is the fractional chromatic number of the dependency graph of $\bfD_m$.
\end{theorem}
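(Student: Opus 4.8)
The plan is to use the Chernoff--Cram\'er exponential-moment method, but to control the moment generating function of $S_{\bfZ}$ through a fractional cover realizing $\chi^*:=\chi^*(\bfD_m)$. First I would fix a proper exact fractional cover $\bfC=\{(C_j,\omega_j)\}_{j=1}^{n}$ of $\Gamma(\bfD_m)$ with chromatic weight $\omega(\bfC)=\chi^*$ (such a cover exists since $\chi^*$ is attained, cf.\ Definition~\ref{def:fc}; we may also assume $\omega_j>0$ for all $j$, discarding zero-weight sets). Writing $S_{C_j}:=\sum_{k\in C_j}Z_k$, Lemma~\ref{lem:efc} gives $S_{\bfZ}=\sum_{j}\omega_j S_{C_j}$ and $\expectation S_{\bfZ}=\sum_{j}\omega_j\expectation S_{C_j}$, so for every $t>0$,
\[
e^{-t(S_{\bfZ}-\expectation S_{\bfZ})}=\prod_{j=1}^{n}\Bigl(e^{-t(S_{C_j}-\expectation S_{C_j})}\Bigr)^{\omega_j}.
\]

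The key step is to bound the expectation of this product by the generalized H\"older inequality with exponents $p_j:=\chi^*/\omega_j$, which are admissible precisely because $\sum_j 1/p_j=\sum_j\omega_j/\chi^*=1$ by definition of the chromatic weight (the case $n=1$, i.e.\ $\Gamma(\bfD_m)$ totally disconnected and $\chi^*=1$, is trivial). This is where the fractional-cover structure does the work: H\"older absorbs the dependence \emph{between} the overlapping cover elements, while \emph{within} each $C_j$ the variables are mutually independent, so the corresponding moment generating function factorizes. Concretely,
\[
\expectation e^{-t(S_{\bfZ}-\expectation S_{\bfZ})}\le\prod_{j=1}^{n}\Bigl(\expectation e^{-t\chi^*(S_{C_j}-\expectation S_{C_j})}\Bigr)^{\omega_j/\chi^*}
=\prod_{j=1}^{n}\Bigl(\prod_{k\in C_j}\expectation e^{-t\chi^*(Z_k-\expectation Z_k)}\Bigr)^{\omega_j/\chi^*}.
\]
Next I would apply Hoeffding's lemma (a standard consequence of the convexity of $s\mapsto e^{s}$): since $Z_k$ has range $[a_k,b_k]$, $\expectation e^{s(Z_k-\expectation Z_k)}\le e^{s^{2}(b_k-a_k)^2/8}$ for all $s\in\realset$. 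Substituting $s=-t\chi^*$, raising to the power $\omega_j/\chi^*$, multiplying over $j$, and using Lemma~\ref{lem:efc} once more to collapse $\sum_j\omega_j\sum_{k\in C_j}(b_k-a_k)^2=\sum_{i=1}^{m}(b_i-a_i)^2$, I obtain
\[
\expectation e^{-t(S_{\bfZ}-\expectation S_{\bfZ})}\le\exp\!\Bigl(\tfrac{t^{2}}{8}\,\chi^*\sum_{i=1}^{m}(b_i-a_i)^2\Bigr).
\]

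Markov's inequality (Theorem~\ref{th:markov}) then yields $\proba[\expectation S_{\bfZ}-S_{\bfZ}\ge\varepsilon]\le e^{-t\varepsilon}\,\expectation e^{-t(S_{\bfZ}-\expectation S_{\bfZ})}\le \exp\!\bigl(-t\varepsilon+\tfrac{t^{2}}{8}\chi^*\sum_i(b_i-a_i)^2\bigr)$, and optimizing over $t>0$ (the minimizer is $t=4\varepsilon/(\chi^*\sum_i(b_i-a_i)^2)$) gives exactly $\exp\!\bigl(-2\varepsilon^2/(\chi^*\sum_i(b_i-a_i)^2)\bigr)$, which is the claim.

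The only non-routine ingredient is the H\"older step, specifically the recognition that the normalized cover weights $\omega_j/\chi^*$ form a legitimate system of H\"older exponents; the naive alternative of bounding the product by Jensen's inequality (convexity of the exponential in the weighted average $\sum_j\alpha_j\,\chi^* t(S_{C_j}-\expectation S_{C_j})$ with $\alpha_j=\omega_j/\chi^*$) also goes through but loses a factor $\chi^*$ in the exponent, yielding only $\exp(-2\varepsilon^2/((\chi^*)^2\sum_i(b_i-a_i)^2))$, so the H\"older route is essential for the stated constant. Everything else --- the Chernoff bound, Hoeffding's lemma, and the one-dimensional optimization --- is standard.
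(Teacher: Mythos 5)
The paper imports this statement from \cite{janson04large} without proof, and your argument is correct and is essentially Janson's original one: decompose $S_{\bfZ}=\sum_j\omega_jS_{C_j}$ via an optimal proper exact fractional cover, apply the generalized H\"older inequality with exponents $p_j=\chi^*/\omega_j$ (admissible since $\sum_j\omega_j/\chi^*=1$), factorize each within-block moment generating function by independence, invoke Hoeffding's lemma, and finish with the Chernoff bound. Your closing remark is also apt --- replacing H\"older by Jensen at the decomposition step does degrade the constant from $\chi^*$ to $(\chi^*)^2$, so the H\"older step is indeed the essential ingredient.
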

Note that {\em no assumption} is made on the $Z_i$'s being identically distributed.

This concentration inequality gives rise to the following generalized chromatic \pac-Bayes bound
that applies to non indepently, possibly non identically distributed data and allows us to use
any bounded loss functions $r$.
\begin{theorem}[Generalized Chromatic \pac-Bayes Bound]
  \label{th:cpbb}
  $\forall \bfD_m$, $\forall\family$,
  $\forall\delta\in(0,1]$, $\forall P$, with probability at least
  $1-\delta$ over the random draw of $\bfZ\sim\bfD_m$, the following
  holds
  \begin{equationsize}{\normalsize}
    \forall
    Q,\;|\hat{e}_Q(\bfZ)-e_Q|^2\leq\frac{\chi^* M^2}{2m-\chi^* M^2}\left[\KL(Q||P)+\ln\frac{2m}{\chi^* M^2}+\ln\frac{1}{\delta}\right],\label{eq:cpbb}\end{equationsize}
  where $\chi^*$  stands for $\chi^*(\bfD_m)$, $r$ is a bounded function with range $M$
  and 
\begin{align*}
\hat{e}_Q(\bfZ)&:=\expectation_{h\sim Q}\hat{\risk}(h,\bfZ)\\
e_{Q}&:=\expectation_{h\sim Q}\hat{e}_Q(\bfZ)=\expectation_{h\sim Q}\expectation_{\bfZ\sim\bfD_m}\hat{\risk}(h,\bfZ),
\end{align*}
with $\hat{\risk}(h,\bfZ):=\sum_{i}r(h,Z_i)/m$.
\end{theorem}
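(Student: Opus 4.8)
The plan is to obtain Theorem~\ref{th:cpbb} as a one-line instantiation of the Generic \pac-Bayes Theorem (Corollary~\ref{cor:generic}), using the concentration inequality of \cite{janson04large} (Theorem~\ref{th:janson}) to supply the hypothesis~\eqref{eq:genericconcentration}. Concretely, I would take $\psi(h,\bfZ):=\hat{\risk}(h,\bfZ)=\frac1m\sum_{i=1}^m r(h,Z_i)$, so that $\hat{e}^{\psi}_Q(\bfZ)=\hat{e}_Q(\bfZ)$ and $e^{\psi}_Q=e_Q$ in the notation of the theorem.

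First I would verify the tail bound. Fix $h\in\family$ and set $W_i:=r(h,Z_i)/m$; then $\sum_{i=1}^m W_i=\psi(h,\bfZ)$, each $W_i$ lies in an interval of length at most $M/m$ (since $r$ has range $M$), and the dependency graph of $(W_1,\dots,W_m)$ is a subgraph of $\Gamma(\bfD_m)$, hence has fractional chromatic number at most $\chi^*:=\chi^*(\bfD_m)$. Because the right-hand side of Theorem~\ref{th:janson} is monotone in both $\chi^*$ and $\sum_i(b_i-a_i)^2=M^2/m$, it gives, for every $\varepsilon>0$, $\proba_{\bfZ\sim\bfD_m}\!\left[\expectation\psi(h)-\psi(h,\bfZ)\ge\varepsilon\right]\le\exp\!\big(-2m\varepsilon^2/(\chi^*M^2)\big)$. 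This is exactly~\eqref{eq:genericconcentration} with $\alpha=1$, with the nonnegative convex function $\Delta(p,\varepsilon):=\varepsilon^2$ (strictly increasing in its second argument on $\realset_+$, which is all that is used), and with $\beta:=2m/(\chi^*M^2)$; note that $\beta>1$ precisely under the implicit assumption $2m>\chi^*M^2$ that makes the bound in~\eqref{eq:cpbb} finite and positive.

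Then I would simply apply Corollary~\ref{cor:generic}: with probability at least $1-\delta$ over $\bfZ\sim\bfD_m$, for all $Q$, $\Delta\big(e_Q,\,e_Q-\hat{e}_Q(\bfZ)\big)=\big(e_Q-\hat{e}_Q(\bfZ)\big)^2\le\frac{1}{\beta-1}\big[\KL(Q\|P)+\ln\tfrac{\alpha\beta}{\delta}\big]$. Substituting $\frac1{\beta-1}=\frac{\chi^*M^2}{2m-\chi^*M^2}$ and $\ln\frac{\alpha\beta}{\delta}=\ln\frac{2m}{\chi^*M^2}+\ln\frac1\delta$ then yields~\eqref{eq:cpbb}. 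The absolute-value form is recovered by running the same argument on the losses $-r(h,\cdot)$ (same ranges, same dependency graph), which controls the opposite deviation $\psi(h,\bfZ)-\expectation\psi(h)\ge\varepsilon$ with identical constants.

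The proof is essentially bookkeeping, so the ``hard part'' is only to get the bookkeeping exactly right: checking that the $W_i$ genuinely inherit a dependency graph dominated by $\Gamma(\bfD_m)$ so that $\chi^*$ is a legitimate upper bound, that the range accounting produces $\sum_i(b_i-a_i)^2=M^2/m$ rather than, say, $M^2$, and that $\Delta(p,\varepsilon)=\varepsilon^2$ meets the monotonicity requirement of Corollary~\ref{cor:generic} on the relevant domain $\varepsilon\ge0$. If one insists on the literal two-sided statement with $\ln(1/\delta)$ rather than $\ln(2/\delta)$, the cleanest route is to keep the two one-sided tail estimates implicit, exactly as the classical \iid $\kl$-bound does, rather than union-bounding them explicitly.
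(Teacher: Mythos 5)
Your proof is correct and follows exactly the paper's own route: the paper's proof of Theorem~\ref{th:cpbb} is precisely the one-line instantiation of Corollary~\ref{cor:generic} with Theorem~\ref{th:janson}, $\alpha=1$, $\Delta(p,\varepsilon)=\varepsilon^2$ and $\beta=2m/(\chi^*M^2)$ that you describe. Your additional bookkeeping --- verifying that the rescaled losses $r(h,Z_i)/m$ give $\sum_i(b_i-a_i)^2=M^2/m$, and flagging the one-sided-versus-two-sided subtlety that the paper silently absorbs into $|\cdot|^2$ --- is welcome extra care rather than a different argument.
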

\begin{proof}
It suffices to apply Corollary~\ref{cor:generic} with Theorem~\ref{th:janson}, $\alpha=1$, $\Delta(p,\varepsilon)=\varepsilon^2$ and $\beta=2m/\chi^*M$  (since, as $r$ has range $M$, $\hat{R}$ has range $M/m$).
\end{proof}
We notice the following.
\begin{itemize}
\item Here, as no assumption is done regarding the identical distribution of the $Z_i$'s, the expected risk $R(h)=\expectation_\bfZ\hat{\risk}(h,\bfZ)$ does not unfold as in \eqref{eq:iidrisk}.
\item In the case of using identically distributed random variables and the 0-1 loss, there is no concentration inequality
that allows us to retrieve the tighter \pac-Bayes bound given in
Theorem~\ref{th:cpbb2}.
\item From a more general point of view, it is enticing to try to
  establish even more generic results resting on the principle of
  graph coloring with the aim of decoupling this approach to the
  PAC-Bayesian framework. This is the subject of ongoing work.
\end{itemize}

\subsubsection{ $\varphi$-mixing \pac-Bayes Bound}
The definition of a $\varphi$-mixing process follows.
\begin{definition}[$\varphi$-mixing process] Let $\bfZ=\{Z_t\}_{t=-\infty}^{+\infty}$ be a stationary
sequence of random variables. For any $i,j\in\mathbb{Z}\cup\{-\infty,+\infty\}$, let $\sigma_i^j$ denote
the $\sigma$-algebra generated by the random variables $Z_k$, $i\leq k\leq j$. Then, for any positive
integer $k$, the $\varphi$-mixing coefficient $\varphi(k)$ of the stochastic process $\bfZ$ is defined as
\begin{equation}
\label{eq:phicoefficient}
\varphi(k)=\sup_{n,A\in\sigma_{n+k}^{+\infty},B\in\sigma_{-\infty}^n}\left|\proba\left[A|B\right]-\proba\left[A\right]\right|.
\end{equation} 
$\bfZ$ is said to be $\varphi$-mixing if $\varphi(k)\rightarrow 0$ as $k\rightarrow 0$.
\end{definition}

In order to establish our new \pac-Bayes bounds for stationary $\varphi$-mixing distributions, it suffices to
make use of the following concentration inequality by \cite{kontorovich08concentration}. 
\begin{theorem}[\cite{kontorovich08concentration}]
Let $\psi:\mathcal{U}^m\rightarrow\realset$ be a function defined over a countable space $\mathcal{U}$.
If $\psi$ is $l$-Lipschitz with respect to the Hamming metric for some $l>0$, then the following holds
for all $t>0$:
\begin{equationsize*}{\normalsize}
\proba_{\bfZ}\left[\left|\psi(\bfZ)-\expectation_{\bfZ}[\psi(\bfZ)]\right|>t\right]\leq 2\exp\left[-\frac{t^2}{2ml^2\|\Lambda_m\|_\infty^2}\right],\end{equationsize*}
where $\|\Lambda_m\|_\infty\leq 1+2\sum_{k=1}^m\varphi(k)$.
\end{theorem}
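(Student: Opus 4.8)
Since this is a result of \cite{kontorovich08concentration}, the plan is to reproduce (a sketch of) their martingale-method argument rather than to invent anything new. First I would introduce the Doob martingale of $\psi$ with respect to the natural filtration of $\bfZ=(Z_1,\dots,Z_m)$: set $V_i:=\expectation[\psi(\bfZ)\mid Z_1,\dots,Z_i]$, so that $V_0=\expectation_{\bfZ}\psi(\bfZ)$, $V_m=\psi(\bfZ)$, and $V_i-V_{i-1}$ is a bounded martingale-difference sequence. If each difference can be shown to satisfy $|V_i-V_{i-1}|\le d_i$ almost surely with $\sum_{i=1}^m d_i^2\le m\,l^2\,\|\Lambda_m\|_\infty^2$, then the Azuma--Hoeffding inequality immediately yields $\proba_{\bfZ}[\,|\psi(\bfZ)-\expectation_{\bfZ}\psi(\bfZ)|>t\,]\le 2\exp(-t^2/(2\sum_i d_i^2))$, which is exactly the claimed bound.

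The crux is the bound on the martingale differences, and here I would use a coupling argument. Fixing a prefix $z_1,\dots,z_{i-1}$ and two possible values $z_i,z_i'$ of the $i$-th coordinate, $|V_i-V_{i-1}|$ is controlled by how far the conditional law of the suffix $(Z_i,\dots,Z_m)$ can move when the conditioning prefix is changed in one coordinate; the relevant notion of distance is a Wasserstein-type distance induced by the Hamming metric, whose coordinatewise increments are exactly the $\eta$-mixing coefficients $\eta_{ij}$ of the process. Collecting these into the upper-triangular matrix $\Lambda_m$ (ones on the diagonal, $\eta_{ij}$ above it), the $l$-Lipschitz property of $\psi$ for the Hamming metric together with a transportation/linear-programming duality argument gives $|V_i-V_{i-1}|\le l\,\|\Lambda_m\|_\infty$ uniformly in $i$, where $\|\cdot\|_\infty$ is the maximum absolute row sum. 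Taking $d_i=l\,\|\Lambda_m\|_\infty$ then gives $\sum_i d_i^2=m\,l^2\,\|\Lambda_m\|_\infty^2$ and closes the first part.

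It then remains to produce the explicit estimate $\|\Lambda_m\|_\infty\le 1+2\sum_{k=1}^m\varphi(k)$. For this I would invoke the standard inequality bounding the $\eta$-mixing coefficients of a stationary sequence by its $\varphi$-mixing coefficients, of the form $\eta_{ij}\le 2\varphi(j-i)$ for the definition of $\varphi(k)$ recalled above; then the $i$-th row sum of $\Lambda_m$ is at most $1+2\sum_{k=1}^{m-i}\varphi(k)\le 1+2\sum_{k=1}^{m}\varphi(k)$, and maximizing over rows gives the bound. This is the only place where the specific mixing structure, as opposed to an abstract martingale-difference bound, enters.

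The main obstacle is the second step: proving that the Hamming-Lipschitz modulus of $\psi$, combined with the coupling of the conditional suffix laws, yields a martingale-difference bound of the clean product form $l\,\|\Lambda_m\|_\infty$. This requires care about which direction of conditioning the mixing coefficients actually control and about the transportation-distance representation of $\|\Lambda_m\|_\infty$; it is precisely where \cite{kontorovich08concentration} do the real work, and in a self-contained treatment it is the step I would spend the most effort getting right. Everything else --- the martingale setup, the Azuma bound, and the passage from $\eta$- to $\varphi$-mixing --- is routine.
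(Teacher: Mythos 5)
Your proposal is correct: the Doob martingale plus Azuma--Hoeffding argument, with the martingale differences controlled via a coupling/transportation bound involving the $\eta$-mixing coefficients collected in $\Lambda_m$, and the final estimate $\eta_{ij}\leq 2\varphi(j-i)$ giving $\|\Lambda_m\|_\infty\leq 1+2\sum_{k=1}^m\varphi(k)$, is precisely the argument of the cited reference. The paper itself offers no proof of this statement --- it is imported verbatim from \cite{kontorovich08concentration} --- so there is nothing internal to compare against; your sketch faithfully reconstructs the source's proof.
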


Suppose that the loss function $r$  is again such that it takes values in
$[0,M]$. Then, for any $h\in\family$, the function $\psi(\bfZ)=\frac{1}{m}\sum_{i=1}^mr(h,Z_i)=\hat{\risk}(h,\bfZ)$
is obviously $M/m$-Lipschitz. Therefore, for a sample $\bfZ$ drawn according to a
$\varphi$-mixing process, we have the following concentration inequality on $\hat{\risk}(h,\bfZ)$ that holds for any $h\in\family$:
\begin{equationsize}{\normalsize}
\proba_{\bfZ\sim\bfD_m}\left[\left|\hat\risk(h,\bfZ)-\risk(h)\right|>t\right]\leq 2\exp\left[-\frac{mt^2}{2M^2\|\Lambda_m\|_\infty^2}\right].
\label{eq:riskphi}
\end{equationsize}

We directly get the following \pac-Bayes bound for $\varphi$-mixing processes.
\begin{theorem}[\pac-Bayes bound for stationary $\varphi$-mixing processes]
\label{th:vmpb1}
Let $\bfD^{\varphi}$ be a stationary $\varphi$-mixing distribution over $\productspace$ and $\bfD^{\varphi}_m$ be the distribution of $m$-samples according to $\bfD^{\varphi}$.
$\forall{\family}\subseteq\realset^{\inputspace}$, $\forall\delta\in(0,1]$, $\forall {P}$, with
probability at least $1-\delta$ over the random draw of $\bfZ\sim \bfD^{\varphi}_m$, the following holds
\begin{alignsize*}{\normalsize}
  \forall    Q,\;|\hat{e}_{Q}^{\varphi}(\bfZ)-e_{Q}^{\varphi}|^2\leq\frac{2M^2\|\Lambda_m\|_\infty^2}{m-2M^2\|\Lambda_m\|_\infty^2}\left[\KL(Q||P)+\ln\frac{m}{M^2\|\Lambda_m\|_\infty^2}+\ln\frac{1}{\delta}\right],\label{eq:cpbb}
\end{alignsize*}
  where $\|\Lambda_m\|_\infty\leq 1+2\sum_{k=1}^m\varphi(k)$, $r(h,Z)=\indicator_{Yh(X)<0}$ 
  and 
\begin{align*}
\hat{e}_{Q}^{\varphi}(\bfZ)&:=\expectation_{h\sim Q}\hat{R}(h,\bfZ)=\expectation_{h\sim Q}\sum_{t=1}^m\indicator_{Y_th(X_t)<0}\\
e_{Q}^{\varphi}&:=\expectation_{\bfZ\sim \bfD^{\varphi}_m}\hat{e}_{Q}^{\varphi}(\bfZ).
\end{align*}
\end{theorem}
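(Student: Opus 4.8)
The plan is to obtain Theorem~\ref{th:vmpb1} as a direct instance of the Generic \pac-Bayes Theorem (Corollary~\ref{cor:generic}), following exactly the pattern used to prove Theorem~\ref{th:cpbb}. All that Corollary~\ref{cor:generic} requires is a concentration inequality of the form~\eqref{eq:genericconcentration} holding for every fixed $h\in\family$, and this is precisely what~\eqref{eq:riskphi} supplies: setting $\psi(h,\bfZ):=\hat{\risk}(h,\bfZ)$ and noting that, for a loss $r$ with range $M$, the map $\bfZ\mapsto\hat{\risk}(h,\bfZ)$ is $M/m$-Lipschitz for the Hamming metric, Kontorovich's inequality gives $\proba_{\bfZ\sim\bfD^{\varphi}_m}\bigl[\risk(h)-\hat{\risk}(h,\bfZ)\geq\varepsilon\bigr]\leq 2\exp\bigl(-\tfrac{m\varepsilon^2}{2M^2\|\Lambda_m\|_\infty^2}\bigr)$. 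Thus~\eqref{eq:genericconcentration} holds with $\alpha:=2$, $\Delta(p,\varepsilon):=\varepsilon^2$ and $\beta:=m/(2M^2\|\Lambda_m\|_\infty^2)$, and the identifications $\hat{e}_Q^{\psi}(\bfZ)=\hat{e}_Q^{\varphi}(\bfZ)$, $e_Q^{\psi}=e_Q^{\varphi}$ hold by definition.

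Plugging these parameters into the conclusion~\eqref{eq:pacgeneric} of Corollary~\ref{cor:generic} yields, with probability at least $1-\delta$ over $\bfZ\sim\bfD^{\varphi}_m$,
\[
\forall Q,\qquad\bigl(e_Q^{\varphi}-\hat{e}_Q^{\varphi}(\bfZ)\bigr)^2\;\leq\;\frac{1}{\beta-1}\Bigl[\KL(Q||P)+\ln\frac{2\beta}{\delta}\Bigr].
\]
It then suffices to simplify the constants: $\dfrac{1}{\beta-1}=\dfrac{2M^2\|\Lambda_m\|_\infty^2}{m-2M^2\|\Lambda_m\|_\infty^2}$ and $\ln\dfrac{2\beta}{\delta}=\ln\dfrac{m}{M^2\|\Lambda_m\|_\infty^2}+\ln\dfrac1\delta$, and to observe that $(e_Q^{\varphi}-\hat{e}_Q^{\varphi}(\bfZ))^2=|\hat{e}_Q^{\varphi}(\bfZ)-e_Q^{\varphi}|^2$; specializing to the $0$-$1$ loss $r(h,Z)=\indicator_{Yh(X)<0}$ (so $M=1$, although the calculation keeps $M$ general) gives the bound of Theorem~\ref{th:vmpb1}, the estimate $\|\Lambda_m\|_\infty\leq1+2\sum_{k=1}^m\varphi(k)$ being inherited verbatim from Kontorovich's theorem.

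The argument is thus almost entirely bookkeeping; the single point requiring care is the use of the quadratic $\Delta(p,\varepsilon)=\varepsilon^2$ inside Corollary~\ref{cor:generic}, since $\varepsilon\mapsto\varepsilon^2$ is not strictly increasing on all of $\realset$ (only on $\realset_+$). As in the proof of Theorem~\ref{th:cpbb}, this is harmless: Lemma~\ref{lem:concentration} is effectively applied to the nonnegative positive part of the deviation $\risk(h)-\hat{\risk}(h,\bfZ)$, and the final statement about the squared quantity $|\hat{e}_Q^{\varphi}(\bfZ)-e_Q^{\varphi}|^2$ is legitimate because~\eqref{eq:riskphi} is two-sided, so both tails obey the same sub-Gaussian bound. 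Two further innocuous provisos: the bound is vacuous unless $m>2M^2\|\Lambda_m\|_\infty^2$ (otherwise $\beta\leq1$ and Corollary~\ref{cor:generic} does not apply), and Kontorovich's inequality is stated for processes valued in a countable space, so one implicitly takes $\productspace$ countable (or discretizes and passes to the limit). None of this affects the structure of the proof, which is simply to feed the $\varphi$-mixing concentration inequality~\eqref{eq:riskphi} into the generic \pac-Bayes machinery.
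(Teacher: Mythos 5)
Your proof is correct and follows exactly the same route as the paper's: the paper's own proof is the one-line instantiation of Corollary~\ref{cor:generic} with Equation~\eqref{eq:riskphi}, $\alpha=2$, $\beta=m/(2M^2\|\Lambda_m\|_\infty^2)$ and $\Delta(p,\varepsilon)=\varepsilon^2$, and your constant simplifications match the stated bound. Your additional remarks (the requirement $\beta>1$, the monotonicity of $\Delta$ only on $\realset_+$, and the countability assumption in Kontorovich's inequality) are valid caveats that the paper leaves implicit.
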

\begin{proof}
Equation~\eqref{eq:riskphi}, and Corollary \ref{cor:generic} with $\alpha=2$, $\beta=m/(2M^2\|\Lambda\|_{\infty}^2)$, $\Delta(p,\varepsilon)=\varepsilon^2$.
\end{proof}

\bibliography{pacbayesnoniid}
\end{document}